\newtheorem{lemma}{Lemma}
\newtheorem{theorem}{Theorem}
\newtheorem{corollary}{Corollary}
\newcommand{\app}[1]{Appendix~\ref{#1}}
\newcommand{\cX}{\mathcal{X}\xspace}
\newcommand{\cA}{\mathcal{A}\xspace}
\newcommand{\cB}{\mathcal{B}\xspace}
\newcommand{\R}{\ensuremath{\mathbb{R}}\xspace}
\newcommand{\C}{\ensuremath{C}\xspace}
\newcommand{\A}{\ensuremath{\cA}\xspace}
\newcommand{\D}{\ensuremath{\mathcal{D}}\xspace}
\newcommand{\B}{\ensuremath{\cB}\xspace}
\newcommand{\regret}[1]{\textrm{Regret}(#1)}
\newcommand{\reg}[1]{R(#1)}
\newcommand{\pp}[2]{\pi^{#1}_{#2}}
\newcommand{\ppc}[3]{\pp{#1}{#2|#3}}
\newcommand{\rew}[2]{r^{#1}_{#2}}
\newcommand{\Rew}[2]{\D^{#1}_{#2}}
\newcommand{\bt}{\ensuremath{\bot}\xspace}
\newcommand{\ist}{\ensuremath{i_{*}^t}}
\newcommand{\mean}[2]{\ensuremath{\hat{\mu}_{#2}^{#1}}}
\newcommand{\up}[2]{\ensuremath{u_{#2}^{#1}}}
\newcommand{\low}[2]{\ensuremath{\ell_{#2}^{#1}}}
\newcommand{\num}[2]{\ensuremath{n_{#2}^{#1}}}
\newcommand{\interval}[2]{\ensuremath{[\ell_{#2}^{#1}, u_{#2}^{#1}]}}
\newcommand{\unfair}{\textrm{unfair}}
\newcommand{\fairbandits}{\textsc{FairBandits}\xspace}
\newcommand{\kwikfair}{\textsc{KWIKToFair}\xspace}
\newcommand{\fairkwik}{\textsc{FairToKWIK}\xspace}
\theoremstyle{definition}
\newtheorem{definition}{Definition}
\theoremstyle{remark}
\newtheorem{remark}{Remark}
\DeclareMathOperator*{\argmax}{arg\,max}
\newcommand{\E}[1]{\mathbb{E}\left[ #1 \right]}
\newcommand{\Ex}[2]{\mathbb{E}_{#1}\left[ #2 \right]}
\newcommand{\I}[1]{\mathbb{I}\left[ #1 \right]}
\newcommand{\pr}[1]{\mathbb{P}\left[ #1 \right]}
\renewcommand{\Pr}[2]{\mathbb{P}_{#1}\left[ #2 \right]}
\title{Fairness in Learning: Classic and Contextual Bandits\footnote{A condensed version of this work appears in the 30th Annual Conference on Neural Information Processing Systems (NIPS), 2016.}}
\author{Matthew Joseph \and Michael Kearns \and Jamie Morgenstern \and Aaron Roth\thanks{Department of Computer and Information Sciences, University of Pennsylvania. \texttt{\{majos,mkearns,jamiemor,aaroth\}@cis.upenn.edu}. AR is supported in part by an NSF CAREER award, a Sloan Foundation Fellowship, and a Google Faculty Research Award.}}
\begin{document}

\maketitle

\begin{abstract}
  We introduce the study of fairness in multi-armed bandit
  problems. Our fairness definition can be interpreted as demanding
  that given a pool of applicants (say, for college admission or
  mortgages), a worse applicant is never favored over a better one,
  despite a learning algorithm's uncertainty over the true payoffs. We
  prove results of two types:
  
  First, in the important special case of the classic stochastic
  bandits problem (i.e. in which there are no contexts), we provide a
  provably fair algorithm based on “chained” confidence intervals, and
  prove a cumulative regret bound with a cubic dependence on the
  number of arms. We further show that any fair algorithm must have
  such a dependence. When combined with regret bounds for standard
  non-fair algorithms such as UCB, this proves a strong separation
  between fair and unfair learning, which extends to the general
  contextual case.
  
  In the general contextual case, we prove a tight connection between fairness
  and the KWIK (Knows What It Knows) learning model: a KWIK algorithm
  for a class of functions can be transformed into a provably fair
  contextual bandit algorithm, and conversely any fair contextual
  bandit algorithm can be transformed into a KWIK learning
  algorithm. This tight connection allows us to provide a provably
  fair algorithm for the linear contextual bandit problem with a
  polynomial dependence on the dimension, and to show (for a different
  class of functions) a worst-case exponential gap in regret between
  fair and non-fair learning algorithms.\end{abstract}

\newpage
\tableofcontents
\newpage

\section{Introduction}

Automated techniques from statistics and machine learning are
increasingly being used to make decisions that have important
consequences on people's lives, including hiring~\citep{hiring},
lending \citep{lending}, policing~\citep{policing}, and even criminal
sentencing~\citep{sentencing}. These high stakes uses of machine
learning have led to increasing concern in law and policy circles
about the potential for (often opaque) machine learning techniques to
be \emph{discriminatory} or \emph{unfair}
\citep{cary16,solon16}. Moreover, these concerns are not merely
hypothetical: \citet{Sweeney13} observed that contextual ads for
public record services shown in response to Google searches for
stereotypically African American names were more likely to contain
text referring to arrest records, compared to comparable ads shown in
response to searches for stereotypically Caucasian names, which showed
more neutral text. She confirmed that this was not because of stated
preferences of the advertisers, but rather the automated outcome of
Google's targeting algorithms. Despite the recognized importance of
this problem, very little is known about technical solutions to the
problem of ``unfairness'', or the extent to which ``fairness'' is in
conflict with the goals of learning.\footnote{ For example, a 2014
  White House report \citep{bigdata1} notes that ``[t]he increasing
  use of algorithms to make eligibility decisions must be carefully
  monitored for potential discriminatory outcomes for disadvantaged
  groups, even absent discriminatory intent$\ldots$ additional
  research in measuring adverse outcomes due to the use of scores or
  algorithms is needed to understand the impacts these tools are
  having and will have in both the private and public sector as their
  use grows.''  Along the same lines, a 2016 White House
  report~\citep{bigdata3} observes that ``[a]s improvements in the
  uses of big data and machine learning continue, it will remain
  important not to place too much reliance on these new systems
  without questioning and continuously testing the inputs and
  mechanics behind them and the results they produce."  Similarly, in
  a recent speech FTC Commissioner Julie Brill~\citep{bigdata2}
  observed, ``$\ldots$ a lot remains unknown about how big data-driven
  decisions may or may not use factors that are proxies for race, sex,
  or other traits that U.S. laws generally prohibit from being used in
  a wide range of commercial decisions $\ldots$ What can be done to
  make sure these products and services--–and the companies that use
  them – treat consumers fairly and ethically?''}

In this paper, we consider the extent to which a natural fairness
notion is compatible with learning in a general setting (the
\emph{contextual bandit setting}), which can be used to model many of
the applications mentioned above in which machine learning is currently
employed. In this model, the learner is a sequential decision maker,
which must choose at each time step $t$ which decision to make, out of
a finite set of $k$ choices (for example, which of $k$ loan applicants
-- potentially from different populations or racial groups -- to give
a loan to). Before the learner makes its decision at round $t$, it
observes some \emph{context} $x_j^t$ for each choice of arm $j$
($x_j^t$ could, for example, represent the contents of the loan
application of an individual from population $j$ at round $t$). When
the learner chooses arm $j$ at time $t$, it obtains a stochastic
\emph{reward} $r_j^t$ whose expectation is determined by some unknown
function of the context: $\E{r_j^t} = f_j(x_j^t)$. The goal of the
learning algorithm is to maximize its expected reward -- i.e. to
approximate the optimal policy, which at each round, chooses arm
$j$ to maximize $\E{r_j^t}$.
The difficulty in this task stems from
the unknown functions $f_j$ which map contexts to rewards; these
functions must be learned. Despite this, there are many known
algorithms for learning the optimal policy (in the absence of any
fairness constraint).

\subsection{Fairness and Learning}
Our notion of individual fairness is very simple: it states that it is
\emph{unfair} to preferentially choose one individual (e.g. for a
loan, a job, admission to college, etc.) over another if he or she is
not as qualified as the other individual. This definition of fairness
is apt for our setting, since in contextual learning, the quality of
an arm is clear: its expected reward. We view different arms $j$ as
representing different populations (e.g. different ethnic groups,
cultures, or other divisions within society), and view the context
$x_j^t$ at round $t$ as representing information about a particular
individual from that population. Each population has its own
underlying function $f_j$ which maps contexts to expected
payoff\footnote{It is natural that different populations should have
  different underlying functions -- for example, in a college
  admissions setting, the function mapping applications to college
  success probability might weight SAT scores less in a wealthy
  population that employs SAT tutors, and more in a working-class
  population that does not -- see \citet{dwork2012fairness} for more
  discussion of this issue and \citet{bigdata3} for examples.}. At
each time step $t$, the algorithm is asked to choose between specific
members of each population, represented by the contexts $x_j^t$. The
quality of an individual is thus exactly $\E{r_j^t} = f_j(x_j^t)$.
Our fairness condition translates thus: for any pair of arms $j, j'$
at time $t$, if $f_j(x_j^t) \geq f_{j'}(x_{j'}^t)$, then an algorithm
is said to be discriminatory if it preferentially chooses the lower
quality arm $j'$.  Said another way, an algorithm is \emph{fair} if it
guarantees the following: with high probability, over all rounds $t$,
and for all pairs of arms $j, j'$, whenever
$f_j(x_j^t) \geq f_{j'}(x_{j'}^t)$, the algorithm chooses arm $j$ with
probability at least that with which it chooses arm $j'$\footnote{Note
  that the definition does not require \emph{equality of outcomes} on
  a population wide basis, also known as \emph{statistical parity}. If
  some population $j$ is less credit-worthy on average than another
  population $j'$, we do not necessarily say that an algorithm is
  discriminatory if it ends up giving fewer loans to individuals from
  population $j$.  Our notion of discrimination is on an individual
  basis -- it requires that even if population $j$ is less credit
  worthy on average than population $j'$, if it happens that on some
  day, an individual appears from population $j$ who is at least as
  credit worthy as the individual from population $j'$, then the
  algorithm cannot favor the individual from population $j'$.}.

It is worth noting that this definition of fairness (formalized in the
preliminaries) is entirely consistent with the optimal policy, which
can simply choose at each round to play uniformly at random from the
arms $\argmax_{j}\left(\E{r_j^t}\right)$ which maximize the
expected reward. This is because -- it seems -- the goal of fairness
as enunciated above is entirely consistent with the goal of maximizing
expected reward. Indeed, the fairness constraint exactly states that
the algorithm \emph{cannot} favor low reward arms!

Our main conceptual result is that this intuition is incorrect in the
face of unknown reward functions. Even though the constraint of
fairness is consistent with \emph{implementing} the optimal policy, it
is not necessarily consistent with \emph{learning} the optimal
policy. We show that fairness always has a cost, in terms of the
achievable learning rate of the algorithm. For some problems, the cost
is mild, but for others, the cost is large.

\subsection{Our Results}
We divide our results into two parts. First, we study the classic
stochastic multi-armed bandit problem
\citep{LR85,katehakis1995sequential}. In this case, there are no
contexts, and each arm $i$ has a fixed but unknown average reward
$\mu_i$. Note that this is a special case of the contextual bandit
problem in which the contexts are the same every day. In this setting,
our fairness constraint specializes to require that with probability
$1-\delta$, for any pair of arms $i,j$ for which
$\mu_i \geq \mu_j$, at no round $t$ does the algorithm play arm $j$
with probability higher than that with which it plays arm $i$. Note
that even this special case models interesting scenarios from the
point of view of fairness in learning. It models, for example, the
case in which choices are made by a loan officer after applicants have
been categorized into $k$ internally indistinguishable equivalence
classes based on their applications.

Without a fairness constraint, it is known that it is possible to
guarantee non-trivial regret to the optimal policy after only
$T = O(k)$ many rounds~\citep{ACF02}.  In Section
\ref{sec:fairbandits}, we give an algorithm that satisfies our
fairness constraint and is able to guarantee non-trivial regret after
$T = O(k^3)$ rounds. We then show in Section \ref{sec:lower} that it
is not possible to do better -- \emph{any} fair learning algorithm can
be forced to endure constant per-round regret for $T = \Omega(k^3)$
rounds. Thus, we tightly characterize the optimal regret attainable by
fair algorithms in this setting, and formally separate it from the
regret attainable by algorithms absent a fairness constraint.  Note
that this already shows a separation between the best possible
learning rates for contextual bandit learning with and without the
fairness constraint -- the stochastic multi-armed bandit problem is a
special case of every contextual bandit problem, and for general
contextual bandit problems, it is also known how to get non-trivial
regret after only $T = O(k)$ many rounds
~\citep{AHKLLS14,BLLRS11,CLRS11}.

We then move on to the general contextual bandit setting and prove a
broad characterization result, relating fair contextual bandit
learning to \emph{KWIK} learning~\citep{li2011knows}. The KWIK model,
which stands for \emph{Knows What it Knows} and has a close
relationship with reinforcement learning, is a model of sequential
supervised classification in which the learning algorithm must be
confident in its predictions. Informally, a KWIK learning algorithm
receives a sequence of unlabeled examples, whose true labels are
defined by some unknown function in a class $C$.  For each example,
the algorithm may either predict a label, or announce ``I Don't
Know''. The KWIK requirement is that with high probability, for each
example, if the algorithm predicts a label, then its prediction must
be very close to the true label. The quality of a KWIK learning
algorithm is characterized by its ``KWIK bound'', which provides an
upper bound on the maximum number of times the algorithm can be forced
to announce ``I Don't Know''.  For any contextual bandit problem
(defined by the set of functions $C$ from which the payoff functions
$f_j$ may be selected), we show that the optimal learning rate of any
fair algorithm is determined by the best KWIK bound for the class
$C$. We prove this constructively -- we give a reduction showing how
to convert a KWIK learning algorithm into a fair contextual bandit
algorithm in Section \ref{sec:kwiktofair}, and vice versa in Section
\ref{sec:fairtokwik}. Both reductions show that the KWIK bound of the
KWIK algorithm is polynomially related to the regret of the fair
algorithm.

This general connection has 
immediate implications, because it allows us to import known results
for KWIK learning~\citep{li2011knows}. For example, it implies that
some fair contextual bandit problems are \emph{easy}, in that there
are fair algorithms which can obtain non-trivial regret guarantees
after 
polynomially many rounds. This is the case, for example, for the
important linear special case in which the contexts
$x_j^t \in \mathbb{R}^d$ are real valued vectors and the unknown
functions $f_j$ are linear:
$f_j(x_j^t) = \langle \theta^{j}, x_j^t \rangle$\footnote{This
  corresponds to the case in which the probability that an individual
  pays back his or her loan is determined by a standard linear
  regression model.}.  In this case, the KWIK-learnability of noisy
linear regression problems~\citep{strehl2008online,li2011knows}
implies that we can construct a fair contextual bandit algorithm whose
per-round regret is polynomial in $d$. Conversely, it also implies
that some contextual bandit problems which are easy without the
fairness constraint become \emph{hard} once we impose the fairness
constraint, in that any fair algorithm must suffer constant per-round
regret for exponentially many rounds. This is the case, for example,
when the context consists of boolean vectors $x_j^t \in \{0,1\}^d$,
and the unknown functions $f_j:\{0,1\}^d \rightarrow \{0,1\}$ are
\emph{conjunctions} -- the ``and''s of some unknown set of
features\footnote{For example, a conjunction might predict that an
  individual is likely to pay back his loan if all of the following
  conditions are satisfied: he or she has graduated from college, has
  a clean driving history, and has not previously defaulted on any
  loans.}.  The impossibility of non-trivial KWIK-learning of
conjunctions~\citep{li09,li2011knows} implies that no fair learner in
the contextual bandit setting can achieve non-trivial regret before
exponentially many (in $d$) rounds.

\subsection{Other Related Work}
Several papers study the problem of fairness in machine learning. One
line of work aims to give algorithms for batch classification which
achieve \emph{group fairness} otherwise known as \emph{equality of
  outcomes}, \emph{statistical parity} -- or algorithms that avoid
\emph{disparate impact} (see
e.g.~\citet{CV10,LRT11,KAS11,FFMSV15,FKL16} and \cite{AFFRSSV16} for a
study of \emph{auditing} existing algorithms for disparate
impact). While statistical parity is sometimes a desirable goal --
indeed, it is sometimes required by law -- as observed
by~\citet{dwork2012fairness} and others, it suffers from two
problems. First, if different populations indeed have different
statistical properties, then it can be at odds with accurate
classification. Second, even in cases when statistical parity is
attainable with an optimal classifier, it does not prevent
discrimination at an individual level -- see \cite{dwork2012fairness}
for a catalog of ways in which statistical parity can be insufficient
from the perspective of fairness.  In contrast, we study a notion
aimed at guaranteeing fairness at the individual level.

Our definition of fairness is most closely related to that
of~\citet{dwork2012fairness}, who proposed and explored the basic
properties of a technical definition of individual fairness
formalizing the idea that ``similar individuals should be treated
similarly''. Specifically, their work presupposes the existence of a
task-specific metric on individuals, and proposes that fair algorithms
should satisfy a Lipschitz condition with respect to this metric. Our
definition of fairness is similar, in that the expected reward of each
arm is a natural metric through which we define fairness. The main
conceptual distinction between our work and~\citet{dwork2012fairness}
is that their work operates under the assumption that the metric is
known to the algorithm designer, and hence in their setting, the
fairness constraint binds only insofar as it is in conflict with the
desired outcome of the algorithm designer. The most challenging aspect
of this approach (as they acknowledge) is that it requires that some
third party design a ``fair'' metric on individuals, which in a sense
encodes much of the relevant challenge. The question of how to design
such a metric was considered by~\citet{Zem13}, who study methods to
learn representations that encode the data, while obscuring protected
attributes. Our fairness constraint, conversely, is entirely aligned
with the goal of the algorithm designer in that it is satisfied by the
optimal policy; nevertheless, it affects the space of feasible
learning algorithms, because it interferes with \emph{learning} an
optimal policy, which depends on the unknown reward functions.

At a technical level, our work is related to~\citet{amin2012graphical}
and~\citet{amin2013large}, which also relate KWIK learning to bandit
learning in a different context, unrelated to fairness (when the arm
space is very large).
\newcommand{\is}{i^{*}}
\section{Preliminaries}\label{sec:prelims}
We study the \emph{contextual bandit} setting, which is
defined by a domain $\cX$, a set of ``arms'' $[k] := \{1,\ldots,k\}$
and a class $\C$ of functions of the form $f:\cX\rightarrow [0,1]$.
For each arm $j$ there is some function $f_j \in C$, unknown to the
learner. In rounds $t = 1, \ldots, T$, an adversary reveals to the
algorithm a \emph{context} $x_j^t$ for each arm\footnote{Often, the
  contextual bandit problem is defined such that there is a single
  context $x^t$ every day. Our model is equivalent -- we could take
  $x_j^t := x^t$ for each $j$.}.  An algorithm $\cA$ then chooses an
arm $i_t$, and observes stochastic reward $\rew{t}{i_t}$ for the arm
it chose. We assume $\rew{t}{j}\sim \Rew{t}{j}$,
$\E{\rew{t}{j}} = f_j(x^t_j)$, for some distribution $\Rew{t}{j}$
over $[0,1]$.

Let $\Pi$ be the set of policies mapping contexts to distributions
over arms $X^k \to \Delta^k$, and $\pi^*$ the optimal policy which
selects a distribution over arms as a function of contexts to maximize
the expected reward of those arms.  The {\bf pseudo-regret} of an
algorithm $\A$ on contexts $x^1, \ldots, x^T$ is defined as follows,
where $\pp{t}{}$ represents $\A$'s distribution on arms at round $t$:
\[ \sum_{t}\Ex{i^t_* \sim \pi^*(x^t)}{f_{i^t_*}(x^t_{i^t_*})}  -  \Ex{i^t \sim \pp{t}{}}{\sum_t f_{i^t}(x^t_{i^t})}  = \regret{x^1, \ldots, x^T}.\]
We hereafter refer to this as the {\bf regret} of $\A$.  The optimal
policy $\pi^*$ pulls arms with highest expectation at each round, so:
\[\regret{x^1, \ldots, x^T} = \sum_{t}\max_j\left(f_j(x_j^t)\right) - \Ex{i^t \sim \pp{t}{}}{\sum_t f_{i^t}(x^t_{i^t})}. \]
We say that $\A$ satisfies regret bound $\reg{T}$ if
$\max_{x^1, \ldots, x^T}\regret{x^1, \ldots, x^t} \leq \reg{T}$.

Let the history
$h^t \in \left(\cX^{k} \times [k]\times [0,1] \right)^{t-1}$ be a
record of $t-1$ rounds experienced by $\A$, $t-1$ 3-tuples which
encode for each $t$ the realization of the contexts, arm chosen, and
reward observed.
We write $\ppc{t}{j}{h^t}$ to denote the probability that $\A$ chooses
arm $j$ after observing contexts $x^t$, given $h^{t}$.  For notational
simplicity, we will often drop the superscript $t$ on the history when
referring to the distribution over arms:
$\ppc{t}{j}{h} \coloneqq \ppc{t}{j}{h^t}$.

We now define what it means for a contextual bandit algorithm to be
$\delta$-fair with respect to its arms. Informally, this will mean
that $\A$ will play arm $i$ with higher probability than arm $j$ in
round $t$ only if $i$ has higher mean than $j$ in round $t$, for all
$i,j\in [k]$, and in all rounds $t$.

\begin{definition}[$\delta$-fair]\label{def:fair}
  $\A$ is $\delta$-{\bf fair} if, for all sequences of contexts
  $x^1, \ldots, x^{t}$ and all payoff distributions
  $\Rew{t}{1}, \ldots, \Rew{t}{k}$, with probability at least
  $1-\delta$ over the realization of the history $h$, for all rounds
  $t\in [T]$ and all pairs of arms $j, j' \in [k]$,
  $$\ppc{t}{j}{h} > \ppc{t}{j'}{h}\ \textrm{only if}\
  f_j(x^t_j) > f_{j'}(x^t_{j'}).$$
\end{definition}

\begin{remark}
  Definition~\ref{def:fair} prohibits favoring lower payoff arms over
  higher payoff arms. One relaxed definition only requires that
  $\ppc{t}{j}{h} = \ppc{t}{j'}{h}$ when
  $f_j(x^t_j) = f_{j'}(x^t_{j'})$ -- requiring only \emph{identical}
  individuals (concerning expected payoff) be treated
  identically. This relaxation is a special case
  of~\citet{dwork2012fairness}'s proposed family of definitions, which
  require that ``similar individuals be treated similarly''. We use
  Definition~\ref{def:fair} as it is better motivated in its
  implications for fair treatment of individuals, but all of our
  results -- including our lower bounds -- apply also to this
  relaxation.
\end{remark}
\paragraph{KWIK learning}
Let $\B$ be an algorithm which takes as input a sequence of examples
$x^1,\ldots,x^T$, and when given some $x^t\in \cX$, outputs either a
prediction $\hat{y}^t \in [0,1]$ or else outputs $\hat{y}^t = \bot$,
representing ``I don't know''. When $\hat{y}^t = \bot$, $\B$ receives
feedback $y^t$ such that $\E{y^t} = f(x^t)$.  $\B$ is an
$(\epsilon, \delta)$-KWIK learning algorithm for $\C : \cX \to [0,1]$, with KWIK bound
$m(\epsilon, \delta)$ if for any sequence of examples
$x^1 , x^2, \ldots$ and any target $f\in \C$, with probability at
least $1-\delta$, both:
\begin{enumerate}
\item Its numerical predictions are accurate: for all $t$,
  $\hat{y}^t \in \{\bt\} \cup [f(x^t)-\epsilon, f(x^t)+\epsilon]$, and
\item $\B$ rarely outputs ``I Don't Know'':
  $\sum_{t= 1}^\infty \I{\hat{y}^t = \bot} \leq m(\epsilon, \delta)$.
\end{enumerate}

\subsection{Specializing to Classic Stochastic Bandits}
In Sections \ref{sec:fairbandits} and \ref{sec:lower}, we study the
classic stochastic bandit problem, an important special case of the
contextual bandit setting described above. Here we specialize our
notation to this setting, in which there are no contexts.  For each
arm $j \in [k]$, there is an unknown distribution $\Rew{}{j}$ over
$[0,1]$ with unknown mean $\mu_j$. A learning algorithm $\cA$ chooses
an arm $i_t$ in round $t$, and observes the reward
$\rew{t}{i_t}\sim \Rew{}{i_t}$ for the arm that it chose.  Let
$\is\in [k]$ be the arm with highest expected reward:
$\is \in \argmax_{i\in [k]} \mu_i$.  The pseudo-regret of an algorithm
$\A$ on $\Rew{}{1}, \ldots, \Rew{}{k}$ is now just:
\[  T \cdot \mu_{\is}  -  \Ex{i^t \sim \pp{t}{}}{\sum_{0 \leq t \leq T} \mu_{i^t}}  = \regret{T, \Rew{}{1}, \ldots,\Rew{}{k}}\]
%
Let
$h^t \in \left([k]\times [0,1] \right)^{t-1}$ denote a record of the
$t-1$ rounds experienced by the algorithm so far, represented by $t-1$
2-tuples encoding the previous arms chosen and rewards observed.
We write $\ppc{t}{j}{h^t}$ to denote the probability that $\A$ chooses
arm $j$ given history $h^{t}$. Again, we will
often drop the superscript $t$ on the history when referring to the
distribution over arms: $\ppc{t}{j}{h} \coloneqq \ppc{t}{j}{h^t}$.

$\delta$-fairness in the classic bandit setting specializes as
follows:
\begin{definition}[$\delta$-fairness in the classic bandits setting]
  $\A$ is $\delta$-{\bf fair} if, for all distributions
  $\Rew{}{1}, \ldots, \Rew{}{k}$, with probability at least $1-\delta$
  over the history $h$, for all $t\in [T]$ and all $j, j' \in [k]$:
  $$\ppc{t}{j}{h} > \ppc{t}{j'}{h}\ \textrm{only if}\  \mu_j > \mu_{j'}.$$
\end{definition}

\section{Fair Classic Stochastic Bandits: An Algorithm}\label{sec:fairbandits}

In this section, we describe a simple and intuitive modification of
the standard UCB algorithm~\citep{ACF02}, called \fairbandits,
prove that it is fair, and analyze its regret bound. The algorithm and
its analysis highlight a key idea that is important to the design of
fair algorithms in this setting: that of \emph{chaining} confidence
intervals. Intuitively, as a $\delta$-fair algorithm explores
different arms it must play two arms $j_1$ and $j_2$ with equal
probability until it has sufficient data to deduce, with confidence
$1-\delta$, either that $\mu_{j_1} > \mu_{j_2}$ or vice
versa. \fairbandits does this by maintaining empirical
estimates of the means of both arms, together with confidence
intervals around those means. To be safe, the algorithm must play the
arms with equal probability while their confidence intervals
overlap. The same reasoning applies simultaneously to every pair of
arms. Thus, if the confidence intervals of each pair of arms $j_i$ and
$j_{i+1}$ overlap for each $i \in [k]$, the algorithm is forced to
play \emph{all} arms $j$ with equal probability. This is the case even
if the confidence intervals around arm $j_k$ and arm $j_1$ are far
from overlapping -- i.e. when the algorithm can be confident that
$\mu_{j_1} > \mu_{j_k}$.

This approach initially seems naive: in an attempt to achieve
fairness, it seems overly conservative when ruling out arms, and can
be forced to play arms uniformly at random for long periods of
time. This is reflected in its regret bound, which is only non-trivial
after $T \gg k^3$, whereas the UCB algorithm~\citep{ACF02} achieves
non-trivial regret after $T = O(k)$ rounds.  However, our lower bound
in Section~\ref{sec:lower} shows that \emph{any} fair algorithm
\emph{must} suffer constant per-round regret for $T \gg k^3$ rounds on
some instances.

%
%

We now give an overview of the behavior of \fairbandits. At every
round $t$, \fairbandits identifies the arm $\ist = \argmax_i\up{t}{i}$
that has the largest \emph{upper} confidence interval amongst the
active arms. At each round $t$, we say $i$ is \emph{linked} to $j$ if
$[\low{t}{i}, \up{t}{i} ]\cap [\low{t}{j}, \up{t}{j}] \neq \emptyset$,
and $i$ is \emph{chained} to $j$ if $i$ and $j$ are in the same
component of the transitive closure of the linked
relation. \fairbandits plays uniformly at random among all active arms
chained to arm $\ist$.

Initially, the active set contains all arms. The active set of arms at
each subsequent round is defined to be the set of arms that are
chained to the arm with highest upper confidence bound at the previous
round. The algorithm can be confident that arms that have become
unchained to the arm with the highest upper confidence bound at any
round have means that are lower than the means of any chained arms,
and hence such arms can be safely removed from the active set, never
to be played again. This has the useful property that the active set
of arms can only shrink: at any round $t$,
$S_t \subseteq S_{t-1}$; see Figure \ref{fig:CIs} for an example of
active set evolution over time.

\begin{center}
    \begin{algorithm}
	\begin{algorithmic}[1]
	\Procedure{\fairbandits}{$\delta$}
 	\State  $S^0 \gets \{1,\ldots,k\}$ \Comment{Initialize the active set}
 	\For{$i = 1, \ldots k$}
 		\State $\mean{0}{i} \gets \frac{1}{2}$,  $\up{0}{i} \gets 1$, $\low{0}{i} \leftarrow 0$, $\num{0}{i} \gets 0$  \Comment{Initialize each arm}
 	\EndFor
 	\For{$t =1$ to $T$}
 		\State $\ist \gets \arg \max_{i\in S^{t-1}} \up{t}{i}$ \Comment{Find arm with highest ucb}
 		\State $S^t \gets \{j \mid $ $j$ chains to $\ist$, $j \in S^{t-1}\}$ \Comment{Update active set}
 		\State $j^* \gets (x \in_R S^t)$ \Comment{Select active arm at random}
 		\State $\num{t+1}{j^*} \gets \num{t}{j^*} + 1$
 		\State $\mean{t+1}{j^*} \gets \frac{1}{\num{t+1}{j^*}} (\mean{t}{j^*}\cdot \num{t}{j^*} + \rew{t}{j^*})$ \Comment{Pull arm $j^*$, update its mean estimate}
                \State $B \gets \sqrt{\frac{\ln((\pi\cdot (t+1))^2/3\delta)}{2\num{t+1}{j^*}}}$
  		\State $\left[\low{t+1}{j^*}, \up{t+1}{j^*}\right] \gets  \left[ \mean{t+1}{j^*}  - B, \mean{t+1}{j^*} + B \right]$
 \Comment{Update interval for pulled arm}
 		\For{$j \in S^t, j \neq j^*$}
 			\State $\mean{t+1}{j} \gets  \mean{t}{j}$, $\num{t+1}{j} \gets \num{t}{j}$, $\up{t+1}{j} \gets \up{t}{j}, \low{t+1}{j} \gets \low{t}{j}$
 		\EndFor
 	\EndFor
 	\EndProcedure
 	\end{algorithmic}\label{alg:ucbfair}
 \end{algorithm}
\end{center}

We first observe that with probability $1-\delta$, all of the
confidence intervals maintained by \fairbandits$(\delta)$ contain the
true means of their respective arms over all rounds. We prove this claim, along with all other claims in this section without proofs, in Appendix ~\ref{sec:app-noncontextual-regret}.

\begin{lemma}\label{lem:intervals}
  With probability at least $ 1 - \delta$, for every arm $i$ and
  round $t$ $\low{t}{i} \leq \mu_i \leq \up{t}{i}$.
\end{lemma}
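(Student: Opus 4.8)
The plan is to calibrate Hoeffding's inequality against the half-width $B$ used by \fairbandits and then take a union bound over rounds, exploiting the fact that the algorithm recomputes an arm's confidence interval only in rounds where that arm is actually pulled. First I would dispose of the intervals that are valid for free: the initialized intervals $\interval{0}{i} = [0,1]$ contain $\mu_i \in [0,1]$ with certainty, and in the loop over $j \in S^t$ with $j \neq j^*$ each non-pulled arm simply copies $\up{t}{j}, \low{t}{j}$ from the previous round. Hence $\interval{t}{i}$ can fail to contain $\mu_i$ only if it was freshly created at some round $s \leq t$ in which arm $i$ was pulled, so the event ``some interval ever omits its mean'' is precisely the union, over rounds $s$, of the event that the single interval created at round $s$ (for the pulled arm, at its current sample count) omits that arm's mean.

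Next I would bound the per-creation failure probability. Fix an arm $i$ and a pull count $n$, and let $\hat{\mu}_{i,n}$ denote the average of the first $n$ i.i.d. rewards drawn from $\Rew{}{i}$; Hoeffding's inequality gives $\pr{|\hat{\mu}_{i,n} - \mu_i| > \varepsilon} \leq 2 e^{-2n\varepsilon^2}$. When this $n$-th pull occurs at round $s$, the algorithm sets $B = \sqrt{\ln((\pi(s+1))^2/3\delta)/(2n)}$, so $2nB^2 = \ln\big(\pi^2(s+1)^2/3\delta\big)$ and the interval created at round $s$ omits $\mu_i$ with probability at most $2e^{-2nB^2} = \tfrac{6\delta}{\pi^2(s+1)^2}$, a quantity independent of $n$. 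Summing over all creation rounds and using $\sum_{s \geq 1}(s+1)^{-2} = \tfrac{\pi^2}{6} - 1$ gives
\[
\sum_{s \geq 1}\frac{6\delta}{\pi^2(s+1)^2} = \frac{6\delta}{\pi^2}\left(\frac{\pi^2}{6} - 1\right) = \delta\left(1 - \frac{6}{\pi^2}\right) < \delta,
\]
which is exactly the claimed guarantee; the $\pi^2$ and the factor $3$ inside the logarithm are chosen precisely so that this series sums below $\delta$.

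The hard part will be the adaptivity: the identity of the pulled arm $j^*$ and its count $\num{t+1}{j^*}$ are random and depend on the realized history, so Hoeffding cannot be applied ``per round'' by conditioning on the entire past --- conditioned on the full history only the single fresh reward is random, and $\mean{t+1}{j^*}$ need not concentrate. The safe move is to keep the arm and count fixed: for each pair $(i,n)$ the first $n$ rewards of arm $i$ are i.i.d. draws from $\Rew{}{i}$ regardless of the history-dependent rule governing \emph{when} arm $i$ is pulled, so Hoeffding applies cleanly to $\hat{\mu}_{i,n}$; and since the $n$-th pull of arm $i$ happens at a round $s \geq n$ with $B$ increasing in $s$, each created interval omits its mean with probability at most $\tfrac{6\delta}{\pi^2(n+1)^2}$. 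The delicate point I would have to resolve is that naively unioning over all $(i,n)$ pairs sums to $k\,\delta(1 - 6/\pi^2)$, carrying an extra factor of $k$; obtaining the stated $1-\delta$ bound instead requires charging each failure to its \emph{unique} creation round and arguing that the adaptive selection does not inflate the per-round failure probability above $\tfrac{6\delta}{\pi^2(s+1)^2}$, so that the union may be taken over the $T$ creation rounds (the clock-time weighting) rather than over all $(i,n)$ pairs.
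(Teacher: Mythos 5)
Your approach is the same as the paper's, and in fact the paper's proof is precisely your second paragraph: it fixes an arbitrary arm $i$ and round $t$, applies a Chernoff--Hoeffding bound with $a = \sqrt{\num{t}{i}\ln((\pi t)^2/3\delta)/2}$ to get per-round failure probability $6\delta/(\pi t)^2$, and then takes a union bound over rounds only, using $\sum_{t\geq 1} t^{-2} = \pi^2/6$ to sum to exactly $\delta$. Notably, the paper never engages with the adaptivity issue you isolate: it applies Hoeffding with the random sample count $\num{t}{i}$ and unions over the clock time $t$ rather than over arms --- exactly the ``charge each failure to its creation round'' step that you correctly flag as the delicate one.

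On that delicate step, your caution is warranted, but the repair you defer to is the part that fails, while your fallback is the rigorous statement. Conditioned on the history through round $s-1$, the pulled arm is (up to uniform tie-breaking within the active set) a function of past rewards, and the fresh empirical mean is then deterministic except for a single new reward; so the conditional per-round failure probability is not Hoeffding-controlled and can be near $1$ on bad histories. Injectivity of the $(i,n)\mapsto s$ assignment does not rescue a round-indexed union bound either, because the assignment is itself random, whereas a union bound needs fixed events with fixed probability bounds. The fixed-event union over pairs $(i,n)$, using $s_{i,n}\geq n$ as you do, is the clean argument, and it yields failure probability at most $k\delta\left(1-\frac{6}{\pi^2}\right)$ --- i.e., the lemma with $\delta$ degraded to $\Theta(k\delta)$. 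With the algorithm's constants exactly as written, that is the best simple bound; the standard fix is to put $k$ inside the logarithm of the width (as the paper itself does in the concentration bound of Lemma~\ref{lem:num-pulled}), which restores $1-\delta$ and changes nothing downstream, since the regret bound of Theorem~\ref{thm:noncontextual-regret} already carries $\ln\frac{Tk}{\delta}$. In short: same approach as the paper, executed more carefully; the one step you leave open is a step the paper's own proof silently assumes.
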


The fairness of \fairbandits follows almost immediately from this
guarantee.

\begin{theorem}\label{thm:noncontextual-fair}
    \fairbandits$(\delta)$ is $\delta$-fair.
\end{theorem}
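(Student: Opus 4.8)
The plan is to condition on the high-probability event from Lemma~\ref{lem:intervals} --- namely, that all confidence intervals simultaneously contain their true means at every round --- and show that, \emph{on this event}, the fairness condition of Definition~2 holds \emph{deterministically} for every round $t$ and every pair of arms. Since Lemma~\ref{lem:intervals} guarantees this event occurs with probability at least $1-\delta$, establishing the deterministic implication on the event suffices to conclude that \fairbandits$(\delta)$ is $\delta$-fair.

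So assume $\low{t}{i} \leq \mu_i \leq \up{t}{i}$ for all arms $i$ and rounds $t$. The key observation is that \fairbandits assigns probability to arms in only two ways: an arm receives probability $1/|S^t|$ if it is in the active set $S^t$ (i.e.\ it is chained to $\ist$ and survived from $S^{t-1}$), and probability $0$ otherwise. Hence the only way to have $\ppc{t}{j}{h} > \ppc{t}{j'}{h}$ is for $j$ to be active and $j'$ to be inactive at round $t$. The entire argument therefore reduces to showing: \textbf{if $j \in S^t$ and $j' \notin S^t$, then $\mu_j > \mu_{j'}$.} First I would handle the base step. An arm $j'$ is dropped from the active set at the first round where it fails to chain to $\ist$; I would argue that at that round its confidence interval must lie strictly below that of $\ist$ (otherwise transitivity of the chaining relation would keep it linked, since $\ist$ has the maximal upper bound among active arms and an overlapping interval would chain $j'$ to $\ist$). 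Concretely, an inactive $j'$ satisfies $\up{t}{j'} < \low{t}{i}$ for every active arm $i$ on the chain; combined with the interval-containment event this gives $\mu_{j'} \leq \up{t}{j'} < \low{t}{i} \leq \mu_i$ for active $i$, so $\mu_{j'} < \mu_j$ for the active arm $j$ in question.

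The step I expect to be the main obstacle is propagating this guarantee \emph{forward in time}: an arm dropped at some early round must remain justifiably excluded at all later rounds, even though the active arms' intervals keep shrinking and the arm $\ist$ may change. I would resolve this by exploiting the nesting property $S^t \subseteq S^{t-1}$ stated in the text together with the fact that, on the good event, the empirical means never stray outside intervals that contain the true means; so once the separation $\mu_{j'} < \mu_i$ is certified for a surviving arm, that true-mean inequality is a fixed fact about the instance and cannot be reversed by further play. The only subtlety is to confirm that every arm $j$ that is \emph{currently} active dominates every \emph{currently} inactive $j'$, which I would obtain by chaining the pairwise separations through the transitive-closure structure: the active set is an interval-connected cluster all of whose members have true means exceeding those of any arm that was severed from the cluster. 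Assembling these pieces yields the deterministic fairness implication on the good event, completing the proof.
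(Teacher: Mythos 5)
Your proposal is correct and follows essentially the same route as the paper's proof: condition on the event of Lemma~\ref{lem:intervals}, note that \fairbandits plays uniformly within $S^t$ and with probability zero outside it, and conclude that every inactive arm's true mean lies strictly below every active arm's. The only real difference is one of bookkeeping: the paper argues the contrapositive directly at round $t$ (if $\mu_j \geq \mu_i$ then $\up{t}{j} \geq \mu_j \geq \mu_i \geq \low{t}{i}$, so $j$ would chain to \ist whenever $i$ does), whereas you certify the interval separation at the round an arm is dropped and then propagate the true-mean inequality forward via $S^t \subseteq S^{t-1}$ --- a slightly more explicit treatment of a step the paper leaves implicit.
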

\begin{proof}
  By Lemma~\ref{lem:intervals}, with probability at least $1-\delta$
  all confidence intervals contain their true means across all rounds.
  Thus, with probability $1-\delta$, at every round $t$, for every
  $i\in S^t, j \notin S^t$, it must be that $\mu_j < \mu_i$ -- the
  arms not in the active set have strictly smaller means than those in
  the active set; if not,
  $\up{t}{j} \geq \mu_j \geq \mu_i \geq \low{t}{i}$ implies $j$ would
  be chained to $\ist$ if $i$ is.  Finally, all arms in $S^t$ are
  played uniformly at random -- but since all such arms are played
  with the same probability, this does not cause the fairness
  constraint to bind for any pair $i, i' \in S^t$, for any realization
  of $\mu_i,\mu_i'$ which lie within their confidence intervals.
\end{proof}

\begin{figure}
  \begin{minipage}[c]{0.6\textwidth}
  \includegraphics[width=9cm]{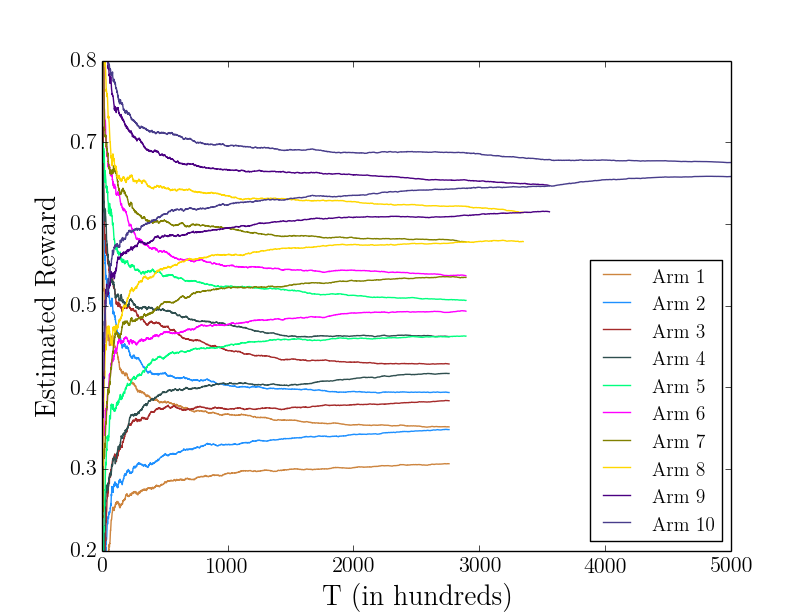}
  \end{minipage}\hfill
  \begin{minipage}[c]{0.35\textwidth}
    \caption{Confidence intervals over time for the lower bound
      instance outlined in Section \ref{sec:lower} for $k = 10$. Lines
      correspond to upper and lower confidence bounds for each arm
      and cut off at the round in which the arm leaves the active set.}
  \label{fig:CIs}
  \end{minipage}
\end{figure}

Next, we upper bound the regret of \fairbandits.
\begin{theorem}\label{thm:noncontextual-regret}
  If $\delta < 1/\sqrt{T}$, then \fairbandits has regret
  $$R(T) = O \left(\sqrt{k^3T\ln\frac{Tk}{\delta}} \right).$$
\end{theorem}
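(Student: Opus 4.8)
The plan is to bound the regret by carefully accounting for the cost incurred by the chaining mechanism. The key quantity to control is, at each round $t$, the ``width'' of the active set measured in terms of mean gaps: since \fairbandits plays uniformly at random over the active set $S^t$, the per-round regret is $\mu_{\is} - \frac{1}{|S^t|}\sum_{j \in S^t} \mu_j$. Conditioning on the event from Lemma~\ref{lem:intervals} (which costs at most $\delta T$ additive regret, negligible since $\delta < 1/\sqrt{T}$), the optimal arm $\is$ is never removed from the active set, so $\is \in S^t$ and $\mu_{\is} \le \up{t}{\is} \le \up{t}{\ist}$. The crucial geometric observation is that every arm $j \in S^t$ is chained to $\ist$, meaning there is a path of overlapping confidence intervals from $j$ to $\ist$; chaining along this path, the gap $\mu_{\is} - \mu_j$ is bounded by the \emph{sum of the widths} of the confidence intervals along the chain. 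In the worst case this telescoping over a chain of length up to $k$ gives a per-round regret on the order of $k$ times the largest confidence width of any active arm.

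First I would formalize this chaining bound: if $j$ chains to $\ist$ through arms $j = a_0, a_1, \ldots, a_m = \ist$ with consecutive intervals overlapping, then $\mu_{\is} - \mu_j \le \sum_{i} (\up{t}{a_i} - \low{t}{a_i})$, which is at most $\sum_{i \in S^t}(\up{t}{i} - \low{t}{i})$. So the total per-round regret is bounded by the sum over active arms of their confidence-interval widths. Next I would relate the interval width of an arm $i$ to its pull count: from the algorithm, the half-width is $B_i^t = \sqrt{\ln((\pi(t+1))^2/3\delta)/(2\,\num{t}{i})}$, so the width is $O(\sqrt{\ln(t/\delta)/\num{t}{i}})$. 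Summing widths over the active arms at round $t$ gives a per-round regret of $O\!\left(\sum_{i \in S^t}\sqrt{\ln(Tk/\delta)/\num{t}{i}}\right)$.

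Then I would sum the per-round regret over all $T$ rounds and exchange the order of summation, grouping by arm. For a fixed arm $i$, the contribution across rounds in which it is active is controlled by how its pull count $\num{t}{i}$ grows. Because each active arm is pulled with probability $1/|S^t| \ge 1/k$ whenever active, the count $\num{t}{i}$ increases at a predictable rate, and a sum of the form $\sum_t \sqrt{1/\num{t}{i}}$ over the rounds where $i$ is active telescopes like $\sum_{n} \sqrt{k/n} = O(\sqrt{kT})$ per arm (using that each arm is pulled at most $T$ times and that it takes roughly $k$ rounds to increment any particular arm's count by one). Multiplying by the factor of $k$ from having up to $k$ active arms contributing to each round's width-sum, and by the $\sqrt{\ln(Tk/\delta)}$ factor, yields the claimed $O(\sqrt{k^3 T \ln(Tk/\delta)})$ bound after collecting terms. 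I would handle the high-probability-to-expectation passage by noting the failure event contributes at most $\delta T \le \sqrt{T}$ regret.

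The main obstacle I expect is making the summation-over-rounds step rigorous given the \emph{randomness} in which arm gets pulled: the pull counts $\num{t}{i}$ are themselves random variables, so I cannot simply treat $\num{t}{i}$ as growing deterministically at rate $1/k$. The cleanest route is to avoid analyzing the random trajectory directly and instead bound the total regret by the total confidence width summed over rounds, then argue that \emph{whatever} the realized pull sequence, the identity $\sum_t (\up{t}{i}-\low{t}{i}) \le \sum_{n=1}^{\num{T}{i}} O(\sqrt{\ln(T/\delta)/n})\cdot(\text{rounds per pull})$ holds pathwise. Converting this per-arm pathwise bound into the global $\sqrt{k^3 T}$ rate requires a Cauchy--Schwarz step across the $k$ arms together with the constraint $\sum_i \num{T}{i} \le T$, and keeping the factors of $k$ straight (one factor from chaining, one from the uniform $1/k$ play slowing each arm's learning, and a square root absorbed by Cauchy--Schwarz) is where the cubic dependence genuinely emerges and must be tracked with care.
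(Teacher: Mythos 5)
Your proposal has the right architecture and even the right accounting of where the three factors of $k$ arise, and its skeleton (condition on Lemma~\ref{lem:intervals}, observe $i_*$ stays active, bound per-round regret by the total width of the confidence intervals chaining the played arm to $\ist$, then sum over rounds) matches the paper's. But there is a genuine gap at exactly the step you flag and then claim to resolve ``pathwise'': the asserted inequality $\sum_t (\up{t}{i}-\low{t}{i}) \le \sum_{n=1}^{\num{T}{i}} O\left(\sqrt{\ln(T/\delta)/n}\right)\cdot(\text{rounds per pull})$ with rounds-per-pull $=O(k)$ is \emph{not} a pathwise fact. An active arm's interval is frozen between pulls, and the number of consecutive active rounds in which a given arm fails to be drawn is a geometric-type random variable with mean at most $k$ but no pathwise upper bound; on bad sample paths an arm can stay wide for far longer than $k$ rounds per pull. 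This matters doubly here because, due to chaining, the regret of a round is governed by the \emph{widest} active interval (the whole chain stays fat if any link does), not by the pulled arm's own width -- so the standard UCB device of charging each round's regret to the pulled arm's width, followed by Cauchy--Schwarz against $\sum_i \num{T}{i} \le T$, undercounts. The quantity your argument needs -- a lower bound on $\num{t}{i}$ for \emph{every} active arm at \emph{every} round -- is random and requires a concentration argument, which your pathwise reformulation does not supply.

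The repair is precisely the paper's Lemma~\ref{lem:num-pulled}: since the active set only shrinks, any arm active at round $t$ was active at all earlier rounds and pulled with probability at least $1/k$ in each, so an additive Chernoff bound plus a union bound over arms and rounds gives $\num{t}{i} \ge t/k - \sqrt{(t/2)\ln(2kt^2/\delta)}$ simultaneously for all active $i$ and all $t$, except with probability $O(\delta)$. With that in hand, every active width is at most $\eta(t) = O\left(\sqrt{k\ln(t/\delta)/t}\right)$ (Lemma~\ref{lem:interval-width}), the per-round regret is at most $\min(1, k\cdot\eta(t))$, and summing over $t$ yields $\tilde O\left(k^{3/2}\sqrt{T\ln(kT/\delta)} + k^3\right)$, the $k^3$ term coming from the early rounds where $k\cdot\eta(t)\ge 1$ -- which is the paper's proof. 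Note also that once the uniform $\num{t}{i} \gtrsim t/k$ bound holds, all active arms have comparable pull counts, so the Cauchy--Schwarz-across-arms step you anticipated is unnecessary; your proposal is thus the correct strategy rendered incomplete at the single point where the randomness of the pull counts must be tamed.
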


\begin{remark}
  Before proving Theorem \ref{thm:noncontextual-regret}, we highlight
  two points. First, this bound becomes non-trivial (i.e. the average
  per-round regret is $\ll 1$) for $T = \Omega(k^3)$. As we show in
  the next section, it is not possible to improve on this. Second, the
  bound may appear to have suboptimal dependence on $T$ when compared
  to unconstrained regret bounds (where the dependence on $T$ is often
  described as logarithmic). However, it is known that
  $\Omega\left(\sqrt{kT}\right)$ regret is \emph{necessary} even in
  the unrestricted setting (without fairness) if one does not make
  data-specific assumptions on an instance~\citep{bubeck2012regret}
  (e.g. that there is a lower bound on the gap between the best and
  second best arm). It would be possible to state a logarithmic
  dependence on $T$ in our setting as well while making assumptions on
  the gaps between arms, but since our fairness constraint manifests
  itself as a cost that depends on $k$, we choose for clarity to avoid
  such assumptions. Without such assumptions, our dependence on $T$ is
  also optimal.
\end{remark}

We now prove
Theorem~\ref{thm:noncontextual-regret}. Lemma~\ref{lem:num-pulled}
upper bounds the probability any arm $i$ active in round $t$ has been
pulled substantially fewer times than its expectation, i.e.
$\num{t}{i} \ll \frac{t}{k}$. Lemma~\ref{lem:interval-width} upper
bounds the width of any confidence interval used by \fairbandits in
round $t$ by $\eta(t)$, conditioned on $i$ being pulled the number of
times guaranteed by Lemma~\ref{lem:num-pulled}. Finally, we stitch
this together to prove Theorem~\ref{thm:noncontextual-regret} by upper
bounding the total regret incurred for $T$ rounds by noticing that the
regret of any arm active in round $t$ is at most $k \eta(t)$.

We begin by lower bounding the probability that any arm active in
round $t$ has been pulled substantially fewer times than its
expectation.
\begin{lemma}\label{lem:num-pulled}
With probability at least $1-\frac{\delta}{2t^2}$,
\[\num{t}{i} \geq \frac{t}{k} - \sqrt{\frac{t}{2}\ln\left(\frac{2k \cdot t^2}{\delta}\right)}\]
for all $i\in S^t$ (for all active arms in round $t$).
\end{lemma}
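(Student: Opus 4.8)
The plan is to exploit the fact that the active set only shrinks, so that any arm $i \in S^t$ was active---and hence pulled with conditional probability $1/|S^r| \ge 1/k$---in every round $r < t$. The one subtlety is that the event ``$i \in S^t$'' is itself a random function of the rewards and is correlated with the pull indicators; I will handle this not by conditioning directly on that event, but by passing to a compensated martingale.

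First I would record the structural facts. Since $S^r \subseteq S^{r-1}$, membership $i \in S^t$ implies $i \in S^r$ for all $r \le t$. The arm $\ist$, the chaining relation, and hence each $S^r$ are determined by the history $h^r$ (the bounds $\up{r}{j},\low{r}{j}$ are updated only at the end of round $r-1$), so $S^r$ is measurable with respect to the past; and conditioned on $h^r$ the pulled arm is uniform on $S^r$. Thus, writing $P_r = \I{i \text{ is pulled in round } r}$, we have $\E{P_r \mid h^r} = q_r := \I{i \in S^r}/|S^r|$, which is $h^r$-measurable and is at least $1/k$ whenever $i \in S^r$.

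Next I would set up the martingale $M_s = \sum_{r=1}^{s}(P_r - q_r)$. Its increments satisfy $\E{P_r - q_r \mid h^r}=0$ and lie in a conditional interval of length at most $1$, so the sharp (interval-length) form of Azuma--Hoeffding gives $\pr{M_{t-1} \le -\lambda} \le \exp(-2\lambda^2/(t-1))$; the factor of $2$ here is exactly what produces the $t/2$ inside the square root. On the event $\{i \in S^t\}$ the compensator is large, $\sum_{r=1}^{t-1} q_r = \sum_{r=1}^{t-1} 1/|S^r| \ge (t-1)/k$, while $\sum_{r=1}^{t-1}P_r = \num{t}{i}$. Hence on this event $\num{t}{i} \ge (t-1)/k + M_{t-1}$, so the bad event $\{i \in S^t\} \cap \{\num{t}{i} < (t-1)/k - \lambda\}$ is contained in $\{M_{t-1} < -\lambda\}$ and thus has probability at most $\exp(-2\lambda^2/(t-1))$---crucially, with no conditioning on the correlated event $\{i \in S^t\}$ ever taking place.

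Finally I would union-bound over the at most $k$ arms and choose $\lambda = \sqrt{\tfrac{t}{2}\ln(2kt^2/\delta)}$, which makes $k\exp(-2\lambda^2/(t-1)) \le \delta/(2t^2)$ (using $t/(t-1) \ge 1$); the mild difference between $(t-1)/k$ and the stated $t/k$ is absorbed into the slack term. The main obstacle is precisely the correlation between active-set membership and the pulls, and the entire purpose of the compensated-martingale framing is to dissolve it; the only other point requiring care is invoking Azuma--Hoeffding in its interval-length form, so that the leading constant matches the claimed $\sqrt{t/2}$ rather than the looser $\sqrt{2t}$ that a crude bounded-difference estimate would give.
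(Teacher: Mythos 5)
Your proof is correct, and at the skeleton level it is the paper's proof: lower-bound each active arm's per-round pull probability by $1/k$ via the monotonicity $S^t \subseteq S^{t-1}$, apply a Hoeffding-type tail bound with exponent $2\lambda^2/t$, and union-bound over the $k$ arms with $\lambda = \sqrt{\tfrac{t}{2}\ln\tfrac{2kt^2}{\delta}}$. Where you differ is rigor rather than route, and the difference is worth noting: the paper's proof invokes ``a standard additive Chernoff bound'' directly on the pull indicators and asserts $\pr{X_{t'} = 1} \geq \tfrac{1}{k}$ ``since $i \in S^t$'' --- exactly the two soft spots you diagnose, namely that the indicators are neither independent nor identically distributed, and that $\{i \in S^t\}$ is a reward-dependent event correlated with them. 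Your compensated martingale $M_s = \sum_{r \leq s}(P_r - q_r)$ with predictable $q_r$, combined with the event-containment step $\{i \in S^t\} \cap \{\num{t}{i} < \tfrac{t-1}{k} - \lambda\} \subseteq \{M_{t-1} < -\lambda\}$, is the honest version of that computation, and Azuma--Hoeffding in its interval form recovers the identical exponent, so nothing quantitative is lost. One pedantic caveat, which you share with the paper rather than introduce: your sum runs over the $t-1$ completed rounds (the paper's runs over $t$ rounds, which is really $\num{t+1}{i}$), so what you literally prove is $\num{t}{i} \geq \tfrac{t-1}{k} - \lambda$, an additive $\tfrac{1}{k}$ weaker than the stated bound; contrary to your closing remark, this $\tfrac{1}{k}$ is not strictly absorbed by the slack $\tfrac{t}{t-1} \geq 1$ in the tail bound when $t$ is large, but the discrepancy is immaterial everywhere the lemma is used (Lemma~\ref{lem:interval-width} and the regret accounting in Theorem~\ref{thm:noncontextual-regret} are insensitive to a one-pull shift).
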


We now use this lower bound on the number of pulls of active arm $i$
in round $t$ to upper-bound $\eta(t)$, an upper bound on the
confidence interval width \fairbandits uses for any active arm $i$ in
round $t$.

\begin{lemma}\label{lem:interval-width}
  Consider any round $t$ and any arm $i\in S^t$.  Condition on
  $\num{t}{i} \geq \frac{t}{k} -
  \sqrt{\frac{t\ln(\frac{2k t^2}{\delta})}{2}}$.
Then,
\[\up{t}{i} - \low{t}{i} \leq 2 \sqrt{\frac{\ln\left( \left(\pi \cdot t\right)^2/3\delta\right)}{2 \cdot \frac{t}{k} -
  \sqrt{\frac{t\ln(\frac{2kt^2}{\delta})}{2}}}} = \eta(t).\]
\end{lemma}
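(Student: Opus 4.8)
The plan is to observe that the confidence-interval width maintained by \fairbandits for any arm is a deterministic, monotone function of just two quantities -- the number of times the arm has been pulled and the round index appearing inside the logarithm -- and then to substitute the extreme admissible values of these two quantities. First I would pin down exactly which interval is in force at round $t$. The interval $[\low{t}{i}, \up{t}{i}]$ is rewritten only on rounds when $i$ is pulled, so if $s \le t-1$ is the last round before $t$ on which $i$ was played, then by the update rule in the algorithm its width is
\[
\up{t}{i} - \low{t}{i} = 2\sqrt{\frac{\ln\big((\pi(s+1))^2/3\delta\big)}{2\,\num{s+1}{i}}}.
\]
Because the pull count is frozen on rounds where $i$ is idle and the active set only shrinks (so $i$'s interval is copied forward unchanged from round $s+1$ to round $t$), we have $\num{s+1}{i} = \num{t}{i}$. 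If $i$ had never been pulled the width would be the initial value $1$, but the hypothesis forces $\num{t}{i}$ to be positive, so that degenerate case does not arise.

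Next I would exploit monotonicity. The right-hand side above is increasing in the round index $s+1$ (through the numerator) and decreasing in the pull count (through the denominator $2\,\num{s+1}{i}$). Since $i$ was last pulled on a round $s \le t-1$, we have $s+1 \le t$, so the numerator is at most $\ln((\pi t)^2/3\delta)$. For the denominator I would invoke the hypothesis $\num{t}{i} \ge \frac{t}{k} - \sqrt{\frac{t}{2}\ln(\frac{2kt^2}{\delta})}$ supplied by Lemma~\ref{lem:num-pulled} to lower-bound $2\,\num{t}{i}$. Substituting the upper bound on the numerator and the lower bound on the denominator into the width expression, and using the elementary monotonicity of $x \mapsto \sqrt{c/x}$ and of the logarithm, yields exactly $\eta(t)$.

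The only genuine subtlety -- and the step I would be most careful about -- is the bookkeeping that ties the width at round $t$ to the \emph{current} pull count $\num{t}{i}$ and to a round index bounded by $t$, rather than to the historical round $s+1$ on which the interval was actually set. Establishing $\num{s+1}{i} = \num{t}{i}$ and $s+1 \le t$ is what licenses replacing the pair $(s+1, \num{s+1}{i})$ by $(t, \num{t}{i})$ while loosening the bound only in the correct direction. Once that identification is made, the remainder is a direct plug-in of the Lemma~\ref{lem:num-pulled} lower bound with no further estimation required.
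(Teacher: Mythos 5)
Your proposal is correct and takes essentially the same approach as the paper: the paper's proof of Lemma~\ref{lem:interval-width} is precisely the one-line observation that the bound follows from the definition of $[\low{t}{i}, \up{t}{i}]$ together with the assumed lower bound on $\num{t}{i}$. Your additional bookkeeping (the interval was last written at round $s+1 \le t$ with frozen pull count $\num{s+1}{i} = \num{t}{i}$, after which monotonicity in the logarithmic numerator and in the denominator licenses the substitution) simply makes explicit what the paper leaves implicit.
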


Finally, we prove the bound on the total regret of the algorithm,
using the bound on the width of any active arm's confidence interval
in round $t$ provided by Lemma~\ref{lem:interval-width}.

\begin{proof}[Proof of Theorem~\ref{thm:noncontextual-regret}]
  We condition on $\mu_i \in \interval{t}{i}$ for all $i, t$. This occurs
  with probability at least $1-\delta$, by Lemma~\ref{lem:intervals}.
  We claim that this implies that arm $i_*$ with highest expected
  reward is always in the active set. This follows from the fact that
  $\mu_{i_*}\in \interval{t}{i_*}$ and $\mu_{j}\in \interval{t}{j}$ for all
  $j, t$; thus, if $\mu_{i_*} > \mu_j$, it must be that
  $\up{t}{i_*} \geq \low{t}{j}$. Thus, this holds for $\ist$, the arm
  with highest upper confidence bound in round $t$, so $i_*$ must be
  chained to $\ist$ in round $t$ for all $t$.

  We further condition on the event that for all $j, t$,
\[\num{t}{j} \geq \frac{t}{k} - \sqrt{\frac{t}{2}\ln\left(\frac{2k t^2}{\delta}\right)},\]
which holds with probability at least $1-\frac{\pi \delta}{2}$ by
Lemma~\ref{lem:num-pulled} and a union bound over all times $t$.  This
implies that, for all rounds $t$, for every active arm $j\in S^t$,
Lemma~\ref{lem:interval-width} applies, and therefore
\[\up{t}{j} - \low{t}{j} \leq \eta(t). \]
Finally, we upper-bound the per-round regret of pulling any active arm
$i\in S^t$ at round $t$. Since $i_*$ is active, any $i\in S^t$ is
chained to arm $i_*$. Since all active arms have confidence interval
width at most $\eta(t)$ and $i$ must be chained using at most $k$
arms' confidence intervals, we have that
\[\low{t}{i} \geq \up{t}{i_*} - k \cdot \eta(t).\]
Since $\mu_i \geq \low{t}{i}$ and $\up{t}{i_*} \geq \mu_{i_*}$, it
follows that $|\mu_i - \mu_{i_*}| \leq k \cdot \eta(t)$ for any
$i\in S^t$. Finally, summing up over all rounds $t\in T$, we know that
 \begin{align*}
   R(T)& \leq \sum_{t: 0}^T \min(1, k \cdot \eta(t)) + \left(1 + \frac{\pi}{2}\right)\delta T\\
         & \leq  k \left(\sum_{t: \frac{t}{k} > 2 \sqrt{t\ln\frac{2tk}{\delta}} }^T  \sqrt{\frac{\ln\frac{t}{\delta}}{\frac{t}{2k}}}+ \sum_{t: \frac{t}{k} \leq 2 \sqrt{t\ln\frac{2tk}{\delta}} }^T  1  \right) + \left(1 + \frac{\pi}{2}\right)\delta T = \tilde{O}(k^\frac{3}{2} \sqrt{T \ln \frac{kT}{\delta}} + k^3)
\end{align*}
 where this bound is derived in \app{sec:derivation}.
\end{proof}

\section{Fair Classic Stochastic Bandits: A Lower
  Bound}\label{sec:lower}

We now show that the regret bound for \fairbandits has an
optimal dependence on $k$: \emph{no} fair algorithm has diminishing
regret before $T = \Omega(k^3)$ rounds.  All missing proofs are in
\app{sec:app-lower}. The main result of this section is the following.

\begin{theorem}\label{thm:cubed-lower}
  There is a distribution $P$ over $k$-arm instances of the stochastic
  multi-armed bandit problem such that any fair algorithm run on $P$
  experiences constant per-round regret for at least 
\[ T=
  \Omega\left( k^3 \ln \frac{1}{\delta} \right)\]
 rounds.
\end{theorem}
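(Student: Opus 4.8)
The plan is to construct a hard distribution $P$ over instances that forces any fair algorithm to behave like \fairbandits---that is, to play a large set of arms uniformly for a long time because their confidence intervals are chained together. The key insight is that fairness is a per-round, per-instance constraint that binds with high probability, so I can exploit it to chain together the information-gathering requirements across all $k$ arms. I would design the instance so that arm means are tightly clustered: specifically, take a "staircase" of means where consecutive arms differ by a small gap $\Delta = \Theta(1/k)$, so $\mu_i = 1/2 + i\Delta$ for $i \in [k]$ (up to normalization). The total spread between the best and worst arm is then $\Theta(1)$, so constant per-round regret is incurred whenever the algorithm is forced to play the worst arms with nonnegligible probability.

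The heart of the argument is a chaining lower bound mirroring the upper bound's chaining structure. By $\delta$-fairness, the algorithm may play arm $j$ with strictly smaller probability than arm $i$ only once it can distinguish $\mu_j < \mu_i$ with confidence $1-\delta$; but distinguishing two arms whose means differ by $\Delta$ requires $\Omega(\Delta^{-2}\ln(1/\delta)) = \Omega(k^2 \ln(1/\delta))$ samples of \emph{each}. Crucially, fairness forbids breaking symmetry between two arms $i,i+1$ until \emph{both} have been sampled enough, and the constraint propagates transitively: to legitimately play the best arm $k$ more than the worst arm $1$, the algorithm must have separated \emph{every} adjacent pair along the chain, since if any intermediate pair $(i,i+1)$ remains statistically indistinguishable, fairness forces $\pp{t}{i}=\pp{t}{i+1}$, and a hybrid/sandwiching argument then forces near-equality all the way down the chain. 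So until all $\Theta(k)$ adjacent gaps are resolved, the algorithm is pinned to (near-)uniform play and suffers constant regret. Summing the $\Omega(k^2\ln(1/\delta))$ samples needed per gap over the $\Theta(k)$ gaps gives the $\Omega(k^3\ln(1/\delta))$ bound on the number of rounds.

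To make the "indistinguishable $\Rightarrow$ must play equally" step rigorous, I would use an information-theoretic coupling: introduce the distribution $P$ by randomizing which adjacent pairs are "real" gaps versus ties (or perturbing means within their confidence width), so that an algorithm that has not collected enough samples on a pair cannot tell, in total variation, whether $\mu_i > \mu_{i+1}$ or $\mu_i < \mu_{i+1}$. Because $\A$ is $\delta$-fair on \emph{all} instances simultaneously, it cannot favor $i$ over $i+1$ on the instance where the order is reversed; combined with the indistinguishability, this forces $\pp{t}{i} \approx \pp{t}{i+1}$ on both. A Pinsker/KL bound relates the sample count to the achievable total-variation gap, yielding the per-gap sample requirement. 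I would track, for each round $t < T$, the set of "unresolved" arms and show that its total probability mass stays $\Omega(1)$, so that the worst arms collectively absorb constant probability and hence constant per-round regret.

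\textbf{The main obstacle} I anticipate is making the transitive/chaining step fully rigorous: fairness is stated as a pairwise constraint on the \emph{realized} history with probability $1-\delta$, so I must carefully argue that unresolved pairwise ties compose into a global near-uniform distribution over the whole unresolved chain, rather than merely constraining isolated pairs. The delicate point is that an adaptive algorithm could try to resolve gaps out of order or concentrate samples cleverly; I would handle this with a potential-function or counting argument showing that, regardless of the sampling order, \emph{some} adjacent gap remains unresolved until the total sample budget $\Omega(k^3\ln(1/\delta))$ is exhausted, because there are $\Theta(k)$ gaps each demanding $\Omega(k^2\ln(1/\delta))$ samples and the fairness-induced uniform play wastes roughly a $1/k$ fraction of each round's single pull on every arm. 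Converting this sampling lower bound into a bound on rounds---accounting for the fact that each round yields only one sample---is where the extra factor of $k$ (beyond the $k^2$ per gap) enters, giving the final cubic dependence.
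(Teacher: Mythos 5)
Your proposal is correct in outline and matches the paper's construction and accounting: tightly clustered means with adjacent gaps of order $1/k$ (the paper draws each $\mu_i$ uniformly and independently from $\{\tfrac{1}{3}+\tfrac{i}{3k},\, \tfrac{1}{3}+\tfrac{i+1}{3k}\}$, so adjacent arms tie with probability $1/4$), a per-pair sample requirement of $\Omega(k^2\ln\tfrac{1}{\delta})$, and the factor-$k$ dilution from fairness-forced uniform play, yielding $T = \Omega(k^3\ln\tfrac{1}{\delta})$. The genuine difference is how you formalize ``unresolved pair $\Rightarrow$ equal play.'' You propose a frequentist two-instance coupling bounded via Pinsker/KL; the paper instead treats $P$ as a prior and reasons about the posterior: Lemma~\ref{lem:bayes-exps} (a resampling identity) licenses testing the frequentist fairness event against the posterior, Lemma~\ref{lem:distinguishing} shows that while no arm is $\sqrt{\delta}$-distinguished every adjacent pair has posterior tie probability exceeding $\delta$, and Lemma~\ref{lem:nonuniform} converts this via Markov's inequality into: with probability at least $\tfrac{1}{2}$, play is \emph{exactly} uniform over $[k]$ until some arm is distinguished. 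Both formalizations can work, with two cautions on yours. First, Pinsker is too weak to recover the $\ln\tfrac{1}{\delta}$ factor: it bounds total variation by $\sqrt{\mathrm{KL}/2}$, so it only rules out separation while $\mathrm{KL}=O(1)$, i.e.\ $m = O(k^2)$; to show that separating with confidence $1-\delta$ needs $m = \Omega(k^2\ln\tfrac{1}{\delta})$ you need a high-probability converse (Bretagnolle--Huber, or the direct likelihood-ratio/Chernoff computation the paper performs in Lemma~\ref{lem:num-obs}). Second, the ``main obstacle'' you anticipate --- adaptive, out-of-order gap resolution requiring a potential-function argument --- dissolves under the paper's formulation: any non-uniform distribution over $[k]$ must play some \emph{adjacent} pair unequally, and while no arm is distinguished this contradicts fairness, so play is pinned to exactly uniform and samples cannot be concentrated at all. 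Relatedly, your claim that playing arm $k$ above arm $1$ requires separating \emph{every} adjacent pair overstates what fairness forces (only the pairs at which the played probabilities jump must be resolved), but your final bound does not rely on it: constant per-round regret persists until the \emph{first} distinguishment, and under uniform play that alone already takes $\Omega(k^3\ln\tfrac{1}{\delta})$ rounds.
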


Despite the fact that \emph{regret} is defined in a prior-free way,
the proof of Theorem~\ref{thm:cubed-lower} proceeds via Bayesian
reasoning. We construct a family of lower bound instances such that
arms have payoffs drawn from Bernoulli distributions, denoted $B(\mu)$
for mean $\mu$. So, to specify a problem instance, it suffices to
specify a mean for each of $k$ arms: $\mu_1,\ldots,\mu_k$. The proof
formalizes the following outline.
\begin{enumerate}
\item We define an instance distribution $P = P_1 \times \ldots \times P_k$ over
  means $\mu_i$ (Definition~\ref{def:prior}).  $P$ will have two
  important properties. First, we will draw means from $P$ such that for any $i\in [k-1]$, $\mu_i = \mu_{i+1}$ with probability
  at least $1/4$. Second, for any realization of means drawn from $P$,
  if an algorithm plays uniformly at random over $[k]$, it will suffer
  constant per-round regret.

\item We treat $P_i$ as a \emph{prior} distribution over mean $\mu_i$,
  and analyze the \emph{posterior} distribution
  $P_i(\rew{1}{i}, ,\ldots,\rew{t}{i})$ over means that results after
  applying Bayes' rule to the payoff observations
  $\rew{1}{i},\ldots,\rew{t}{i}$ made by the algorithm. Bayes' rule
  implies (Lemma~\ref{lem:bayes-exps}) the joint distribution over
  rewards and means drawn from $P$ is identical to the distribution
  which first draws means according to $P$, then draws rewards
  conditioned on those means, and finally \emph{resamples} the means
  from the posterior distribution on means. Thus, we can reason about
  fairness (a frequentist quantity) by analyzing the Bayesian
  posterior distribution on means conditioned on the observed rewards.

\item A $\delta$-fair algorithm, for any set of means realized
  from the instance (prior) distribution, must not play arm $i+1$ with
 lower probability than arm $i$ if $\mu_i = \mu_{i+1}$, except with
  probability $\delta$. By the above change of perspective, therefore,
  any $\delta$-fair algorithm must play arms $i$ and $i+1$ with equal
  probability until the \emph{posterior} distribution on means given
  observed rewards, satisfies $\pr{\mu_i = \mu_{i+1}|h} < \delta$
  (Lemmas~\ref{lem:distinguishing} and~\ref{lem:nonuniform}).

\item We finally lower bound the number of reward observations
  necessary before the posterior distribution on means given payoffs
  is such that $\pr{\mu_i = \mu_{i+1}|h} < \delta$ for any pair of
  adjacent arms $i, i+1$. We show that this is $\Omega(k^2)$
  (Lemma~\ref{lem:num-obs}). Since fair algorithms must play
  from among the $k$ arms uniformly at random until this point, with
  high probability, no arm accumulates sufficiently many reward
  observations until $T = \Omega(k^3)$ rounds of play.
\end{enumerate}

We begin by describing our distribution over instances. Each arm $i$'s
payoff distribution will be Bernoulli with mean $\mu_i \sim P_i$
independently of each other arm.

\begin{definition}[Prior Distribution over $\mu_i$\label{def:prior}]
  For each arm $i$, $\mu_i$ is distributed according to the
  distribution with the following probability mass function:

\[
 P_i(x) =
  \begin{cases}
      \hfill \frac{1}{2}    \hfill & \text{ if $x =  \frac{1}{3} + \frac{i}{3k}$} \\
      \hfill \frac{1}{2} \hfill & \text{  if $x = \frac{1}{3} + \frac{i+1}{3k}$}. \\
  \end{cases}
\]

Let $P = \prod_i P_i$ denote the joint distribution on arms' expected payoffs.
\end{definition}

We treat $P$ as a prior distribution over instances, and analyze the
\emph{posterior} distribution on instances given the realized
rewards. Lemma~\ref{lem:bayes-exps} justifies this reasoning.

\begin{lemma}\label{lem:bayes-exps}
  Consider the following two experiments: In the first, let
  $\mu_i \sim P_i$ and $\rew{1}{i}, \ldots, \rew{t}{i} \sim B(\mu_i)$,
  and $W$ denote the joint distribution on
  $(\mu_i, \rew{1}{i}, \ldots, \rew{t}{i})$.  In the second, let
  $\mu_i \sim P_i$, and
  $\rew{1}{i}, \ldots, \rew{t}{i} \sim B(\mu_i)$, and then re-draw the
  mean $\mu'_i \sim P_i (\rew{1}{i}, \ldots, \rew{t}{i})$ from its
  posterior distribution given the rewards. Let
  $ ( \mu'_i, \rew{1}{i}, \ldots, \rew{t}{i})\sim W'$.  Then, $W$ and
  $W'$ are identical distributions.
\end{lemma}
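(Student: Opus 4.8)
The plan is to prove equality of the two distributions by checking that their joint probability mass functions agree at every point $(\mu,\rew{1}{i},\ldots,\rew{t}{i})$; since the prior $P_i$ is supported on two points and the rewards are Bernoulli, everything is discrete and pointwise equality of masses suffices, with no measure-theoretic subtleties. Write $\vec r = (\rew{1}{i},\ldots,\rew{t}{i})$ for the reward sequence. First I would record the mass function of $W$ directly from its generative description (draw $\mu\sim P_i$, then the rewards i.i.d.\ from $B(\mu)$): by the chain rule it is the product of prior and likelihood,
\[
  W(\mu,\vec r) \;=\; P_i(\mu)\prod_{s=1}^{t} B(\mu)(\rew{s}{i}).
\]

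Next I would analyze $W'$. The key observation is that in the second experiment the reward sequence $\vec r$ is generated in exactly the same way as in the first (draw $\mu\sim P_i$, then rewards from $B(\mu)$), and only the mean is subsequently resampled; hence the marginal law of $\vec r$ under $W'$ is identical to its marginal under $W$, namely the evidence
\[
  m(\vec r) \;=\; \sum_{\mu} P_i(\mu)\prod_{s=1}^{t} B(\mu)(\rew{s}{i}).
\]
Given $\vec r$, the resampled mean $\mu'_i$ is drawn from the posterior $P_i(\cdot\mid\vec r)$ (what the statement denotes $P_i(\rew{1}{i},\ldots,\rew{t}{i})$), which by Bayes' rule equals $P_i(\mu')\prod_{s} B(\mu')(\rew{s}{i})\,/\,m(\vec r)$. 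Assembling the joint mass of $W'$ as marginal-on-rewards times conditional-on-mean, the evidence cancels:
\[
  W'(\mu',\vec r) \;=\; m(\vec r)\cdot P_i(\mu'\mid \vec r) \;=\; P_i(\mu')\prod_{s=1}^{t} B(\mu')(\rew{s}{i}),
\]
which is precisely $W(\mu',\vec r)$ after renaming the dummy variable $\mu'$ to $\mu$. I would then conclude that the mass functions coincide, so $W$ and $W'$ are identical distributions, and note that the per-arm statement lifts to the product $P=\prod_i P_i$ by independence across arms.

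There is essentially no hard step here: the content is the standard identity that sampling a parameter from its posterior reconstitutes the prior-predictive joint. The only point requiring care is the justification that the marginal over $\vec r$ is the same in both experiments (so that the evidence term $m(\vec r)$ is legitimately the same factor that the Bayes denominator cancels); this is immediate from the fact that the reward-generation step is untouched by the resampling. I would flag this identification as the place where the argument could be stated carelessly, but it is not a genuine obstacle.
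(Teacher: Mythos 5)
Your proof is correct and takes essentially the same route as the paper: both arguments verify pointwise equality of the joint probability masses by factoring $W$ as prior times likelihood and $W'$ as the reward marginal times the Bayes posterior, using the observation that the reward-generating process is untouched by the resampling step so the two reward marginals coincide. Your write-up simply makes the evidence cancellation explicit where the paper's proof compresses it into a single appeal to Bayes' rule.
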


Next, we lower-bound the number of reward observations necessary such
that for some $i\in [k]$: $\pr{\mu_i = \mu_{i+1}|h^t} < \delta$ with
respect to the posterior. It will be useful to refer to an algorithm's
histories as \emph{distinguishing} the mean of an arm given that
history with high probability.

\begin{definition}[$\delta$-distinguishing]\label{def:distinguish}
We will say $h^t$ $\delta$-{\bf distinguishes} arm $i$ for $\A$ if, for some
  $\alpha \in [0,1]$,

\[ \Pr{\mu'_i \sim P_i(h^t) }{\mu'_i = \alpha} \geq 1 - \delta. \]
\end{definition}

The next lemma
 shows that if
no arm is $\sqrt{\delta}$-distinguished by a history, all pairs of
arms $i, i+1$ have posterior probability strictly greater than
$\delta$ of having equal means.

\begin{lemma}\label{lem:distinguishing}
  Suppose $\A$ has history $h^t$, and that  $h^t$ does not
  $\sqrt{\delta}$-distinguish any arm $i$. Then, for all arms $i, i+1$,
\[\pr{\mu_i = \mu_{i+1}|h^t} > \delta.\]
\end{lemma}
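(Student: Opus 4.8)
The plan is to exploit two structural features of the prior $P$. The first is a coincidence of supports: arm $i$'s \emph{upper} value $\tfrac13 + \tfrac{i+1}{3k}$ is exactly equal to arm $(i+1)$'s \emph{lower} value $\tfrac13 + \tfrac{(i+1)}{3k}$, while no other pair of values in the supports of $P_i$ and $P_{i+1}$ agree. Writing $c_i := \tfrac13 + \tfrac{i+1}{3k}$ for this shared value, the event $\{\mu_i = \mu_{i+1}\}$ is therefore \emph{exactly} the event that $\mu_i = c_i$ and simultaneously $\mu_{i+1} = c_i$. This reduces the lemma to lower-bounding the product of two single-arm posterior masses.

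The second feature is that the posterior over the vector of means factorizes across arms, so that $\mu_i$ and $\mu_{i+1}$ remain independent conditioned on $h^t$. The reason is that the prior is the product $\prod_i P_i$, and the likelihood of any fixed history splits as a product of per-arm reward likelihoods times an arm-choice factor that does not depend on the means (the algorithm never observes $\mu$); formally $\pr{\mu \mid h^t} \propto \prod_i L_i(\mu_i)\,P_i(\mu_i)$, where $L_i$ collects the Bernoulli likelihoods of arm $i$'s observed rewards $\rew{1}{i},\ldots$ and the choice factor is an $h^t$-measurable constant. Hence the marginal posterior on $\mu_i$ is precisely the single-arm posterior $P_i(h^t)$ of Lemma~\ref{lem:bayes-exps}, and by independence $\pr{\mu_i = \mu_{i+1}\mid h^t} = \Pr{\mu'_i \sim P_i(h^t)}{\mu'_i = c_i}\cdot \Pr{\mu'_{i+1}\sim P_{i+1}(h^t)}{\mu'_{i+1} = c_i}$.

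It then remains to translate the hypothesis. Since each $P_i$ is supported on only two points whose posterior masses sum to one, the statement that $h^t$ fails to $\sqrt{\delta}$-distinguish arm $i$ means, by Definition~\ref{def:distinguish}, that neither value carries posterior mass $\geq 1 - \sqrt{\delta}$; equivalently both values carry posterior mass strictly greater than $\sqrt{\delta}$. Applying this to arms $i$ and $i+1$ gives $\Pr{\mu'_i \sim P_i(h^t)}{\mu'_i = c_i} > \sqrt{\delta}$ and $\Pr{\mu'_{i+1}\sim P_{i+1}(h^t)}{\mu'_{i+1} = c_i} > \sqrt{\delta}$, and multiplying the two factors in the displayed identity above yields $\pr{\mu_i = \mu_{i+1}\mid h^t} > \sqrt{\delta}\cdot\sqrt{\delta} = \delta$, which is the claim.

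The one step requiring care is the posterior independence. Because the arm pulled at each round depends on the entire past history, and thus on rewards drawn from \emph{all} arms, it is not a priori obvious that the per-arm posteriors decouple. The resolution is exactly the observation above: conditioning on the full history $h^t$ fixes the realized arm-choice sequence, and the algorithm's choice probabilities are functions of the history alone and not of $\mu$, so they enter the Bayesian update as constants and never couple the arms. I would therefore isolate this factorization as a short preliminary claim — essentially the multi-arm analogue of Lemma~\ref{lem:bayes-exps} — and then assemble the three steps above into the two-line final bound.
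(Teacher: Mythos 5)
Your proposal is correct and takes essentially the same route as the paper's proof: identify the unique shared support value $\alpha = \tfrac{1}{3} + \tfrac{i+1}{3k}$, observe that failing to $\sqrt{\delta}$-distinguish either arm forces both of the two-point posterior masses strictly above $\sqrt{\delta}$, and multiply the two posterior probabilities using independence across arms. The only difference is one of rigor, in your favor: the paper justifies the product step with the one-line remark ``since $P = \prod_i P_i$,'' whereas you correctly note that posterior independence given $h^t$ is not automatic and prove it by observing that the algorithm's arm choices are functions of the history alone, so the likelihood factorizes into per-arm terms times an $h^t$-measurable constant.
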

Now, we prove that for any fair algorithm, with probability
$\geq\tfrac{1}{2}$ over the draw of histories $h^t$, $h^t$ must
$\sqrt{2\delta}$-distinguish some arm, or the algorithm must play
uniformly across all $k$ arms conditioned on $h^t$.
\begin{lemma}\label{lem:nonuniform}
  Suppose an algorithm $\A$ is $\delta$-fair. Then:

\[\Pr{h^t\sim \A}{h^t\textrm{ does not $\sqrt{2\delta}$-distinguish
      any $i$} \wedge \exists t' \leq t, i\in [k] :
    \ppc{t'}{i'}{h^{t'}} \neq \frac{1}{k}} \leq \frac{1}{2}.\]
\end{lemma}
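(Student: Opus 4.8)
The plan is to turn the frequentist fairness guarantee into a statement about the Bayesian posterior, via the resampling identity of Lemma~\ref{lem:bayes-exps}, and then lower bound the posterior violation probability on the ``bad'' histories. First I would integrate Definition~\ref{def:fair} against the prior $P$: since for every fixed mean vector the probability of a fairness violation at any round is at most $\delta$, the joint probability (over means drawn from $P$ and the resulting rewards) that there exist $t'\le t$ and arms $j,j'$ with $\ppc{t'}{j}{h^{t'}}>\ppc{t'}{j'}{h^{t'}}$ yet $\mu_j\le\mu_{j'}$ is at most $\delta$. The key point is that this violation event is jointly a function of the means and the history.

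Next I would apply Lemma~\ref{lem:bayes-exps} to replace the experiment $W$ (means, then rewards) by $W'$ (means, rewards, then \emph{resampled} means $\mu'$ from the posterior). Because each $\ppc{t'}{\cdot}{h^{t'}}$ depends only on the history and not on the means, and because $(\mu',h)$ under $W'$ has the same law as $(\mu,h)$ under $W$, the violation probability is unchanged when the means are drawn from the posterior $P(h^t)$. Conditioning on the history (whose marginal is exactly $h^t\sim\A$, since resampling does not affect the history) rewrites the bound as
\[\delta \;\ge\; \Ex{h^t\sim\A}{\Pr{\mu'\sim P(h^t)}{\exists\, t'\le t,\ j,j':\ \ppc{t'}{j}{h^{t'}}>\ppc{t'}{j'}{h^{t'}}\ \wedge\ \mu'_j\le\mu'_{j'}}}.\]

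The crux is a pointwise lower bound on the inner posterior probability for every history that (i) does not $\sqrt{2\delta}$-distinguish any arm and (ii) plays non-uniformly at some round $t'\le t$. From (ii), since a non-uniform distribution on $[k]$ must assign unequal probability to some adjacent pair $i,i+1$, we get $\ppc{t'}{i}{h^{t'}}\neq\ppc{t'}{i+1}{h^{t'}}$; whichever of the two is larger, the equality $\mu'_i=\mu'_{i+1}$ is a fairness violation for that pair at round $t'$. From (i), applying Lemma~\ref{lem:distinguishing} with $2\delta$ in place of $\delta$ gives $\pr{\mu_i=\mu_{i+1}|h^t}>2\delta$, so the posterior probability of a violation is strictly greater than $2\delta$. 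Restricting the displayed expectation to histories satisfying (i) and (ii) then yields $\delta > 2\delta\cdot\Pr{h^t\sim\A}{(\mathrm{i})\wedge(\mathrm{ii})}$, which is exactly the claimed bound of $\tfrac12$.

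I expect the main obstacle to be the bookkeeping in the resampling step: one must check that the violation event, being jointly a function of means and history, transports correctly from $W$ to $W'$, and that conditioning on $h^t$ leaves the history marginal unchanged while the means are governed by the posterior. A secondary subtlety is the factor-of-two rescaling of Lemma~\ref{lem:distinguishing} at threshold $\sqrt{2\delta}$ rather than $\sqrt{\delta}$, which is precisely what produces the $2\delta$ lower bound needed to convert the $\delta$-fairness budget into the constant $\tfrac12$.
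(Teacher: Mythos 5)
Your proposal is correct and follows essentially the same route as the paper's proof: integrating $\delta$-fairness against the prior, transporting the violation event to the posterior via Lemma~\ref{lem:bayes-exps}, reducing non-uniform play to an unequal adjacent pair, invoking Lemma~\ref{lem:distinguishing} at the $\sqrt{2\delta}$ threshold to get the $2\delta$ posterior lower bound, and concluding with an averaging step (your direct bound $\delta \geq 2\delta\cdot\Pr{h^t\sim\A}{(\mathrm{i})\wedge(\mathrm{ii})}$ is just the Markov inequality the paper states explicitly).
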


We now lower-bound the number of observations from arm $i$ which are
required to $\delta$-distinguish it.

\begin{lemma}\label{lem:num-obs}
  Fix any $\delta < \frac{1}{8}$. Let $\mu_i\sim P_i$ as in
  Definition~\ref{def:prior}. Then, arm $i$ is
  $\sqrt{2\delta}$-distinguishable by $h^t$ only if
  $T_i = \Omega(k^2\ln\frac{1}{\delta})$, where
  $T_i = | \{t' : h^{t'}_2 = i, t' \leq t\} |$ is the number of times
  arm $i$ is played.
\end{lemma}

\begin{proof}
  Write $p, p+ \frac{1}{3k}$ to represent the two possible
  realizations that $\mu_i$ might take, when drawn from the
  distribution over instances given in Definition~\ref{def:prior}. Let
  $A$ represent the event that $\mu_i = p$ and $B$ the event that
  $\mu_i = p + \frac{1}{3k}$. Let $\delta' = \sqrt{2\delta}$
  throughout.

  Fix a history $h^t$, and let $m =T _i$ represent the number of
  observations of arm $i$'s reward.  We will abuse notation and use
  $h^t_i$ to refer to the payoff sequence of arm $i$ observed in
  history $h^t$. $h^t_i$ is therefore a binary sequence of length $m$;
  let $||h^t_i||_0 = s$ denote the number of $1$s in the sequence. We
  will calculate conditions under which $h^t_i,m, s$ will imply that
  either
  $ \frac{1-\delta'}{\delta'} \leq \frac{\pr{B | h^t_i}}{\pr{A
      |h^t_i}}$
  or
  $ \frac{\pr{B | h^t_i}}{\pr{A |h^t_i}}
  \leq\frac{\delta'}{1-\delta'}$
  holds, implying that one of $A$ or $B$ has posterior probability at
  least $1-\delta'$, conditioned on the observed rewards. If neither
  of these is the case, $i$ is not $\delta'$-distinguished by $h^t$.

We begin by rearranging our definition of this ratio
$X
 =   \frac{\pr{B | h^t_i}}{\pr{A | h^t_i}} 
 =   \frac{\pr{ h^t_i | B}}{\pr{ h^t_i | A }}$,
which follows from Bayes' rule and the fact that $\pr{A} = \pr{B}$. We
wish to upper and lower bound $X$ in terms of $h^t_i$'s value.  By
definition of the Bernoulli distribution, we have that
%

\[ X   =  \frac{\pr{ h^t_i | B}}{\pr{h^t_i | A }}
  = \frac{\left(p + \frac{1}{3k}\right)^s \left(1- p - \frac{1}{3k}\right)^{m-s} }{\left(p\right)^s \left(1- p \right)^{m-s} }
 = \left(1 + \frac{1}{3kp}\right)^s \left(1 - \frac{1}{3k(1-p)}\right)^{m-s}.\]
%
 We now calculate under what conditions either (a)
 $X \leq \frac{\delta'}{1-\delta'}$, or (b)
 $X \geq \frac{1-\delta'}{\delta'}$.  One of these must hold if $i$ is
 $\delta'$-distinguished.  Before we do so, we mention that a Chernoff
 bound implies that with probability $1-\delta'$, for events $A$ and
 $B$, Equations~\ref{eqn:chernoff-upper-one}
 and~\ref{eqn:chernoff-upper}, respectively:

\begin{minipage}{0.5\linewidth}
\begin{equation}
| s -mp | \leq \sqrt{2m\ln\frac{2}{\delta'}}
   \label{eqn:chernoff-upper-one}
\end{equation}
\end{minipage}%
\begin{minipage}{0.5\linewidth}
\begin{equation}
| s -mp -\frac{m}{3k}| \leq \sqrt{2m\ln\frac{2}{\delta'}} \label{eqn:chernoff-upper}
\end{equation}
\end{minipage}

\noindent since the mean of $m$ Bernoulli trials with mean $p$ (
or $p + \frac{1}{3k}$) is $mp$ (or $mp + \frac{m}{3k}$).

We begin by analyzing case (a), where $\delta' = \sqrt{2\delta} < 1/2$ implies

\[2\delta' > \frac{\delta'}{1-\delta'} \geq
X   = \left(1 + \frac{1}{3kp}\right)^s \left(1 - \frac{1}{3k(1-p)}\right)^{m-s}.
\]
Taking logarithms on both sides, we have that

\[
\ln(2\delta') >    s\ln \left(1 + \frac{1}{3kp}\right) + (m-s)\ln \left(1 - \frac{1}{3k(1-p)}\right)
\geq s\frac{1}{3kp + 1} - (m-s)\frac{1}{3k(1-p) - 1}
\]
where the inequality follows from $\ln(1 + x) \geq \frac{x}{x+1}$ for
$x \in [-1, \infty]$. Then, this implies that
\begin{align*}
(3kp+1)(3k(1-p)-1) \ln(2\delta')
 & >    s(3k(1-p) - 1) - (m-s)(3kp+1) 
& =  3ks - 3kpm - m.
\end{align*}
Multiplying both sides by $-1$, this implies that
\[ (3kp+1)(3k(1-p)-1)  \ln\frac{1}{2\delta'} < m + 3kpm - 3ks.\]
Equation~\ref{eqn:chernoff-upper-one} implies
$|3ks - 3kmp - m| \leq m + 3k\sqrt{2m\ln\frac{2}{\delta'}}$, which
with the previous line implies
\[(3kp+1)(3k(1-p)-1) \ln\frac{1}{2\delta'} < m +
3k\sqrt{2m\ln\frac{2}{\delta'}}.\]
Since $p, 1-p \in [1/3,2/3]$ and $\delta' = \sqrt{2\delta}$, solving
for $m$ implies that $m = \Omega(k^2 \ln\frac{1}{\delta'})$.

In case (b), we have
\[
\frac{1}{2\delta'} < \frac{1-\delta'}{\delta'} \leq
X   = \left(1 + \frac{1}{3kp}\right)^s \left(1 - \frac{1}{3k(1-p)}\right)^{m-s}  \leq e^{\frac{s}{3kp}}e^{\frac{-(m-s)}{3k(1-p)}}
\]
where we used the fact that $1 + x \leq e^x$ for all $x$.
Taking logarithms, this will imply that
\begin{align}
\frac{s}{3kp} - \frac{m-s}{3k(1-p)} > \ln\frac{1}{2\delta'}  \Rightarrow
s -mp =  s(1-p)  - (m-s)p >  3kp(1-p) \ln\frac{1}{2\delta'} \geq \frac{6k}{9} \ln\frac{1}{2\delta'} \label{eqn:two-lower}\end{align}
whose last inequality comes the range of $p$.  Combining this
inequality with Equation~\ref{eqn:chernoff-upper}, this implies
\[\sqrt{2m\ln\frac{1}{2\delta'}} + \frac{m}{k} > \frac{6k}{9}\ln\frac{1}{2\delta'}\]
and solving for $m$ and substituting for $\delta'$ gives that
$m = \Omega(k^2 \ln \frac{1}{\delta})$.

Thus, if either $X \geq \frac{1-\delta'}{\delta'}$ or
$X \leq \frac{\delta'}{1-\delta'}$, it must be that
$m = \Omega(k^2 \ln \frac{1}{\delta})$.
\end{proof}

We now have the tools in hand to prove
Theorem~\ref{thm:cubed-lower}. 
\begin{proof}[Proof of Theorem~\ref{thm:cubed-lower}]
  Assume $\A$ is some $\delta$-fair algorithm where $\delta < 1/8$.
  Fix $T$; we claim that with probability at least $\frac{1}{2}$, for
  any $t = o(k^3\ln\frac{1}{\delta})$, $t\leq T$, 
  $\ppc{t}{j}{h^t} = \frac{1}{k}$ for all $j$. Since the payoff for
  uniformly random play is $\leq \frac{1}{2} + \frac{1}{k}$, while the
  best arm has payoff $\geq \frac{2}{3}$, in any round $t$ where
  $\ppc{t}{i}{h^{t}} = \ppc{t}{i'}{h^{t}}$ for all $i, i' \in [k]$,
  the algorithm suffers $\Omega(1)$ regret in that round.

  Lemma~\ref{lem:nonuniform} implies that, with probability at least
  $\frac{1}{2}$ over the distribution over histories $h^t$, either (a)
  $\ppc{t'}{i}{h^{t'}} = \ppc{t'}{i'}{h^{t'}}$ for all
  $i, i' \in [k], t' \leq t$ or (b) $h^t$ must
  $\sqrt{2\delta}$-distinguish some arm $i$. Case $(a)$ implies our
  claim. In case (b), Lemma~\ref{lem:num-obs} states than an arm $i$
  is $\sqrt{2\delta}$-distinguishable only if
  $T_i = \Omega(k^2\ln\frac{1}{\delta})$. We now argue that unless
  $t = \Omega(k^3 \ln\frac{1}{\delta})$,
  $T_i = o(k^2 \ln\frac{1}{\delta})$, which will imply our claim for
  case $(b)$.

  Fix some $i, t$. We lower-bound $t$ for which, with probability at
  least $1-\frac{\delta'}{k}$ over histories $h^t$, it will be the
  case that $\num{t}{i} \geq c \cdot k^2\ln\frac{1}{\delta}$ when
  $\ppc{t'}{i}{h^{t'}} = \ppc{t'}{i'}{h^{t'}}$ for all
  $i, i' \in [k], t' \leq t$.  Let $X_1, \ldots, X_t$ be indicator
  variables of arm $i$ being played in round $t' \leq t$.  Note that
  for all $t'\leq t$, $E[X_{t'}] = \frac{1}{k}$, since in all rounds
  prior to $t$, we have all arms are played with equal
  probability. For any $\epsilon \in [0,1]$, as $\num{t'}{i}$ are
  nondecreasing in $t'$, an additive Chernoff bound implies
%
\[ \pr{\exists t'\leq t : \num{t'}{i} \geq \frac{t}{k} + \epsilon t} \leq  \pr{\sum_{t' \leq t}X_{t'} > \frac{t}{k} + \epsilon t}\leq e^{-2 t\epsilon^2}\]
which, for $\epsilon t = \sqrt{\frac{t \ln \frac{2k}{\delta'}}{2}}$,
becomes
$ \pr{\sum_{t' \leq t}X_{t'} > \frac{t}{k} + \epsilon t}\leq
\frac{\delta'}{k}.$
So, using a union bound over all $k$ arms, with probability
$1-\delta'$, for some fixed $t$ and all $i$,
$\num{t}{i} \leq \frac{t}{k} + \sqrt{\frac{t \ln
    \frac{2k}{\delta'}}{2}}$.
We condition on the event that $\num{t}{i}$ satisfies this inequality
for a fixed $t$ and all $i$.  If
$\num{t}{i} \geq c \cdot k^2 \ln \frac{1}{\delta}$, this implies
\[ \frac{t}{k} + \sqrt{\frac{t \ln  \frac{2k}{\delta'}}{2}} \geq c \cdot k^2 \ln \frac{1}{\delta}
\quad \Rightarrow \quad t \geq - k \sqrt{\frac{t \ln  \frac{2k}{\delta'}}{2}} +  c \cdot k^3 \ln \frac{1}{\delta}.\]
If
$k \sqrt{\frac{t \ln \frac{2k}{\delta'}}{2}} \leq \frac{c}{2} \cdot
k^3 \ln \frac{1}{\delta}$,
then  $t \geq \frac{c}{2} \cdot k^3 \ln \frac{1}{\delta}$;
if not, then
$t \geq \frac{\frac{c^2}{2} k^4
  \ln^2\frac{1}{\delta}}{\ln\frac{2k}{\delta'}}$.
Thus, $\num{t}{i} < c \cdot k^2 \ln \frac{1}{\delta}$ with probability
$1-\delta'$ for all $i$ unless
$t \geq \min \left( \frac{c}{2} \cdot k^3 \ln \frac{1}{\delta},
  \frac{\frac{c^2}{2} k^4
    \ln^2\frac{1}{\delta}}{\ln\frac{2k}{\delta'}}\right)=
\Omega(k^3\ln\frac{1}{\delta})$ for $\delta' \in [\frac{1}{2}, 1]$.
\end{proof} 
\newcommand{\s}[2]{s^{#1}_{#2}}
\section{KWIK Learnability Implies Fair Bandit Learnability}\label{sec:kwiktofair}

In this section, we show if a class of functions is KWIK learnable,
then there is a fair algorithm for learning the same class of
functions in the contextual bandit setting, with a regret bound
polynomially related to the function class' KWIK bound.  Intuitively,
KWIK-learnability of a class of functions guarantees we can learn the
function's behavior to a high degree of accuracy with a high degree of
confidence.  As fairness constrains an algorithm most before the
algorithm has determined the payoff functions' behavior accurately,
this guarantee enables us to learn fairly without incurring much
additional regret. Formally, we prove the following polynomial
relationship.

\begin{theorem} \label{thm:kwiktofair} For an instance of the contextual
  multi-armed bandit problem  where $f_j\in C$ for all $j\in [k]$, if $C$ is
  $(\epsilon, \delta)$-KWIK learnable with bound
  $m(\epsilon, \delta)$, \kwikfair$(\delta, T)$ is $\delta$-fair and
  achieves regret bound:
  $$R(T) = O\left(\max\left(k^2 \cdot m\left(\epsilon^*,\frac{\min\left(\delta, 1/T\right)}{T^2
    k}\right), k^3 \ln\frac{k}{\delta}\right)\right)$$ for $\delta \leq \frac{1}{\sqrt{T}}$
  where
  $\epsilon^* = \arg\min_\epsilon(\max(\epsilon\cdot T, k\cdot
  m(\epsilon,\frac{\min(\delta, 1/T)}{kT^2}))).$
\end{theorem}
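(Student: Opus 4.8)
The plan is to construct \kwikfair by running, in parallel, one copy $\B_j$ of the given $(\epsilon,\delta)$-KWIK learner for each arm $j$, and to mimic the confidence-interval chaining of \fairbandits using the KWIK predictions in place of empirical confidence intervals. At round $t$, after observing the contexts $x_1^t,\dots,x_k^t$, I query each $\B_j$ on $x_j^t$. If $\B_j$ returns a numerical prediction $\hat y_j^t$, I assign arm $j$ the interval $[\hat y_j^t-\epsilon,\hat y_j^t+\epsilon]$ (of width $2\epsilon$, guaranteed to contain $f_j(x_j^t)$ when $\B_j$ is correct); if $\B_j$ returns $\bot$, I assign arm $j$ the trivial interval $[0,1]$. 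I then run exactly the chaining rule of \fairbandits: take the chained (transitive-closure-of-overlap) component of the arm with the largest upper endpoint, and play uniformly at random among the arms in that active set. Only the pulled arm $j^*$ receives feedback, and if $\B_{j^*}$ had answered $\bot$ its observed reward $r_{j^*}^t$ is passed to $\B_{j^*}$ as KWIK feedback, so each $\B_j$ advances only on rounds in which arm $j$ is actually pulled.

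Fairness follows essentially as in Theorem~\ref{thm:noncontextual-fair}. Running each $\B_j$ with confidence parameter $\delta' := \min(\delta,1/T)/(kT^2)$ and union-bounding over the $k$ arms (and over rounds) gives, with probability $1-\delta$, that every numerical prediction is within $\epsilon$ of its true mean, so every assigned interval contains $f_j(x_j^t)$. Conditioned on this event, the chaining argument applies verbatim: if arm $j$ is not chained to the top arm while arm $j'$ is, then $j$'s interval lies strictly below $j'$'s, and interval containment forces $f_j(x_j^t) < f_{j'}(x_{j'}^t)$; since all active arms are played with identical probability and all inactive arms with probability zero, no pair violates Definition~\ref{def:fair}. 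The choice of $\delta'$ also absorbs the union bounds needed in the regret analysis.

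For regret I split rounds into \emph{good} rounds, in which no active arm answered $\bot$, and \emph{bad} rounds, in which some active arm answered $\bot$. On a good round every active arm has interval width $2\epsilon$ and is chained to the highest-mean arm at round $t$ (which, as in the proof of Theorem~\ref{thm:noncontextual-regret}, remains active) through at most $k$ overlapping intervals, so every played arm has $f$-value within $2k\epsilon$ of $\max_j f_j(x_j^t)$, and the per-round regret is $O(k\epsilon)$; summed over $T$ rounds this contributes $O(k\epsilon T)$. To bound the bad rounds, the key observation is that an arm answering $\bot$ has interval $[0,1]$ and is therefore always active, so whenever it is $\bot$-active it is pulled with probability at least $1/k$. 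Since the KWIK bound caps the number of \emph{committed} $\bot$'s of $\B_j$ at $m(\epsilon,\delta')$, a Chernoff argument converting the $\ge 1/k$ pull probability into actual pulls shows each arm is $\bot$-active on at most $O(k\,m(\epsilon,\delta'))$ rounds with high probability (the concentration slack contributing the additive $k^3\ln(k/\delta)$ term). Summing over the $k$ arms bounds the number of bad rounds, each incurring regret at most $1$, by $O(k^2 m(\epsilon,\delta'))$. Combining yields $R(T) = O(k\epsilon T + k^2 m(\epsilon,\delta') + k^3\ln(k/\delta))$, and optimizing the free accuracy parameter $\epsilon$ — which balances $\epsilon T$ against $k\,m(\epsilon,\delta')$ and defines $\epsilon^*$ — collapses the first two terms into $O(k^2 m(\epsilon^*,\delta'))$, giving the stated bound.

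The main obstacle is the mismatch between the KWIK protocol, in which the learner must receive feedback on every $\bot$, and the bandit setting, in which only one arm is pulled per round: I need predictions for \emph{all} arms every round to make the chaining (and hence fairness) decision, yet I can legitimately supply feedback only to the single pulled learner. Resolving this requires treating the unpulled queries as non-committing peeks, relying on the KWIK accuracy guarantee for such queries relative to each learner's committed history, while the $\bot$-count bound is enforced only on committed examples — so that bad rounds are controlled not by the raw KWIK bound but by the probabilistic pulling argument above. Making this precise, namely ensuring that peeked numerical predictions remain accurate and that the Chernoff concentration for the pull counts holds uniformly over all arms and rounds, is the most delicate part of the argument.
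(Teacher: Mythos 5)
Your proposal matches the paper's proof essentially step for step: per-arm KWIK learners run at confidence $\delta^* = \min(\delta,1/T)/(kT^2)$ (with the $T^2$ union bound over history-prefix/context pairs covering the rolled-back ``peek'' queries, exactly the content of the paper's Lemma~\ref{lem:prfailure}), $\epsilon^*$-width intervals with chaining for fairness, feedback only to the pulled $\bot$-learner, a good/bad round split in which a Chernoff argument converts the $\geq 1/k$ pull probability into at most $O(k \cdot m(\epsilon^*,\delta^*))$ $\bot$-rounds per arm with $O(k^3\ln\frac{k}{\delta})$ concentration slack, and the same balancing of $\epsilon T$ against $k\cdot m(\epsilon,\delta^*)$ that defines $\epsilon^*$. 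Your one cosmetic deviation --- handling $\bot$ by assigning the trivial interval $[0,1]$ and then chaining, rather than the paper's explicit branch that plays uniformly over all $k$ arms --- is equivalent, since a $[0,1]$ interval links every arm into a single chained component, so both rules play uniformly over $[k]$ on such rounds.
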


First, we construct an algorithm
\textsc{\kwikfair}$(\delta,T)$ that uses the KWIK learning
algorithm as a subroutine, and prove that it is $\delta$-fair. A call
to \textsc{\kwikfair}$(\delta,T)$ will initialize a KWIK
learner for each arm, and in each of the $T$ rounds will implicitly
construct a confidence interval around the prediction of each learner.
If a learner makes a numeric valued prediction, we will interpret this
as a confidence interval centered at the prediction with width
$\epsilon^*$. If a learner outputs $\bot$, we interpret this as a
trivial confidence interval (covering all of $[0,1]$).  We use the same chaining technique that we use in the classic stochastic setting. In every round
$t$, \kwikfair$(\delta, T)$ identifies the arm
$\ist = \argmax_i\up{t}{i}$ that has the largest \emph{upper}
confidence bound. At each round $t$, we will say $i$ is \emph{linked}
to $j$ if
$[\low{t}{i}, \up{t}{i} ]\cap [\low{t}{j}, \up{t}{j}] \neq \emptyset$,
and $i$ is \emph{chained} to $j$ if they are in the same component of
the transitive closure of the linked relation. Then, it plays
uniformly at random amongst all arms chained to arm $\ist$. Whenever
all learners output predictions, they need no feedback. When a learner
for $j$ outputs $\bot$, \emph{if} $j$ is selected then we have
feedback $\rew{t}{j}$ to give it; on the other hand, if $j$ isn't
selected, we ``roll back'' the learning algorithm for $j$ to before
this round by not updating the algorithm's state.

\begin{center}
	\begin{algorithmic}[1]
	\Procedure{\textsc{\kwikfair}}{$\delta, T$}
 	\State $\delta^* \gets \frac{\min(\delta, \frac{1}{T})}{k T^2}$, $\epsilon^* \gets \arg\min_\epsilon(\max(\epsilon\cdot T, k\cdot
  m(\epsilon,\delta^*)))$
                \State Initialize KWIK$(\epsilon^*,\delta^*)$-learner $L_i$,  $h_i \gets [\,]$ $\forall i \in [k]$
 		\For{$1 \leq t \leq T$}
 		\State S $\gets \emptyset$ \Comment{Initialize set of predictions $S$}
 			\For{$i = 1, \ldots, k$}
 				\State $\s{t}{i} \gets L_i(x_i^t, h_i)$
 				\State $S \gets S \cup \s{t}{i}$ \Comment{Store prediction $\s{t}{i}$}
 			\EndFor
 			\If{$\perp \in S$}
 				\State Pull $j^* \gets (x \in_R [k])$, receive reward  $\rew{t}{j^*}$ \Comment{Pick arm at random from all arms}
 			\Else
				\State $\ist \gets \arg \max_i \s{t}{i}$
				\State $S^{t} \gets \{j \mid (\s{t}{j} - \epsilon^*, \s{t}{j} + \epsilon^*) \text{ chains to } (\s{t}{\ist} - \epsilon^*, \s{t}{\ist} + \epsilon^*)\}$
				\State Pull $j^* \gets (x \in_R S^t)$, receive reward  $\rew{t}{j^*}$ \Comment{Pick arm at random from active set $\s{t}{i^*}$}
 			\EndIf
                        \State $h_{j^*} \gets h_{j^*} :: (x_{{j^*}}^t, \rew{t}{j^*})$ \Comment{Update the
 history for $L_{j^*}$}
 		\EndFor
 	\EndProcedure
 	\end{algorithmic}
\end{center}

We begin by bounding the probability of certain failures of
\textsc{\kwikfair} in Lemma \ref{lem:prfailure}, proven in
\app{sec:app-missing}. This in turn lets us prove the fairness of
\kwikfair in Theorem \ref{thm:kwiktofairfixed}.

\begin{lemma}\label{lem:prfailure}
  With probability at least $1-\min(\delta, \frac{1}{T})$, for
  all rounds $t$ and all arms $i$, (a) if $\s{t}{i} \in \R$ then
  $|\s{t}{i} - f_i(x^t_i)| \leq \epsilon^*$ and (b)
  $\sum_{t} \I{\s{t}{i} = \bot \text{ and $i$ is pulled}}
  \leq m(\epsilon^*, \delta^*)$.
\end{lemma}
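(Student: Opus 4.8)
The plan is to treat each per-arm learner \(L_i\) in isolation, invoke its \((\epsilon^*,\delta^*)\)-KWIK guarantee, and then combine the \(k\) guarantees by a union bound. The conceptual core is a reduction: I would argue that the way \kwikfair drives \(L_i\) — querying it for a prediction \(\s{t}{i}\) on context \(x_i^t\) in every round, but supplying feedback (and advancing its internal state) only on rounds where \(\s{t}{i} = \bt\) \emph{and} arm \(i\) is the arm pulled — is indistinguishable from running \(L_i\) as a stand-alone KWIK learner on a single (adversarially presented) sequence of examples. Once this equivalence is established, both KWIK properties in the definition transfer verbatim to \(L_i\) as used inside \kwikfair.

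Concretely, fix \(i\) and form the ``virtual'' example sequence obtained from the trace of \(L_i\) by deleting exactly the rounds in which \(L_i\) outputs \(\bt\) but \(i\) is not selected. On precisely those rounds \kwikfair rolls \(L_i\) back, so its state is unchanged; deleting them therefore does not alter the prediction \(L_i\) makes on any surviving round, and the surviving trace is a legal KWIK run. Indeed, each numeric prediction is issued with no feedback, and each surviving \(\bt\) (the rounds where \(\s{t}{i}=\bt\) and \(i\) is pulled) is accompanied by a sample \(\rew{t}{i}\) with \(\E{\rew{t}{i}} = f_i(x_i^t)\), which is exactly the feedback the KWIK model supplies. (On a round where \(L_i\) makes a numeric prediction and \(i\) is nonetheless pulled we may take \(L_i\) to ignore the logged reward, as the KWIK interface only consumes feedback it requested; this keeps the virtual run faithful.) Applying the KWIK definition to this run yields, with probability at least \(1-\delta^*\): every numeric prediction satisfies \(\s{t}{i}\in[f_i(x_i^t)-\epsilon^*,\,f_i(x_i^t)+\epsilon^*]\), which is part (a) for arm \(i\); and the total number of \(\bt\)-outputs in the virtual run is at most \(m(\epsilon^*,\delta^*)\). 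Since those are exactly the rounds counted by \(\sum_t \I{\s{t}{i}=\bt\text{ and } i \text{ pulled}}\), this gives part (b). The rolled-back \(\bt\)-rounds never enter the count, which is the reason (b) bounds only the pulled-and-\(\bt\) rounds rather than all \(\bt\) rounds.

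Finally I would union-bound over the \(k\) arms. Each \(L_i\) violates (a) or (b) with probability at most \(\delta^*\), so the probability that any arm fails is at most \(k\delta^* = \min(\delta,1/T)/T^2 \le \min(\delta,1/T)\), using \(T\ge 1\). Hence with probability at least \(1-\min(\delta,1/T)\), both (a) and (b) hold simultaneously for all arms \(i\) and all rounds \(t\), as claimed. The main obstacle — and the only place demanding genuine care — is the reduction of the previous paragraph: justifying that interleaving unfed, rolled-back \(\bt\)-queries with genuinely-fed \(\bt\)-queries produces a valid KWIK trace, so that the KWIK bound legitimately constrains only the fed \(\bt\)-rounds and the numeric-accuracy guarantee applies at every reachable state. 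The probabilistic bookkeeping and the arithmetic on \(\delta^*\) are then routine.
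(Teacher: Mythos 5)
Your proof is correct in substance, but it takes a genuinely different route from the paper's. The paper does not stitch the queries to $L_i$ into a single KWIK run; instead it union bounds over the at most $T^2$ distinct (history, context) pairs on which $L_i$ can be queried in one execution ($T$ prefixes of its final fed history times $T$ contexts), charges each such query a failure probability of $\delta^*$ via the KWIK guarantee, and then union bounds over the $k$ arms, giving $k T^2 \delta^* = \min(\delta, 1/T)$. This is exactly why $\delta^*$ is set to $\min(\delta,1/T)/(kT^2)$: the $T^2$ factor is consumed by the paper's per-query union bound. Your stitching argument --- delete the rolled-back, unfed $\bt$-rounds, note the learner's state is unchanged, and invoke the KWIK guarantee once per learner --- spends only $k\delta^* = \min(\delta,1/T)/T^2$ of the budget, so when it goes through it proves a strictly stronger statement. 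What your route buys in tightness it pays for in hypotheses that the paper's cruder bound avoids leaning on: first, the KWIK guarantee must hold against an \emph{adaptive} adversary, since the virtual sequence fed to $L_i$ depends on realized rewards, on the other learners, and --- through which rounds get deleted --- on $L_i$'s own outputs on rolled-back queries; second, rollback must be probabilistically faithful, i.e.\ $L_i$ must behave as a deterministic function of the fed history (or draw fresh independent randomness per call), since otherwise the surviving trace is conditioned on the realized $\bt$'s of deleted queries and need not be distributed as a stand-alone run. In the paper's implementation $L_i$ is literally invoked as a function of $(x_i^t, h_i)$ with the history maintained externally, so the second hypothesis is effectively built in, and standard KWIK guarantees are stated for adaptively chosen sequences; under that reading your single-run reduction is sound, and your identification of the reduction as the one step demanding care is accurate. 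Your side remark about rounds where $\s{t}{i}$ is numeric but arm $i$ is pulled is also well taken: the pseudocode appends $(x_{j^*}^t, \rew{t}{j^*})$ to $h_{j^*}$ unconditionally, so stipulating that the learner ignores unsolicited feedback (matching the prose ``whenever all learners output predictions, they need no feedback'') is needed for the run to be KWIK-legal --- a point the paper's own proof silently glosses over.
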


\begin{theorem} \label{thm:kwiktofairfixed} \textsc{\kwikfair}$(\delta,T)$ is
  $\delta$-fair.
\end{theorem}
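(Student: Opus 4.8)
The plan is to follow the proof of Theorem~\ref{thm:noncontextual-fair} essentially verbatim, replacing the role of Lemma~\ref{lem:intervals} (confidence intervals contain the true means) with part (a) of Lemma~\ref{lem:prfailure} (numeric predictions are $\epsilon^*$-accurate). First I would condition on the event of Lemma~\ref{lem:prfailure}(a), which holds with probability at least $1 - \min(\delta, 1/T) \geq 1-\delta$; note that only part (a) is needed here, part (b) being reserved for the regret bound. On this event, whenever a learner outputs a numeric prediction $\s{t}{i} \in \R$ we have $f_i(x^t_i) \in [\s{t}{i} - \epsilon^*, \s{t}{i} + \epsilon^*] = \interval{t}{i}$, so the true payoff of every arm lies inside the confidence interval the algorithm constructs around it --- exactly the property that drove the classic proof.

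Given this, I would argue fairness round-by-round via a case split on the two branches of \kwikfair. In the branch where some learner outputs $\bot$, the algorithm pulls an arm uniformly at random from all of $[k]$, so $\ppc{t}{j}{h} = 1/k$ for every arm and the premise $\ppc{t}{j}{h} > \ppc{t}{j'}{h}$ never holds, making fairness vacuous. In the other branch, the algorithm plays uniformly over the active set $S^t$, so $\ppc{t}{j}{h} = 1/|S^t|$ for $j \in S^t$ and $\ppc{t}{j}{h} = 0$ otherwise. Thus $\ppc{t}{j}{h} > \ppc{t}{j'}{h}$ can occur only when $j \in S^t$ and $j' \notin S^t$, and it suffices to show that in this case $f_j(x^t_j) > f_{j'}(x^t_{j'})$ --- pairs lying both inside or both outside $S^t$ are played with equal probability and never bind.

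The crux is then the structural claim that every arm outside the active set has strictly smaller true payoff than every arm inside it, which is where I expect the only real work. The argument mirrors the one in Theorem~\ref{thm:noncontextual-fair}: since $S^t$ is a connected component of the linked (overlapping-interval) relation, the union $\bigcup_{m \in S^t} \interval{t}{m}$ is a single interval $[L, U]$ with $U = \up{t}{\ist} = \max_i \up{t}{i}$ (the chosen arm $\ist = \argmax_i \s{t}{i}$ has the globally largest upper confidence bound, as all intervals share width $2\epsilon^*$). For $j' \notin S^t$, the interval $\interval{t}{j'}$ cannot meet $[L,U]$ --- otherwise it would intersect some member interval and $j'$ would be chained to $\ist$ --- and since $\up{t}{j'} \leq U$, disjointness forces $\up{t}{j'} < L \leq \low{t}{j}$ for every $j \in S^t$. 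Invoking the conditioning event, $f_{j'}(x^t_{j'}) \leq \up{t}{j'} < \low{t}{j} \leq f_j(x^t_j)$, the desired strict inequality. Combining the two branches yields that, on the single event of probability at least $1-\delta$ supplied by Lemma~\ref{lem:prfailure}(a), the condition of Definition~\ref{def:fair} holds at every round $t$ and for every pair $j, j'$, so \kwikfair$(\delta,T)$ is $\delta$-fair. The main obstacle is pinning down this interval geometry precisely --- that a component's union is a single interval and that an outside arm's interval is disjoint from it and lies entirely to its left --- after which the conclusion is immediate.
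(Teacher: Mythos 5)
Your proposal is correct and follows essentially the same route as the paper's proof: condition on the accuracy guarantee of Lemma~\ref{lem:prfailure}, treat the $\bot$ branch as vacuous since all arms are played uniformly, and use the chained-interval structure to compare arms inside and outside $S^t$. The only difference is cosmetic --- the paper proves the contrapositive ($f_i(x^t_i) \geq f_j(x^t_j)$ implies $j$ chains to the top arm only if $i$ does, hence $\pi^t_{i\mid h} \geq \pi^t_{j\mid h}$), while you spell out the equivalent interval geometry (the component's union $[L,U]$ with $U = \up{t}{\ist}$ the global maximum, forcing $\up{t}{j'} < L \leq \low{t}{j}$ for excluded arms) in somewhat more explicit detail than the paper's terse chaining claim.
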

\begin{proof}[Proof of Theorem~\ref{thm:kwiktofairfixed}]
  We condition on both (a) and (b) holding for all arms $i$ and rounds
  $t$ from Lemma~\ref{lem:prfailure}, which occur with probability
  $1-\delta$ for all arms and all times $t$.  Therefore, we proceed by
  conditioning on the event that for all arms $i$ and all rounds $t$,
  if $L_i = \s{t}{i}$ for $\s{t}{i} \neq \bt$ then
  $|\s{t}{i} - f_i(x^t_i)| \leq \epsilon^*$.  Having done so, there are
  two possibilities for each round $t$.

  In case 1, for each $i$ we have that
  $L_i(x^t_i)= \s{t}{i}\neq \bot$. By the condition above, for any
  arms $i$ and $j$, $f_i(x_i^t) \geq f_j(x_j^t)$ implies that
  $\s{t}{i} + \epsilon^* \geq \s{t}{j} - \epsilon^*$. Since in this case
  no learner outputs \bt, arm $j$ chains to the top arm only if arm
  $i$ does. Therefore $\pi_{i \mid h}^t \geq \pi_{j \mid h}^t$. In case 2,  there exists some $i$ such that
  $L_i(x^t_i) = \bt$. Then we choose uniformly at random across all
  arms, so $\pi_{i \mid h}^t = \pi_{j \mid h}^t$ for all $i$ and
  $j$.

Thus, with probability at least $1 -\delta$, for each round $t$,
$f_i(x_i^t) \geq f_j(x_j^t)$ implies that
$\pi_{i \mid h}^t \geq \pi_{j \mid h}^t$. 
\end{proof}

We now use the KWIK bounds of the KWIK learners to upper-bound the
regret of \textsc{\kwikfair}$(\delta,T)$.
\begin{lemma} \label{lem:kwiktofairfixedregret} \textsc{\kwikfair}$(\delta,T)$
  achieves regret
  $O(\max(k^2 \cdot m(\epsilon^*,\delta^*), k^3\ln\frac{Tk}{\delta}))$.
\end{lemma}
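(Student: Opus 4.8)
The plan is to condition on the high-probability event of Lemma~\ref{lem:prfailure}---that every numeric prediction $\s{t}{i}$ satisfies $|\s{t}{i}-f_i(x^t_i)|\le\epsilon^*$ and that each arm is pulled while outputting $\bot$ at most $m(\epsilon^*,\delta^*)$ times---and then to split the $T$ rounds into two classes and bound the regret each contributes. Call a round a \emph{$\bot$-round} if some learner outputs $\bot$ (so the algorithm pulls uniformly over all of $[k]$), and a \emph{confident round} otherwise (so the algorithm chains the width-$2\epsilon^*$ intervals and plays uniformly over the active set). Since rewards lie in $[0,1]$, the total regret is at most the number of $\bot$-rounds plus the summed per-round regret over the confident rounds, and I would bound these two pieces separately.

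For the confident rounds I would reuse the chaining analysis from the proof of Theorem~\ref{thm:noncontextual-regret}, now with fixed-width intervals. Conditioned on accuracy, $f_{i^*}(x^t_{i^*})\in[\s{t}{i^*}-\epsilon^*,\s{t}{i^*}+\epsilon^*]$, so by exactly the argument used there the optimal arm $i^*$ is always chained to $\ist$ and hence active. Because any chain uses at most $k$ intervals, each of width $2\epsilon^*$, every active arm $j$ satisfies $\low{t}{j}\ge \up{t}{\ist}-2k\epsilon^* \ge f_{i^*}(x^t_{i^*})-2k\epsilon^*$, so each active arm has true value within $2k\epsilon^*$ of the optimum. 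As the algorithm plays the active set uniformly, the per-round regret of a confident round is at most $2k\epsilon^*$, and summing over at most $T$ rounds gives a contribution of $O(k\epsilon^* T)$.

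The main work is bounding the number of $\bot$-rounds, and here I would argue per arm. Fix $i$ and let $U_i$ be the number of $\bot$-rounds in which $L_i$ outputs $\bot$. In each such round the uniform pull selects $i$ independently with probability $1/k$ (regardless of $i$'s $\bot$-status), while Lemma~\ref{lem:prfailure} caps at $m(\epsilon^*,\delta^*)$ the rounds in which $i$ is both $\bot$ and pulled. A Chernoff lower-tail bound on the number of pulls among $U_i$ such trials then forces $U_i/k-\sqrt{(U_i/2)\ln\tfrac{1}{\delta^*}}\le m(\epsilon^*,\delta^*)$ with high probability; solving the resulting quadratic in $\sqrt{U_i}$ yields $U_i=O\!\left(k\,m(\epsilon^*,\delta^*)+k^2\ln\tfrac{Tk}{\delta}\right)$, using that $\ln\tfrac{1}{\delta^*}=O(\ln\tfrac{Tk}{\delta})$. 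Every $\bot$-round has at least one arm outputting $\bot$, so the number of $\bot$-rounds is at most $\sum_i U_i$; this union bound over the $k$ arms is precisely what turns the per-arm $k^2\ln$ slack into the global $k^3\ln\tfrac{Tk}{\delta}$ term, giving a $\bot$-round count (and hence regret contribution) of $O\!\left(k^2 m(\epsilon^*,\delta^*)+k^3\ln\tfrac{Tk}{\delta}\right)$.

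Finally I would combine the two contributions, $O(k\epsilon^* T)$ and $O\!\left(k^2 m(\epsilon^*,\delta^*)+k^3\ln\tfrac{Tk}{\delta}\right)$, and invoke the definition of $\epsilon^*$ as the minimizer of $\max(\epsilon T, k\,m(\epsilon,\delta^*))$: at this balance point $\epsilon^* T \asymp k\,m(\epsilon^*,\delta^*)$, so $k\epsilon^* T = O(k^2 m(\epsilon^*,\delta^*))$ and the confident-round cost is absorbed into the first term, yielding $R(T)=O\!\left(\max\!\left(k^2 m(\epsilon^*,\delta^*),\,k^3\ln\tfrac{Tk}{\delta}\right)\right)$. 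I expect the delicate step to be the $\bot$-round count: one must treat the rollback correctly (a learner that is not pulled retains its state and may output $\bot$ again on the next, different context, so it is \emph{$\bot$-outputs}, not distinct learner states, that are being counted) and account for the fact that in a $\bot$-round a given $\bot$ arm is sampled only with probability $1/k$, which is exactly the source of the extra factor of $k$ relative to the classic analysis.
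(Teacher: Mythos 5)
Your argument is correct and essentially identical to the paper's own proof: the same conditioning on Lemma~\ref{lem:prfailure}, the same split into $\bot$-rounds (regret at most $1$, counted via a per-arm Chernoff bound using the probability-$\frac{1}{k}$ uniform pull against the KWIK cap $m(\epsilon^*,\delta^*)$ on fed-back $\bot$'s) and confident rounds (per-round regret $2k\epsilon^*$ from chaining at most $k$ width-$2\epsilon^*$ intervals), and the same final balancing via the definition of $\epsilon^*$ to absorb $k\epsilon^* T$ into $k^2 m(\epsilon^*,\delta^*)$. Your direct quadratic solve for $U_i$ replaces the paper's two-case analysis (contradiction for $c>4$ versus $k\ln\frac{2}{\delta'} > m$) but yields the same $O\left(k^2 m(\epsilon^*,\delta^*) + k^3\ln\frac{Tk}{\delta}\right)$ count of $\bot$-rounds, so this is a cosmetic rather than substantive difference.
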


\begin{proof}
  We first condition on the event that both (a) and (b) from
  Lemma~\ref{lem:prfailure} hold for all $t,i$, which holds with
  probability $1-\min(\delta, \frac{1}{T})$, and bound the
  regret when they both hold.  Choose an arbitrary round $t$ in the
  execution of \textsc{\kwikfair}$(\delta,T)$. As above, there are two
  cases. In the first case, $L_i(x^t_i) = \s{t}{i} \neq \bt$ for all
  $i$ and we choose uniformly at random from the arms chained by
  $\epsilon^*$-intervals to the arm with the highest prediction. Since
  we have conditioned on the event that all KWIK learners are correct,
  $i^* \in S^t$.  Furthermore, for any $i, j\in S^t$, we have that
  $|\s{t}{i} - \s{t}{j}| \leq 2k\epsilon^*$, and in particular that
  $|\s{t}{i} - \s{t}{i^*}| \leq 2k\epsilon^*$. Thus, the regret is at
  most $2k\epsilon^*$ in such a round.  In the second case some arm
  outputs \bt, so we choose randomly from all $k$ arms, and the
  worst-case regret is 1. Thus, the total regret will be at most
  $2k\epsilon^* T + n + \delta T$ where $n$ is the number of rounds in
  which some $L_i$ outputs \bt.

  We now upper bound $n_i$, the number of rounds in which arm $i$
  outputs \bt. Fix some arm $i$ which outputs \bt in $n_i$ rounds.
  Arm $i$ is played and therefore receives feedback every time it
  outputs $\bot$ with probability at least $1/k$. Thus, using a
  Chernoff bound, with probability $1-\delta'$, arm $i$ receives
  feedback for $n_i$ outputs of $\bot$ in at least
  $\frac{n_i}{k} - \sqrt{2n_i \ln \frac{2}{\delta'}}$ rounds. $L_i$
  has the guarantee that there can be at most $m(\epsilon^*, \delta^*)$
  many such rounds (in which it outputs $\bot$ \emph{and} receives feedback).
  Thus,
\[ m(\epsilon^*, \delta^*) \geq \frac{n_i}{k} -
  \sqrt{2n_i \ln \frac{2}{\delta'}}.\]
  If $n_i \geq ck \cdot m(\epsilon^*, \delta^*)$, this implies
  \[ m(\epsilon^*, \delta^*) \geq c \cdot m(\epsilon^*,
  \delta^*) - \sqrt{2 ck \cdot m(\epsilon^*, \delta^*) \ln
    \frac{2}{\delta'}}.\] We now analyze cases in which (1)
  $k \ln \frac{2}{\delta'} \leq m(\epsilon^*, \delta^*)$ and (2)
  $k \ln \frac{2}{\delta'} > m(\epsilon^*, \delta^*)$.

Case (1) this
  implies
\[ m(\epsilon^*, \delta^*) \geq c \cdot m(\epsilon^*, \delta^*) -
 \sqrt{2 c} \cdot m(\epsilon^*, \delta^*).\]
 For $c > 4$, this leads to contradiction. Thus, in this case, if we
 set $\delta' = \frac{\delta}{k}$, we know that with probability
 $1-\delta$, $n_i \leq 4k \cdot m(\epsilon^*, \delta^*)$ which summing
 up over all $i$ implies
 $\sum_i n_i \leq 4k^2 \cdot m(\epsilon^*, \delta^*)$, as desired.

In case (2), we have that
\[ \ln \frac{2}{\delta'} \geq \frac{n_i}{k} - \sqrt{2n_i \ln
  \frac{2}{\delta'}}\]
which solving for $n_i$ implies that
$n_i = O(k^2 \ln\frac{1}{\delta'})$, so
$\sum_i n_i = O(k^3 \ln\frac{k}{\delta})$ by setting
$\delta' = \frac{\delta}{k}$ and taking a union bound.  Thus, there
are at most
$n = O(\max (k^2 \cdot m(\epsilon,\delta^*), k^3 \ln\frac{k}{\delta}))$
rounds in expectation during the execution of \textsc{\kwikfair}$(\delta,T)$
in which some arm outputs \bt.

Combining both cases, the total regret incurred by
\textsc{\kwikfair}$(\delta,T)$ across all $T$ rounds is
\begin{align*}
  R(T) = 4 k^2 \cdot m(\epsilon^*, \delta^*) +  k^3\ln\left(\frac{k}{\delta}\right) + T\cdot 2k\epsilon^* + T \cdot \min(\delta, \frac{1}{T})
	 = O(\max(k^2 \cdot m(\epsilon^*,\delta^*),k^3\ln\frac{k}{\delta})).
\end{align*}
\end{proof}

Our presentation of \textsc{\kwikfair}$(\delta,T)$ has a
known time horizon $T$. Its guarantees extend to the case in which $T$
is unknown via the standard ``doubling trick'' to prove Theorem
\ref{thm:kwiktofair} in \app{sec:app-missing}.

An important instance of the contextual bandit problem is the linear
case, where $C$ consists of the set of all \emph{linear} functions of
bounded norm in $d$ dimensions.  This captures the natural setting in
which the rewards of each arm are governed by an underlying linear
regression model on a $d$-dimensional real valued feature space. The
linear case is well studied, and there are known KWIK
algorithms~\citep{strehl2008online} for the set of linear functions
$C$, which allows us via our reduction to give a fair contextual
bandit algorithm for this setting with a polynomial regret bound.

\begin{lemma}[\citep{strehl2008online}] \label{lem:kwiktofairlinear}
  Let
  $C = \{f_\theta| f_\theta(x) = \langle \theta, x\rangle, \theta\in
  \R^d, ||\theta|| \leq 1\}$
  and $\cX = \{x \in \mathbb{R}^d : ||x|| \leq 1\}$.  $C$
  is KWIK learnable with KWIK bound
  $m(\epsilon,\delta) = \tilde O(d^3/\epsilon^4)$.
\end{lemma}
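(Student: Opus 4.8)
The plan is to construct an explicit KWIK learner for $C$ and to verify the two requirements of the KWIK definition separately: that its numerical predictions are $\epsilon$-accurate, and that it outputs $\bt$ at most $m(\epsilon,\delta)$ times. The learner maintains a summary of the examples on which it has already requested feedback, and predicts on a new example exactly when that example is adequately ``covered'' by past, well-sampled examples; otherwise it declares $\bt$ and collects a noisy label. Concretely, I would keep the regularized design matrix $A_t = I + \sum_{s} x^s (x^s)^{\top}$, where the sum runs over the rounds $s<t$ on which the learner output $\bt$ and received feedback $y^s$, together with the ridge estimate $\hat\theta_t = A_t^{-1}\sum_s x^s y^s$. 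On input $x^t$, the learner computes the uncertainty $w_t = \sqrt{(x^t)^{\top} A_t^{-1} x^t}$ and predicts $\hat y^t = \langle \hat\theta_t, x^t\rangle$ if $w_t$ is below a threshold $\tau$, and outputs $\bt$ otherwise.

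For the accuracy requirement I would decompose the prediction error $|\langle\hat\theta_t, x^t\rangle - \langle\theta, x^t\rangle|$ into a deterministic bias term arising from the regularizer (bounded using $\|\theta\|\le 1$ and $\|x^t\|\le 1$) and a stochastic term arising from the observation noise $y^s - \langle\theta, x^s\rangle$. Both are controlled by the same quadratic form $w_t$: a martingale (self-normalized) concentration inequality shows that, with probability $1-\delta$ simultaneously over all rounds, the error is at most $\beta(\delta)\, w_t$, where $\beta(\delta)$ hides a factor polynomial in $d$ and logarithmic in $1/\delta$ and the round index. Setting $\tau = \epsilon/\beta(\delta)$ then guarantees that whenever the learner predicts, its output lies in $[f_\theta(x^t)-\epsilon, f_\theta(x^t)+\epsilon]$, giving the first KWIK property.

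For the $\bt$-count I would run a volumetric (elliptical-potential) argument. A $\bt$ occurs only when $w_t^2 = (x^t)^{\top}A_t^{-1}x^t > \tau^2$, and in that case the rank-one update $A_{t+1} = A_t + x^t(x^t)^{\top}$ multiplies $\det A_t$ by $1 + w_t^2 > 1+\tau^2$; meanwhile $\det A_t$ starts at $1$ and, since $\|x^t\|\le 1$, can grow to at most $(1+m)^d$ after $m$ stored examples. Hence the number $m$ of $\bt$ rounds satisfies $(1+\tau^2)^m \le (1+m)^d$, which I would solve for $m$ and then combine with $\tau = \epsilon/\beta(\delta)$ to obtain a bound polynomial in $d$ and $1/\epsilon$; the precise exponents $\tilde{O}(d^3/\epsilon^4)$ are exactly those established by \citet{strehl2008online}, whose analysis I would invoke for the constants.

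The step I expect to be the main obstacle is the accuracy guarantee, and in particular obtaining a bound $\beta(\delta)$ that holds uniformly over \emph{all} rounds when the horizon is a priori unbounded, while accounting for the fact that feedback is gathered only on $\bt$ rounds, so the stored examples $\{x^s\}$ form an adaptively chosen, data-dependent design. It is this adaptivity together with the noisy observations that forces a self-normalized concentration argument rather than a fixed-design bound, and it is the interaction between the $d$ effective directions of the feature space, the $\Theta(1/\epsilon^2)$ noisy samples needed to resolve each direction, and the blow-up in the coefficients required to express a new point in terms of stored ones that is responsible for the polynomial dependence on $d$ and $1/\epsilon$ recorded in the stated KWIK bound.
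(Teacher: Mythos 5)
The paper contains no proof of this lemma: it is imported wholesale from \citet{strehl2008online}, so the only meaningful comparison is with that paper's argument --- and yours is genuinely different. Strehl and Littman's KWIK linear-regression algorithm is not a ridge estimator with an uncertainty threshold; it stores the inputs from $\bot$-rounds, takes an SVD of the resulting design matrix, and predicts only when the new input is adequately covered by the stored data as measured by two SVD-derived norm quantities, outputting $\bot$ otherwise, and it is their fixed-design-flavored analysis of those quantities that produces the exponents in $\tilde O(d^3/\epsilon^4)$. Your route --- a regularized least-squares estimate, a self-normalized martingale confidence bound $\beta(\delta)=\tilde O(\sqrt{d\log(m/\delta)})$ valid uniformly over an unbounded horizon, and the elliptical-potential determinant argument $(1+\tau^2)^m \le (1+m)^d$ --- is the now-standard confidence-ellipsoid machinery, and it is sound: the self-normalized bound is exactly the right tool for the adaptively chosen, feedback-only-on-$\bot$ design you correctly identify as the crux, and bounded noise in $[0,1]$ is subgaussian, so the accuracy step goes through. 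What each approach buys: theirs is the historical source of the stated bound; yours, carried to completion, gives $m=\tilde O(d/\tau^2)=\tilde O(d\beta^2/\epsilon^2)=\tilde O(d^2/\epsilon^2)$, which is strictly stronger.

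That strength, however, exposes the one genuine slip in your write-up: the claim that ``the precise exponents $\tilde O(d^3/\epsilon^4)$ are exactly those established by \citet{strehl2008online}, whose analysis I would invoke for the constants.'' Their exponents do not come out of your analysis, and you cannot graft their constants onto a different algorithm; the two arguments are not interchangeable at the quantitative step. This does not endanger the lemma --- it asserts only an upper bound, and $\tilde O(d^2/\epsilon^2)$ implies $\tilde O(d^3/\epsilon^4)$ --- but you should either finish your own analysis (proving something stronger) or cite the source outright as the paper does, not splice the two. If you finish it yourself, close one remaining loose end: $\beta(\delta)$ depends on $\log\det A_t$ and hence on the eventual number $m$ of stored samples, so the threshold $\tau=\epsilon/\beta(\delta)$ is circularly defined; resolve this via the a priori bound $\log\det A_t \le d\log(1+m)$ together with a fixed-point or doubling argument, which costs only logarithmic factors.
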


Then, an application of Theorem~\ref{thm:kwiktofair} implies that
\kwikfair has a polynomial regret guarantee for the class of
linear functions.  This proof can be found in \app{sec:app-missing}.

\begin{corollary} \label{cor:kwiktofairlinear} Let $C$ and $\cX$ be as
  in Lemma \ref{lem:kwiktofairlinear}, and $f_j \in C$ for each
  $j\in [k]$.  Then, \textsc{\kwikfair}$(T, \delta)$ using the learner from
  \citep{strehl2008online} has regret:
  $$R(T) = \tilde O\left(\max \left(T^{4/5}k^{6/5}d^{3/5} ,
  k^3\ln\frac{k}{\delta}\right)\right).$$
\end{corollary}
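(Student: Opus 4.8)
The plan is to invoke Theorem~\ref{thm:kwiktofair} directly and substitute the KWIK bound $m(\epsilon,\delta) = \tilde{O}(d^3/\epsilon^4)$ guaranteed by Lemma~\ref{lem:kwiktofairlinear} for the class $C$ of bounded-norm linear functions. Because $C$ is $(\epsilon,\delta)$-KWIK learnable, \kwikfair is $\delta$-fair and satisfies the regret expression of Theorem~\ref{thm:kwiktofair}; the only work remaining is to specialize that expression by carrying out the scalar optimization defining $\epsilon^*$.

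First I would determine $\epsilon^*$. By definition $\epsilon^* = \arg\min_\epsilon \max\bigl(\epsilon T,\, k\cdot m(\epsilon,\delta^*)\bigr)$ with $\delta^* = \min(\delta,1/T)/(kT^2)$. Substituting $k\cdot m(\epsilon,\delta^*) = \tilde{O}(kd^3/\epsilon^4)$, the first argument is increasing in $\epsilon$ and the second is decreasing, so the minimax is attained (up to logarithmic factors) at the point where they coincide: $\epsilon T = \tilde{\Theta}(kd^3/\epsilon^4)$, i.e. $\epsilon^5 = \tilde{\Theta}(kd^3/T)$. This gives
\[ \epsilon^* = \tilde{\Theta}\!\left( \left(\frac{kd^3}{T}\right)^{1/5} \right). \]

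Next I would substitute $\epsilon^*$ into the regret of Theorem~\ref{thm:kwiktofair}. Evaluating the dominant term,
\[ k^2\cdot m(\epsilon^*,\delta^*) = \tilde{O}\!\left(\frac{k^2 d^3}{(\epsilon^*)^4}\right) = \tilde{O}\!\left(k^2 d^3 \cdot \frac{T^{4/5}}{k^{4/5}d^{12/5}}\right) = \tilde{O}\!\left(k^{6/5}d^{3/5}T^{4/5}\right), \]
while the second argument of the maximum, $k^3\ln(k/\delta)$, is unchanged. Taking the maximum yields the claimed bound $R(T) = \tilde{O}(\max(T^{4/5}k^{6/5}d^{3/5},\, k^3\ln(k/\delta)))$. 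It is worth noting that the term $2kT\epsilon^*$ appearing in the raw regret of Lemma~\ref{lem:kwiktofairfixedregret} equals $2k\cdot k\,m(\epsilon^*,\delta^*)$ precisely because $\epsilon^*$ was chosen to balance $\epsilon T$ against $k\,m(\epsilon,\delta^*)$; this is why it is already folded into the $k^2 m$ term of the theorem statement.

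The one point requiring care is the confidence parameter. Lemma~\ref{lem:kwiktofairlinear} hides inside its $\tilde{O}$ a polylogarithmic dependence on $1/\delta$, and we invoke it at the very small value $\delta^* = \min(\delta,1/T)/(kT^2)$. I would verify that $\ln(1/\delta^*) = O(\ln(Tk/\delta))$ remains polylogarithmic in $T$, $k$, and $1/\delta$, so that it is absorbed into the $\tilde{O}$ notation without disturbing any polynomial exponent. This bookkeeping is the main (if mild) obstacle, since it is exactly what certifies that the final regret depends only polynomially on $d$ and $k$. Given it, the corollary follows immediately from Theorem~\ref{thm:kwiktofair}.
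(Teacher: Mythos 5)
Your proposal is correct and follows essentially the same route as the paper's proof: both substitute the KWIK bound $m(\epsilon,\delta) = \tilde O(d^3/\epsilon^4)$ of Lemma~\ref{lem:kwiktofairlinear} into Theorem~\ref{thm:kwiktofair}, solve the balance $\epsilon^* T = k \cdot m(\epsilon^*,\delta^*)$ to obtain $\epsilon^* = \tilde\Theta\bigl((kd^3/T)^{1/5}\bigr)$, and compute $k^2 \cdot m(\epsilon^*,\delta^*) = O(Tk\epsilon^*) = \tilde O(T^{4/5}k^{6/5}d^{3/5})$. Your explicit check that $\ln(1/\delta^*) = O(\ln(Tk/\delta))$ remains polylogarithmic and is absorbed by the $\tilde O$ is sound bookkeeping that the paper leaves implicit.
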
 
\newcommand{\ndisc}[1]{\ensuremath{\lceil \frac{1}{#1} \rceil}}
\newcommand{\sdisc}[1]{\ensuremath{[\ndisc{#1}]}}
\newcommand{\edisc}{\ensuremath{\ndisc{\epsilon}}}
\newcommand{\esdisc}{\ensuremath{\sdisc{\epsilon}}}
\newcommand{\fs}{f^{*}}
\newcommand{\lep}{\lceil \frac{1}{\epsilon^*}\rceil}
\renewcommand{\l}{\ell}
\newcommand{\sep}{\left[ \lep\right]}

\section{Fair Bandit Learnability Implies KWIK Learnability} \label{sec:fairtokwik}

In this section, we show how to use a fair, no-regret contextual
bandit algorithm to construct a KWIK learning algorithm whose KWIK
bound has logarithmic dependence on the number of rounds
$T$. Intuitively, any fair algorithm which achieves low regret must
both be able to find and exploit an optimal arm (since the algorithm
is no-regret) \emph{and} can only exploit that arm once it has a tight
understanding of the qualities of all arms (since the algorithm is
fair). Thus, any fair no-regret algorithm will ultimately have
tight $(1-\delta)$-confidence about each arm's reward function.

\begin{theorem}\label{thm:fairtokwik}
  Suppose $\A$ is a $\delta$-fair algorithm for the contextual bandit
  problem over the class of functions $C$, with regret bound
  $R(T, \delta)$. Suppose also there exists $f \in C, x(\ell)\in \cX$
  such that for every $\ell \in [\lceil\frac{1}{\epsilon}\rceil]$,
  $f(x(\ell)) = \ell \cdot \epsilon$.  Then, \fairkwik is an
  $(\epsilon, \delta)$-KWIK algorithm for $C$ with KWIK bound
  $m(\epsilon,\delta)$, with $m(\epsilon,\delta)$ the solution to
  $\frac{m(\epsilon, \delta) \epsilon}{4} = R(m(\epsilon, \delta),
  \frac{\epsilon\delta}{2T})$.
\end{theorem}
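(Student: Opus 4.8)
The plan is to realize \fairkwik as a thin wrapper that simulates $\A$ on a family of two-arm contextual bandit instances, reads the value of the unknown target off of $\A$'s play distribution by using fairness to convert probability comparisons into value comparisons, and charges each abstention to the regret bound. Fix the KWIK target $g \in \C$ and let $f \in \C$ together with the contexts $x(\ell)$ be the ``ruler'' from the hypothesis, so that the reference values $\ell\epsilon$ are realizable. On KWIK example $x^t$ with unknown value $v = g(x^t)$, \fairkwik considers, for each grid point $\ell \in [\lceil 1/\epsilon\rceil]$, the two-arm instance in which the test arm carries $g$ with context $x^t$ (value $v$) and the reference arm carries $f$ with context $x(\ell)$ (value $\ell\epsilon$). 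The key observation I would use is that evaluating the distribution $\A$ \emph{would} play on each such instance needs no feedback and does not alter $\A$'s state, so all of these distributions can be read off the current history ``for free,'' and \fairkwik can record the tie set $\{\ell : \pi_1 = \pi_2\}$ of grid points at which $\A$ plays the two arms equally.

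\textbf{Extracting the value and the predict/abstain rule.} By $\delta$-fairness, with probability $1-\delta'$ the monotonicity ``$\pi_1 > \pi_2$ only if $v > \ell\epsilon$'' holds simultaneously across the queried instances. Taking contrapositives in both directions gives $\pi_2 \le \pi_1$ whenever $\ell\epsilon < v$ and $\pi_2 \ge \pi_1$ whenever $\ell\epsilon > v$, so the tie set is an interval of grid points containing the grid neighborhood of $v$. I would have \fairkwik predict iff this interval has width at most $\epsilon$, outputting any grid value inside it; fairness then forces the prediction within $\epsilon$ of $v$, giving the KWIK accuracy guarantee. Otherwise it outputs \bt.

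\textbf{Paying for abstentions.} When \fairkwik outputs \bt, the tie interval has width exceeding $\epsilon$; since it contains $v$, it has an endpoint $\ell\epsilon$ with $|v - \ell\epsilon| \ge \epsilon/2$, and on that two-arm instance $\A$ plays both arms with probability exactly $1/2$. \fairkwik commits to this instance as a genuine round for $\A$: it draws the KWIK feedback $y^t$ (with $\E{y^t} = v$), samples $\A$'s arm, feeds it $y^t$ if the test arm was chosen and a fresh reward of mean $\ell\epsilon$ otherwise, and advances $\A$ by exactly one round. On the fairness event this round costs $\A$ expected pseudo-regret $\tfrac{1}{2}|v-\ell\epsilon| \ge \epsilon/4$. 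Because $\A$ is advanced \emph{only} on \bt rounds, after $m$ abstentions $\A$ has completed exactly $m$ rounds and accrued at least $m\epsilon/4$ regret, while its regret bound caps this at $R(m, \tfrac{\epsilon\delta}{2T})$ with confidence parameter $\delta' = \tfrac{\epsilon\delta}{2T}$. Hence $\tfrac{m\epsilon}{4} \le R(m,\tfrac{\epsilon\delta}{2T})$, so $m$ is at most the solution of $\tfrac{m\epsilon}{4} = R(m,\tfrac{\epsilon\delta}{2T})$, matching the claimed KWIK bound. The parameter $\tfrac{\epsilon\delta}{2T}$ is set so that a union bound over the at most $T$ rounds and the $\lceil 1/\epsilon\rceil$ grid comparisons per round keeps the total failure probability of fairness and of the feedback concentration below $\delta$.

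\textbf{Main obstacle.} The delicate step is the regret charge for \bt rounds: I must argue both that an abstention genuinely forces $\A$ into an $\Omega(\epsilon)$-regret round and that these are the \emph{only} rounds on which $\A$ advances, so that $R$ is evaluated at the true horizon $m$ rather than at $T$. This relies on (i) reading $\A$'s distribution on hypothetical comparison instances leaving its state and accrued regret untouched, so no ``wasted'' exploration is charged, and (ii) a careful invocation of Definition~\ref{def:fair} on these simulated instances, which share a single realized history; the point is that fairness constrains $\A$'s distribution as a function of history and current contexts, so it applies verbatim to every queried instance built on the current history. A secondary nuisance will be the bookkeeping that folds the feedback noise and the grid resolution into the single confidence parameter $\tfrac{\epsilon\delta}{2T}$.
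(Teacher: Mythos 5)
Your overall architecture matches the paper's: simulate $\A$ on two-arm instances pairing the target with the ``ruler'' contexts $x(\ell)$, query $\A$'s play distribution without advancing its state, convert probability comparisons to value comparisons via fairness, advance $\A$ only on $\bot$ rounds, and charge each abstention $\Omega(\epsilon)$ expected regret so that $m\epsilon/4 \leq R(m, \epsilon\delta/2T)$. However, your predict/abstain rule has a genuine gap. Definition~\ref{def:fair} constrains only \emph{strict} probability inequalities: $p^{t,\ell}_1 > p^{t,\ell}_2$ implies $f^*(x^t) > \ell\epsilon$ and vice versa, while a tie $p^{t,\ell}_1 = p^{t,\ell}_2$ is consistent with \emph{any} relation between the values. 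Your contrapositives are correct, but the conclusion that ``the tie set is an interval of grid points containing the grid neighborhood of $v$'' is false: ties can be scattered anywhere (including far from $v$), and there need be no ties at all. Concretely, a fair $\A$ may have $p^{t,\ell-1}_1 < p^{t,\ell-1}_2$ (forcing $v < (\ell-1)\epsilon$) together with ties at $\{\ell, \ell+1\}$; your rule sees a tie hull of width $\epsilon$, predicts a grid value inside it, and errs by more than $\epsilon$ -- and the empty-tie-set case leaves your prediction undefined. This is why the paper instead predicts at the \emph{crossing point} (the largest $\hat\ell$ with strict $p^{t,\hat\ell}_1 > p^{t,\hat\ell}_2$), abstains whenever there are two or more ties anywhere (the condition $|E^t| \geq 2$, regardless of how far apart they sit), and runs the grid at resolution $\epsilon^* = \epsilon/2$: a single stray tie degrades the crossing-point sandwich to $(\hat y^t, \hat y^t + 2\epsilon^*)$, so the halving is needed to recover $\epsilon$-accuracy. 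Your scheme, which keeps grid spacing $\epsilon$ and tolerates two adjacent ties, cannot deliver the KWIK accuracy guarantee even after repairing the rule.

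A second, smaller flaw: on abstention you ``commit'' to the tie endpoint with $|v - \ell\epsilon| \geq \epsilon/2$, but $v$ is unknown, so the algorithm cannot identify which endpoint that is. The paper handles exactly this by choosing one of the two tie contexts uniformly at random, which is implementable and costs a factor of $2$ in the per-round regret charge (expected regret at least $\epsilon^*/4$ rather than $\epsilon^*/2$). Your union-bound bookkeeping with $\delta^* = \epsilon\delta/2T$ over the $T \cdot \lceil 1/\epsilon \rceil$ queries, and your observation that $R$ is evaluated at horizon $m$ because $\A$'s history is updated only on $\bot$ rounds, are both correct and match the paper.
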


\begin{remark}
  The condition that $C$ should contain a function that can take on
  values that are multiples of $\epsilon$ is for technical
  convenience; $C$ can always be augmented by adding a single such
  function.
\end{remark}

Our aim is to construct a KWIK algorithm $\B$ to predict labels for a
sequence of examples labeled with some unknown function $f^* \in C$.
To do this, we will run our fair contextual bandit algorithm $\A$ on
an instance that we construct online as examples $x^t$ arrive for
$\B$. The idea is to simulate a two arm instance, in which one arm's
rewards are governed by $f^*$ (the function to be KWIK learned), and
the other arm's rewards are governed by a function $f$ that we can set
to take any value in $\{0,\epsilon,2\epsilon,\ldots,1\}$. For each
input $x^t$, we perform a thought experiment and consider $\A$'s
probability distribution over arms when facing a context which forces
arm $2$'s payoff to take each of the values
$0, \epsilon^*, 2\epsilon^*, \ldots, 1$.  Since $\A$ is fair, $\A$
will play arm $1$ with weakly higher probability than arm $2$ for
those $\l : \l\epsilon^* \leq f(x^t)$; analogously, $\A$ will play arm
$1$ with weakly lower probability than arm $2$ for those
$\l : \l\epsilon^* \geq f(x^t)$.  If there are at least $2$ values of
$\l$ for which arm $1$ and arm $2$ are played with equal probability,
one of those contexts will force $\A$ to suffer $\epsilon^*$ regret,
so we continue the simulation of $\A$ on one of those instances
selected at random, forcing at least $\epsilon^*/2$
regret in expectation, and at the same time have $\B$ return
$\bot$. $\B$ receives $\fs(x^t)$ on such a round, which is used to
construct feedback for $\A$.  Otherwise, $\A$ must transition from
playing arm $1$ with strictly higher probability to playing $2$ with
strictly higher probability as $\l$ increases: the point at which that
occurs will ``sandwich'' the value of $f(x^t)$, since $\A$'s fairness
implies this transition must occur when the expected
payoff of arm $2$ exceeds that of arm $1$. $\B$ uses this value to
output a numeric prediction.

An important fact we exploit is that we can \emph{query} $\A$'s
behavior on $(x^t, x(\l))$, for any $x^t$ and $\l\in \sep$ without
providing it feedback (and instead ``roll back'' its history to $h^t$
not including the query $(x^t, x(\l))$).  We update $\A$'s history by
providing it feedback only in rounds where $\B$ outputs $\bot$.

%

\begin{center}
	\begin{algorithmic}[1]
	\Procedure{\fairkwik}{$\epsilon, \delta, T$}
 	\State $\epsilon^* \gets \frac{\epsilon}{2}$,  $\delta^* \gets \frac{\delta \cdot \epsilon^*}{T}$, $h \gets [\,]$, initialize fair $\A(\delta^*, T)$ for class $C$
 		\For{$1 \leq t \leq T$}
                \State $h^t \gets h$
 		\For{$\ell\in \sep$}
                \State Let $(p^{t,\ell}_1, p^{t,\ell}_2) =( \ppc{}{1}{h^t},  \ppc{}{2}{h^t})) $ be $\A(h^t, (x^t, x(\l)))$'s  dist. over arms given $h$
 		\EndFor
                \If{$\exists \l \neq \l' : \l, \l' \in \sep$: $p^{t, {\l}}_1 =  p^{t, {\l}}_2 = p^{t, {\l'}}_1 =  p^{t, {\l'}}_2$} \\
                \State Choose $x(\hat{\l}) \in_R \{x(\l),x(\l') \}$ \Comment{One of $x(\l), x(\l')$ must cause  $\epsilon^*$ regret}
                \State Select $a^t \sim \A(h^t, (x^t,x(\hat{\l})) )$ \Comment{Run $\A$ to get a predicted arm}
                \State  Predict $\hat{y}^t \gets \bot$
                \If{$a^t = 1$}
                \State $ \rew{t}{1} \gets y^t$ and  $h \gets h^t :: ((x^t, x(\hat{\l})), 1, \rew{t}{1})$ \Comment{Use KWIK feedback}
                \Else
                \State $\rew{t}{2} \gets \hat{\l}\epsilon^*$
                \State $h \gets h^t :: ((x^t, x(\hat{\l})), 2, \rew{t}{2})$ \Comment{Construct feedback for arm $2$}
                \EndIf
                \Else 
                \Comment{$\A$'s history is not updated}
                \If{$p^{t, \l}_1 \leq p^{t, \l}_2$ for all $\l \in \sep$}
                \State $\hat{\l} \gets 0$
                \Else\textrm{ } Let $\hat{\l}$ be the largest index for which $ p^{t, \hat{\l}}_1 > p^{t, \hat{\l}}_2$
                \EndIf
                \State Predict $\hat{y}^t \gets \hat{\l} \cdot \epsilon^*$
                \EndIf
 		\EndFor
 	\EndProcedure
 	\end{algorithmic}
\end{center}

\vspace{-.15in}
\begin{proof}
  For a fixed run of $\A$, we calculate the probability that for
  all times $t$ and $\l\in \sep$, it is the case that
  $p^{t, \l}_1 > p^{t, \l}_2$ only if $\fs(x^t) > \l \cdot \epsilon^*$
  and also $p^{t, \l}_1 < p^{t, \l}_2$ only if
  $\fs(x^t) < \l \cdot \epsilon^*$.  In this run, $\A$ is queried on
  $\frac{T}{\epsilon^*}$ histories and contexts: prefixes of $h$ along
  with $(x^t, x(\l))$ for each $t\in [T], \l\in \sep$.  The fairness
  of $\A$ implies for any fixed $h^t$ and fixed $(x^t, x(\l))$, with
  probability $1-\delta^*$, $p^{t, \l}_1 > p^{t, \l}_2$ only if
  $\fs(x^t) > \l \epsilon^*$ and $p^{t, \l}_1 < p^{t, \l}_2$
  only if $\fs(x^t) < \l \epsilon^*$. Then, by a union bound
  over $t\in [T] ,\l\in \sep$, with probability at least
  $1 - \delta^* \frac{T}{\epsilon^*}= 1-\delta$, $\A(h^t, x^t, x(\l))$
  will satisfy this property for all $t\in [T], \l\in \sep$. We
  condition on this holding in the remainder of the proof.

  We now argue that the numeric predictions of $\B$ are correct within
  an additive $\epsilon$. Let:
  $$E^t = \{ \l : p^{t, \l}_1 = p^{t, \l}_2\}.$$  When
  $\B(x^t) = y^t \in [0,1]$, note that $|E^t| \leq 1$, else $\B$ would
  have output $\bot$.

  If $p^{t, \l}_1 \leq p^{t, \l}_2$ for all $\l$, since
  $|E^t| \leq 1$, either $p^{t,0}_1 < p^{t,0}_2$ or
  $p^{t,1}_1 < p^{t,1}_2$, which we have conditioned on implying that
  either $\fs(x^t) < f(x(0)) = 0$ or
  $\fs(x^t) < f(x(1)) = \epsilon^*$. Since $\fs(x^t) \geq 0$, this
  implies
  $\fs(x^t) \in [0, \epsilon^*) = [\hat{\l}\epsilon^*, \epsilon^*) =
  [\hat{y}^t, \hat{y}^t + \epsilon^*)$.

  Otherwise, we have that $p^{t,\hat{\l}}_1 > p^{t,\hat{\l}}_2$, and
  $p^{t,\l}_1 \leq p^{t,\l}_2$ for all $\l > \hat{\l}$.  If $(a)$
  $\hat{\l} = \lep$, then $\fs(x^t) > 1$, a contradiction, so
  $\hat{\l} < \lep$. If $(b)$ $\hat{\l} = \lep -1$, then
  $\fs(x^t) > f(x(\hat{\l})) = (\lep- 1)\epsilon^*$ and so
  $\fs(x^t) \in ((\lep- 1)\epsilon^*, 1] = (\hat{y^t}, \hat{y}^t +
  \epsilon^*]$,
  so $\hat{y^t}$ is $\epsilon^*$-accurate. If neither $(a)$ nor $(b)$,
  then $(c)$ it must be $\hat{\l} < \lep -1$.  Since $|E^t| \leq 1$,
  for some $\l \in \{\hat{\l}+1, \hat{\l}+2\}$, we know that
  $p^{t,\l}_1 < p^{t,\l}_2$; thus,
  $\fs(x^t) < f(x(\l)) \leq (\hat{\l}+2)\epsilon^*$ and therefore
  $\fs(x^t) \in (\hat{\l}\epsilon^*, (\hat{\l}+2)\epsilon^*) =
  (\hat{y}^t, \hat{y}^t + 2\epsilon^*)$.

  Finally, we upper-bound $m(\epsilon, \delta)$, the number of rounds
  $t : \B(x^t) = \bot$. For each such $t$, $\A$ runs on a random draw
  of one of two contexts, one of whose arms' payoffs differ by at
  least $\epsilon^*$. Thus, for
  one of those contexts, either $\fs(x^t) \geq f(x) - \epsilon^*$ or
  $\fs(x^t) \leq f(x) - \epsilon^*$. In either case, since
  $p^{t, \l}_1 = p^{t,\l}_2 = \frac{1}{2}$ for $x(\l) = x$, $\A$
  suffers expected regret at least $\frac{\epsilon^*}{2}$ for that
  context, and at least $\frac{\epsilon^*}{4}$ when faced with one
  chosen at random. Thus,
  $m(\epsilon, \delta) \cdot \frac{\epsilon^*}{4} = m(\epsilon,
  \delta) \cdot \frac{\epsilon}{8} < R(m(\epsilon, \delta), \delta^*)
  = R(m(\epsilon, \delta), \frac{\epsilon\delta}{2T})$,
  since $\A$'s regret is upper bounded by this quantity over
  $m(\epsilon, \delta)$ rounds (which is an upper bound on the number
  of rounds for which $\A$ is actually run and updated).
\end{proof} 
\subsection{An Exponential Separation Between Fair and Unfair Learning}\label{subsec:conjunction}

In this section, we exploit the other direction of the equivalence we
have proven between fair contextual bandit algorithms and KWIK
learning algorithms to give a simple contextual bandit problem for
which fairness imposes an \emph{exponential} cost in its regret
bound. This is in contrast to the case in which the underlying class
of functions is linear, for which we gave fair contextual bandit
algorithms with regret bounds within a polynomial factor of their
unconstrained counterparts.  In this problem, the context domain is
the $d$-dimensional boolean hypercube: $\cX = \{0,1\}^d$ -- i.e. the
context each round for each individual consists of $d$ boolean
attributes. Our class of functions $C$ is the class of boolean
\emph{conjunctions}:
$$C = \{f \mid f(x) = x_{i_1} \wedge x_{i_2} \wedge \ldots \wedge
x_{i_k} \text{ where } 0 \leq k \leq d \text{ and } i_1, \ldots, i_k
\in [d]\}.$$

We first give a simple but unfair algorithm,
\textsc{ConjunctionBandit}, for this problem which obtains a regret
bound which is linear in $d$. It maintains a set of candidate
variables $C_j^*$ for each conjunction $f_j$; this set shrinks across
rounds, while always containing the true set of variables over which
$f_j$ is defined. We denote the boolean value of variable $m$ in the
context for arm $j$ in round $t$ by $x_{j,m}^t$.

The formal claim and proof that \textsc{ConjunctionBandit} achieves
regret $R(T) = O(k^2d)$, as well as \textsc{ConjunctionBandit}'s
formal description, can be found in \app{sec:app-missing}.
\textsc{ConjunctionBandit} violates the fairness in every round $t$ in
which it predicts 0 for arm $i$ but 1 for arm $j$ even though
$f_i(x^t) = f_j(x^t) = 1$, as
$\pp{t}{i}{} = 0 < \frac{1}{k} < \pp{t}{j}{}$.

We now show that fair algorithms cannot guarantee subexponential
regret in $d$. This relies upon a known lower bound for KWIK learning
conjunctions ~\citep{li09}:

\begin{lemma} \label{lem:lilem} There exists a sequence of examples
  $(x^1, \ldots, x^{2^d - 1})$ such that for
  $\epsilon,\delta \leq 1/2$, every $(\epsilon,\delta)$-KWIK learning
  algorithm $\B$ for the class $C$ of conjunctions on $d$ variables
  must output $\bot$ for $x^t$ for each $t\in [2^d-1]$. Thus, $\B$ has
  a KWIK bound of at least $m(\epsilon,\delta) = \Omega(2^d)$.
\end{lemma}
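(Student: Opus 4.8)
The plan is to exploit the identification of monotone conjunctions with the subset relation: a conjunction $f_S(x) = \bigwedge_{i\in S} x_i$ is determined by a set $S \subseteq [d]$, and on a context $x$ with support $A = \{i : x_i = 1\}$ it outputs $f_S(x) = \mathbb{I}[S \subseteq A]$. There are exactly $2^d$ such targets in $C$ (one per subset $S$, with $S=\emptyset$ giving the constant-$1$ function). I would take the hard sequence $(x^1,\ldots,x^{2^d-1})$ to be the contexts whose supports $A_1,\ldots,A_{2^d-1}$ range over all \emph{proper} subsets of $[d]$, listed in any order of nondecreasing cardinality (more generally, any linear extension of $\subseteq$). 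Each such context has $f_{[d]}(x^t) = \mathbb{I}[[d]\subseteq A_t] = 0$, so the whole run is realized by the single target $f_{[d]} = x_1 \wedge \cdots \wedge x_d \in C$ with every label equal to $0$; since feedback has mean in $\{0,1\}$ and support $[0,1]$, it equals the label exactly.

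The crux is an indistinguishability (coupling) claim: for each round $t$, the learner's view through round $t$ is identical whether the target is $f_{[d]}$ or $f_{A_t}$. Because the examples are ordered by nondecreasing size and are distinct, for every $j < t$ we have $A_t \not\subseteq A_j$, hence $f_{A_t}(x^j) = \mathbb{I}[A_t \subseteq A_j] = 0 = f_{[d]}(x^j)$; so both targets produce the same all-$0$ feedback prefix, and the learner's (possibly randomized) decision on round $t$ has the same distribution under the two targets. But $f_{[d]}(x^t) = 0$ while $f_{A_t}(x^t) = \mathbb{I}[A_t\subseteq A_t] = 1$. Since $\epsilon < 1/2$, the $\epsilon$-accurate regions $[-\epsilon,\epsilon]$ and $[1-\epsilon,1+\epsilon]$ for labels $0$ and $1$ are disjoint, so any numeric prediction $\hat{y}^t$ is inaccurate for at least one of the two targets. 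A deterministic learner therefore \emph{must} output $\bot$ on every $x^t$ (a numeric prediction would violate exact accuracy for $f_{[d]}$ or for $f_{A_t}$), giving $2^d-1$ many $\bot$s.

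For a randomized learner, the same coupling bounds the per-round probability of a numeric prediction: conditioning on target $f_{[d]}$, a prediction with value $>\epsilon$ is an error (probability $\le\delta$ by the KWIK guarantee for $f_{[d]}$), while a prediction with value $\le\epsilon$ has, by the coupling, the same probability as under $f_{A_t}$, where it is an error (probability $\le\delta$). Hence each round is $\bot$ with probability at least $1-2\delta$, so $\mathbb{E}_{f_{[d]}}[\#\bot] \ge (1-2\delta)(2^d-1)$; combined with the KWIK-bound tail event and the fact that at most one $\bot$ occurs per round, this forces $m(\epsilon,\delta) = \Omega(2^d)$ whenever $\delta$ is bounded below $1/2$, matching the claim. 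I expect the coupling step to be the main obstacle to state cleanly; the only delicate points beyond it are the boundary cases $\epsilon = 1/2$ and $\delta \to 1/2$, where the geometric separation degenerates and the constants must be tracked as in~\citep{li09}.
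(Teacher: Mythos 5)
Your proof is correct, but there is nothing in the paper to compare it against: the paper states Lemma~\ref{lem:lilem} as an imported result, citing \citet{li09}, and never proves it. What you have written is essentially the standard argument from that source, reconstructed correctly: realize the run with the full conjunction $f_{[d]}$ so all feedback is identically $0$; present the $2^d-1$ contexts whose supports are the proper subsets of $[d]$ in any linear extension of $\subseteq$, so that for $j<t$ one has $A_t \not\subseteq A_j$ and hence $f_{A_t}$ and $f_{[d]}$ generate identical (all-zero) feedback prefixes; then at round $t$ the two consistent targets disagree by $1$ on $x^t$, and no numeric prediction can be $\epsilon$-accurate for both. Your handling of the two regimes is also sound: for deterministic learners this forces $\bot$ on every round exactly as the lemma asserts, and for randomized learners your coupling gives a per-round numeric-prediction probability of at most $2\delta$ (each such prediction lands in the $\delta$-failure event of one of the two coupled targets), yielding $\E{\#\bot} \geq (1-2\delta)(2^d-1)$ and hence $m(\epsilon,\delta) = \Omega(2^d)$. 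Two caveats, both of which you flag yourself and both of which are really defects of the lemma's phrasing rather than of your argument: at $\epsilon = 1/2$ exactly the separation degenerates (a learner predicting $\hat{y}^t = 1/2$ always is vacuously accurate and never outputs $\bot$), so the statement genuinely requires $\epsilon < 1/2$; and the literal claim that every algorithm ``must output $\bot$'' each round is, for randomized learners, only true in your quantitative expected sense with $\delta$ bounded away from $1/2$ -- a randomized learner may emit numeric predictions with probability up to roughly $\delta$ per target without violating the KWIK guarantee. Neither caveat affects the conclusion $m(\epsilon,\delta) = \Omega(2^d)$ as the paper uses it.
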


We then use the equivalence between fair algorithms and KWIK learning
to translate this lower bound on $m(\epsilon,\delta)$ into a minimum
worst case regret bound for fair algorithms on conjunctions. We modify
Theorem \ref{thm:fairtokwik} to yield the following lemma, proven in
\app{sec:app-missing}.

\begin{lemma} \label{lem:fairtokwikconjunction} Suppose $\A$ is a
  $\delta$-fair algorithm for the contextual bandit problem over the
  class $C$ of conjunctions on $d$ variables. If $\mathcal{A}$ has
  regret bound $R(T, \delta)$ then for $\delta' = 2T\delta$, \fairkwik
  is an $(0, \delta')$-KWIK algorithm for $C$ with KWIK bound
  $m(0,\delta') = 4 R(m(0, \delta'),\delta)$.
\end{lemma}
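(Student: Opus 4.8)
The plan is to follow the construction and proof of Theorem~\ref{thm:fairtokwik} almost verbatim, specializing \fairkwik to the class $C$ of conjunctions and to the exact ($\epsilon = 0$) regime. Because every $f \in C$ is $\{0,1\}$-valued, we cannot (and need not) use the fine $\epsilon^*$-grid of the general reduction; instead we use only the two grid points $\{0,1\}$. Concretely, I simulate a two-arm instance in which arm $1$'s reward function is the target conjunction $f^*$ to be KWIK-learned and arm $2$'s reward function is the fixed single-variable conjunction $f(x) = x_1 \in C$. Fixing contexts $x(0), x(1) \in \{0,1\}^d = \cX$ with $f(x(0)) = 0$ and $f(x(1)) = 1$, for each KWIK example $x^t$ I query, for $\l \in \{0,1\}$, the distribution $(p^{t,\l}_1, p^{t,\l}_2)$ that $\A$ places on the two arms when faced with context $(x^t, x(\l))$ after history $h^t$, rolling back $\A$ after each query. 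As in \fairkwik, I output $\bt$ exactly when both queries tie, and otherwise make a numeric prediction.

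For correctness I condition on the fairness guarantee holding at every query. Since arm $2$ takes value $\l \in \{0,1\}$ and arm $1$ takes value $f^*(x^t) \in \{0,1\}$, fairness yields four implications: $p^{t,0}_1 > p^{t,0}_2$ forces $f^*(x^t) > 0$, i.e.\ $f^*(x^t) = 1$; $p^{t,1}_1 < p^{t,1}_2$ forces $f^*(x^t) < 1$, i.e.\ $f^*(x^t) = 0$; and the other two strict inequalities ($p^{t,0}_1 < p^{t,0}_2$ and $p^{t,1}_1 > p^{t,1}_2$) are impossible, since they would require $f^*(x^t) < 0$ or $f^*(x^t) > 1$. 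Hence the only admissible strict preferences pin down $f^*(x^t)$ \emph{exactly}, and the prediction rule becomes: predict $1$ if arm $1$ is strictly preferred at $\l = 0$, predict $0$ if arm $2$ is strictly preferred at $\l = 1$, and output $\bt$ otherwise. These two conditions are mutually exclusive---if $f^*(x^t) = 1$ then the arms tie at $\l = 1$, and if $f^*(x^t) = 0$ they tie at $\l = 0$, by fairness applied to equal-valued arms---so the rule is well defined and every numeric prediction equals $f^*(x^t)$ exactly, giving the required $(0,\cdot)$-accuracy.

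The confidence and KWIK-bound accounting then mirror Theorem~\ref{thm:fairtokwik}, with the grid size $\lceil 1/\epsilon^*\rceil$ replaced by $2$. There are exactly $2T$ queries over the whole run (two per round), each of which can violate fairness with probability at most $\delta$; a union bound gives total failure probability at most $2T\delta = \delta'$, establishing the $(0,\delta')$-KWIK property. For the KWIK bound, note $\B$ outputs $\bt$ only when $p^{t,0}_1 = p^{t,0}_2$ and $p^{t,1}_1 = p^{t,1}_2$, i.e.\ (since there are two arms) both queries equal $(\tfrac12,\tfrac12)$; but then one of the two contexts has a unit gap between the arms that $\A$ misplays with probability $\tfrac12$ (context $x(0)$ if $f^*(x^t) = 1$, context $x(1)$ if $f^*(x^t) = 0$), so running $\A$ on a uniformly random one of $x(0), x(1)$ and updating its history costs expected regret at least $\tfrac12 \cdot \tfrac12 = \tfrac14$. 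Since $\A$'s history is advanced only on the $m(0,\delta')$ rounds where $\B$ outputs $\bt$, its total pseudo-regret over those rounds is at least $m(0,\delta')/4$ and at most $R(m(0,\delta'),\delta)$, yielding $m(0,\delta') = 4\,R(m(0,\delta'),\delta)$.

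The main obstacle is that the generic prediction rule of \fairkwik returns the \emph{lower} transition value $\hat{\l}\,\epsilon^*$ and is only $\epsilon$-accurate because the grid is fine; on a two-point grid that rule would answer $0$ even when $f^*(x^t) = 1$, so it must be replaced by the exact read-off above. This is precisely where I exploit that conjunctions are $\{0,1\}$-valued and that fairness forces equal probabilities on equal-valued arms; once the prediction rule is corrected, the remaining steps are routine specializations of the already-proved Theorem~\ref{thm:fairtokwik}.
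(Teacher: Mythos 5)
Your proposal is correct and follows essentially the same route as the paper's proof: specialize \fairkwik to the two grid points $\{0,1\}$ with $\epsilon^*=1$, union bound the fairness guarantee over the $2T$ (history, context) queries to get failure probability $\delta'=2T\delta$, read off $f^*(x^t)$ exactly from the sign of the preference at $\ell=0$ and $\ell=1$, and charge each $\bot$-round expected regret $\tfrac14$ against $R(m(0,\delta'),\delta)$ since $\A$'s history advances only on those rounds. Your explicit replacement of the generic ``largest transition index'' prediction rule by the exact read-off (predict $1$ when $p^{t,0}_1 > p^{t,0}_2$, predict $0$ when $p^{t,1}_1 < p^{t,1}_2$) is precisely the modification the paper makes implicitly in its two-case analysis when it asserts $\hat y^t = f^*(x^t)$, so this is a faithful -- indeed slightly more carefully articulated -- version of the paper's argument.
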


Lemma \ref{lem:lilem} then lets us lower-bound the worst case regret
of fair learning algorithms on conjunctions.

\begin{corollary}
  For $\delta < \frac{1}{2T}$, any $\delta$-fair algorithm for
  the contextual bandit problem over the class $C$ of conjunctions on
  $d$ boolean variables has a worst case regret bound of
  $R(T) = \Omega(2^d)$.
\end{corollary}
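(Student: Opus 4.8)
The plan is to obtain the corollary as a near-immediate composition of the two preceding lemmas, so the proof is essentially bookkeeping on parameters rather than new work. First I would instantiate Lemma~\ref{lem:fairtokwikconjunction}: given any $\delta$-fair algorithm $\A$ for the class $C$ of conjunctions on $d$ variables with worst-case regret bound $R(T,\delta)$, setting $\delta' = 2T\delta$ yields, via the reduction \fairkwik, a $(0,\delta')$-KWIK learning algorithm for $C$ whose KWIK bound $m(0,\delta')$ satisfies the fixed-point relation $m(0,\delta') = 4\,R(m(0,\delta'),\delta)$. The hypothesis $\delta < \tfrac{1}{2T}$ is exactly what guarantees $\delta' = 2T\delta < 1$, placing this KWIK learner within the confidence regime to which the lower bound applies.

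Next I would invoke Lemma~\ref{lem:lilem}, the known KWIK lower bound for conjunctions of \citet{li09}: every $(\epsilon,\delta')$-KWIK learner for $C$ (with $\epsilon=0$ here, and $\delta'$ in the admissible range) can be forced to output $\bot$ on each of $2^d-1$ examples, so its KWIK bound is $m(0,\delta') = \Omega(2^d)$. Substituting this into the fixed-point relation from the previous paragraph gives $4\,R(m(0,\delta'),\delta) = m(0,\delta') = \Omega(2^d)$, and hence $R(m(0,\delta'),\delta) = \Omega(2^d)$.

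Finally I would read this back as a statement purely about the regret of $\A$. Writing $T^\star = m(0,\delta') = \Omega(2^d)$, the line above says that the worst-case regret of the fair algorithm $\A$ at horizon $T^\star$ is already $\Omega(2^d)$. Since cumulative regret is non-decreasing in the number of rounds, this lower bound persists for all $T \ge T^\star$, establishing the claimed worst-case bound $R(T) = \Omega(2^d)$.

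The step I expect to require the most care is the parameter bookkeeping around the confidence level: I must verify that $\delta' = 2T\delta$ genuinely lands in the range demanded by Lemma~\ref{lem:lilem} (the hypothesis yields $\delta' < 1$, and I would either tighten the constant so that $\delta' \le \tfrac12$ or note that the KWIK lower bound extends to this range), and I must confirm that the self-referential definition of $m(0,\delta')$ as the solution of $m = 4R(m,\delta)$ is not circular. The latter is fine precisely because Lemma~\ref{lem:lilem} pins down $m(0,\delta') = \Omega(2^d)$ \emph{independently} of $R$; once that value is fixed, the fixed-point equation is what forces $R$ to be large at the horizon $T^\star$. Everything else is direct substitution.
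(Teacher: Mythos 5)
Your proposal is correct and takes essentially the same route as the paper's own proof: compose Lemma~\ref{lem:fairtokwikconjunction} (with $\delta' = 2T\delta < 1$) with the KWIK lower bound of Lemma~\ref{lem:lilem}, then read the fixed-point relation $m(0,\delta') = 4R(m(0,\delta'),\delta)$ as a regret lower bound of $\Omega(2^d)$. The only cosmetic differences are that the paper fixes the horizon $T \leq 2^{d-1}$ up front and concludes $R(T) \geq T/4$ directly (making your monotonicity step unnecessary, and additionally exhibiting constant per-round regret throughout), and that you are in fact more careful than the paper about the confidence range, since the paper invokes Lemma~\ref{lem:lilem} for any $\delta' < 1$ even though the lemma is stated for $\delta' \leq 1/2$.
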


\begin{proof}
  Let $T \leq 2^{d-1}$. We know then that if $\delta' < 1$,
  Lemma~\ref{lem:lilem} guarantees the existence of a sequence of
  contexts $x^1, \ldots, x^{T}$ for which any $(0,\delta')$-KWIK
  algorithm has KWIK bound $m(T, 0, \delta') = T$.

  Lemma~\ref{lem:fairtokwikconjunction} implies
  $4R(m(T, 0, \delta'), \delta)$ gives a KWIK bound of
  $m(T, 0, \delta')$ when $\delta' = 2T \delta$. Thus, if
  $\delta < \frac{1}{2T}$, then $\delta' < 1$ and so
  $R(m(T, 0, \delta'), \delta) = \frac{m(T, 0, \delta')}{4} =
  \frac{T}{4}$  . \end{proof}

Together with the analysis of \textsc{ConjunctionBandit}, this
demonstrates a strong separation between fair and unfair contextual
bandit algorithms: when the underlying functions mapping contexts to
payoffs are conjunctions on $d$ variables, there exist a sequence of
contexts on which fair algorithms must incur regret exponential in $d$
while unfair algorithms can achieve regret linear in $d$.

{\footnotesize \bibliographystyle{plainnat}
\bibliography{fair_bandits}}

\begin{thebibliography}{30}
\providecommand{\natexlab}[1]{#1}
\providecommand{\url}[1]{\texttt{#1}}
\expandafter\ifx\csname urlstyle\endcsname\relax
  \providecommand{\doi}[1]{doi: #1}\else
  \providecommand{\doi}{doi: \begingroup \urlstyle{rm}\Url}\fi

\bibitem[Adler et~al.(2016)Adler, Falk, Friedler, Rybeck, Scheidegger, Smith,
  and Venkatasubramanian]{AFFRSSV16}
Philip Adler, Casey Falk, Sorelle~A. Friedler, Gabriel Rybeck, Carlos
  Scheidegger, Brandon Smith, and Suresh Venkatasubramanian.
\newblock Auditing black-box models by obscuring features.
\newblock \emph{CoRR}, abs/1602.07043, 2016.
\newblock URL \url{http://arxiv.org/abs/1602.07043}.

\bibitem[Agarwal et~al.(2014)Agarwal, Hsu, Kale, Langford, Li, and
  Schapire]{AHKLLS14}
Alekh Agarwal, Daniel~J. Hsu, Satyen Kale, John Langford, Lihong Li, and
  Robert~E. Schapire.
\newblock Taming the monster: {A} fast and simple algorithm for contextual
  bandits.
\newblock In \emph{Proceedings of the 31th International Conference on Machine
  Learning, {ICML} 2014, Beijing, China, 21-26 June 2014}, pages 1638--1646,
  2014.

\bibitem[Amin et~al.(2012)Amin, Kearns, and Syed]{amin2012graphical}
Kareem Amin, Michael Kearns, and Umar Syed.
\newblock Graphical models for bandit problems.
\newblock \emph{arXiv preprint arXiv:1202.3782}, 2012.

\bibitem[Amin et~al.(2013)Amin, Kearns, Draief, and Abernethy]{amin2013large}
Kareem Amin, Michael Kearns, Moez Draief, and Jacob~D Abernethy.
\newblock Large-scale bandit problems and kwik learning.
\newblock In \emph{Proceedings of the 30th International Conference on Machine
  Learning (ICML-13)}, pages 588--596, 2013.

\bibitem[Auer et~al.(2002)Auer, Cesa-Bianchi, and Fischer]{ACF02}
Peter Auer, Nicolo Cesa-Bianchi, and Paul Fischer.
\newblock Finite-time analysis of the multiarmed bandit problem.
\newblock \emph{Machine learning}, 47\penalty0 (2-3):\penalty0 235--256, 2002.

\bibitem[Barocas and Selbst(2016)]{solon16}
Solon Barocas and Andrew~D. Selbst.
\newblock Big data's disparate impact.
\newblock \emph{California Law Review}, 104, 2016.
\newblock Available at SSRN: http://ssrn.com/abstract=2477899.

\bibitem[Barry-Jester et~al.(2015)Barry-Jester, Casselman, and
  Goldstein]{sentencing}
Anna~Maria Barry-Jester, Ben Casselman, and Dana Goldstein.
\newblock The new science of sentencing.
\newblock \emph{The Marshall Project}, August 8 2015.
\newblock URL
  \url{https://www.themarshallproject.org/2015/08/04/the-new-science-of-sentencing}.
\newblock Retrieved 4/28/2016.

\bibitem[Beygelzimer et~al.(2011)Beygelzimer, Langford, Li, Reyzin, and
  Schapire]{BLLRS11}
Alina Beygelzimer, John Langford, Lihong Li, Lev Reyzin, and Robert~E.
  Schapire.
\newblock Contextual bandit algorithms with supervised learning guarantees.
\newblock In \emph{Proceedings of the Fourteenth International Conference on
  Artificial Intelligence and Statistics, {AISTATS} 2011, Fort Lauderdale, USA,
  April 11-13, 2011}, pages 19--26, 2011.

\bibitem[Bubeck and Cesa-Bianchi(2012)]{bubeck2012regret}
S{\'e}bastien Bubeck and Nicolo Cesa-Bianchi.
\newblock Regret analysis of stochastic and nonstochastic multi-armed bandit
  problems.
\newblock \emph{Machine Learning}, 5\penalty0 (1):\penalty0 1--122, 2012.

\bibitem[Byrnes(2016)]{lending}
Nanette Byrnes.
\newblock Artificial intolerance.
\newblock \emph{MIT Technology Review}, March 28 2016.
\newblock URL
  \url{https://www.technologyreview.com/s/600996/artificial-intolerance/}.
\newblock Retrieved 4/28/2016.

\bibitem[Calders and Verwer(2010)]{CV10}
Toon Calders and Sicco Verwer.
\newblock Three naive bayes approaches for discrimination-free classification.
\newblock \emph{Data Mining and Knowledge Discovery}, 21\penalty0 (2):\penalty0
  277--292, 2010.

\bibitem[Chu et~al.(2011)Chu, Li, Reyzin, and Schapire]{CLRS11}
Wei Chu, Lihong Li, Lev Reyzin, and Robert~E. Schapire.
\newblock Contextual bandits with linear payoff functions.
\newblock In \emph{Proceedings of the Fourteenth International Conference on
  Artificial Intelligence and Statistics, {AISTATS} 2011, Fort Lauderdale, USA,
  April 11-13, 2011}, pages 208--214, 2011.

\bibitem[Coglianese and Lehr(2016)]{cary16}
Cary Coglianese and David Lehr.
\newblock Regulating by robot: Administrative decision-making in the
  machine-learning era.
\newblock \emph{Georgetown Law Journal}, 2016.
\newblock Forthcoming.

\bibitem[Dwork et~al.(2012)Dwork, Hardt, Pitassi, Reingold, and
  Zemel]{dwork2012fairness}
Cynthia Dwork, Moritz Hardt, Toniann Pitassi, Omer Reingold, and Richard Zemel.
\newblock Fairness through awareness.
\newblock In \emph{Proceedings of the 3rd Innovations in Theoretical Computer
  Science Conference}, pages 214--226. ACM, 2012.

\bibitem[Feldman et~al.(2015)Feldman, Friedler, Moeller, Scheidegger, and
  Venkatasubramanian]{FFMSV15}
Michael Feldman, Sorelle~A. Friedler, John Moeller, Carlos Scheidegger, and
  Suresh Venkatasubramanian.
\newblock Certifying and removing disparate impact.
\newblock In \emph{Proceedings of the 21th {ACM} {SIGKDD} International
  Conference on Knowledge Discovery and Data Mining, Sydney, NSW, Australia,
  August 10-13, 2015}, pages 259--268, 2015.

\bibitem[Fish et~al.(2016)Fish, Kun, and Lelkes]{FKL16}
Benjamin Fish, Jeremy Kun, and {\'A}d{\'a}m~D Lelkes.
\newblock A confidence-based approach for balancing fairness and accuracy.
\newblock \emph{SIAM International Symposium on Data Mining}, 2016.

\bibitem[Julie~Brill(2015)]{bigdata2}
FTC~Commisioner Julie~Brill.
\newblock Navigating the ``trackless ocean'': Fairness in big data research and
  decision making.
\newblock Keynote Address at the Columbia University Data Science Institute,
  April 2015.

\bibitem[Kamishima et~al.(2011)Kamishima, Akaho, and Sakuma]{KAS11}
Toshihiro Kamishima, Shotaro Akaho, and Jun Sakuma.
\newblock Fairness-aware learning through regularization approach.
\newblock In \emph{Data Mining Workshops (ICDMW), 2011 IEEE 11th International
  Conference on}, pages 643--650. IEEE, 2011.

\bibitem[Katehakis and Robbins(1995)]{katehakis1995sequential}
Michael~N Katehakis and Herbert Robbins.
\newblock Sequential choice from several populations.
\newblock \emph{PROCEEDINGS-NATIONAL ACADEMY OF SCIENCES USA}, 92:\penalty0
  8584--8584, 1995.

\bibitem[Lai and Robbins(1985)]{LR85}
Tze~Leung Lai and Herbert Robbins.
\newblock Asymptotically efficient adaptive allocation rules.
\newblock \emph{Advances in applied mathematics}, 6\penalty0 (1):\penalty0
  4--22, 1985.

\bibitem[Li(2009)]{li09}
Lihong Li.
\newblock \emph{A unifying framework for computational reinforcement learning
  theory}.
\newblock PhD thesis, Rutgers, The State University of New Jersey, 2009.

\bibitem[Li et~al.(2011)Li, Littman, Walsh, and Strehl]{li2011knows}
Lihong Li, Michael~L Littman, Thomas~J Walsh, and Alexander~L Strehl.
\newblock Knows what it knows: a framework for self-aware learning.
\newblock \emph{Machine learning}, 82\penalty0 (3):\penalty0 399--443, 2011.

\bibitem[Luong et~al.(2011)Luong, Ruggieri, and Turini]{LRT11}
Binh~Thanh Luong, Salvatore Ruggieri, and Franco Turini.
\newblock k-nn as an implementation of situation testing for discrimination
  discovery and prevention.
\newblock In \emph{Proceedings of the 17th ACM SIGKDD international conference
  on Knowledge discovery and data mining}, pages 502--510. ACM, 2011.

\bibitem[Miller(2015)]{hiring}
Clair~C Miller.
\newblock Can an algorithm hire better than a human?
\newblock \emph{The New York Times}, June 25 2015.
\newblock URL
  \url{http://www.nytimes.com/2015/06/26/upshot/can-an-algorithm-hire-better-than-a-human.html}.
\newblock Retrieved 4/28/2016.

\bibitem[Munoz et~al.(2016)Munoz, Smith, and Patil]{bigdata3}
Cecilia Munoz, Megan Smith, and DJ~Patil.
\newblock Big data: A report on algorithmic systems, opportunity, and civil
  rights.
\newblock Technical report, Executive Office of the President, The White House,
  2016.

\bibitem[Podesta et~al.(2014)Podesta, Pritzker, Moniz, Holdern, and
  Zients]{bigdata1}
John Podesta, Penny Pritzker, Ernest~J. Moniz, John Holdern, and Jeffrey
  Zients.
\newblock Big data: Seizing opportunities, protecting values.
\newblock Technical report, Executive Office of the President, The White House,
  2014.

\bibitem[Rudin(2013)]{policing}
Cynthia Rudin.
\newblock Predictive policing using machine learning to detect patterns of
  crime.
\newblock \emph{Wired Magazine}, August 2013.
\newblock URL
  \url{http://www.wired.com/insights/2013/08/predictive-policing-using-machine-learning-to-detect-
  \\ patterns-of-crime/}.
\newblock Retrieved 4/28/2016.

\bibitem[Strehl and Littman(2008)]{strehl2008online}
Alexander~L Strehl and Michael~L Littman.
\newblock Online linear regression and its application to model-based
  reinforcement learning.
\newblock In \emph{Advances in Neural Information Processing Systems}, pages
  1417--1424, 2008.

\bibitem[Sweeney(2013)]{Sweeney13}
Latanya Sweeney.
\newblock Discrimination in online ad delivery.
\newblock \emph{Commununications of the {ACM}}, 56\penalty0 (5):\penalty0
  44--54, 2013.

\bibitem[Zemel et~al.(2013)Zemel, Wu, Swersky, Pitassi, and Dwork]{Zem13}
Rich Zemel, Yu~Wu, Kevin Swersky, Toni Pitassi, and Cynthia Dwork.
\newblock Learning fair representations.
\newblock In \emph{Proceedings of the 30th International Conference on Machine
  Learning (ICML-13)}, pages 325--333, 2013.

\end{thebibliography}
\appendix
\section{Missing Proofs for the Classic Stochastic Bandits Upper
  Bound}\label{sec:app-noncontextual-regret}

We begin by proving Lemma \ref{lem:intervals}, used in Section
\ref{sec:fairbandits} to prove the fairness of the \fairbandits
algorithm.

\begin{proof}[Proof of Lemma~\ref{lem:intervals}]
  Choose an arbitrary arm $i$ and round $t$ and define indicator
  variables $X_1, \ldots, X_{n_i(t)}$ where $X_n$  takes on the reward
  of pull $n$ of arm $i$. By a Chernoff bound, for any $a \geq 0$,
\begin{equation*}
    \pr{|\mean{t}{i} - \mu_i| \leq  \frac{a}{\num{t}{i}}} \leq 2\exp(-2a^2/\num{t}{i}).
\end{equation*}
In particular for $a = \sqrt{\num{t}{i}\ln((\pi t)^2/3\delta)/2}$, it is the case that
\begin{align*}
    \; & \pr{\mu_i \not \in \left(\mean{t}{i}- \sqrt{\frac{\ln((\pi t)^2/3\delta)}{2\num{t}{i}}}, \mean{t}{i} + \sqrt{\frac{\ln((\pi t)^2/3\delta)}{2\num{t}{i}}}\right)}\\
 \leq & 2\exp(-2\num{t}{i}\ln((\pi t)^2/3\delta)/2\num{t}{i})
    =2\exp(\ln(3\delta/(\pi t)^2))
    = 6\delta/(\pi t)^2.
\end{align*}
By a union bound over all rounds $t$, the probability of any true mean
ever falling outside of its confidence interval is at most
$ \delta(\frac{6}{\pi^2}\sum_{t=1}^{\infty} \frac{1}{t^2}) = \delta$.
\end{proof}

Next, we prove Lemma \ref{lem:num-pulled}, which we used in Section \ref{sec:fairbandits} to bound the regret of \fairbandits in Theorem \ref{thm:noncontextual-regret}.

\begin{proof}[Proof of Lemma~\ref{lem:num-pulled}]
  Let $X_1, . . . , X_t$ be indicator variables of whether $i$ was
  pulled at each time $t'\in [t]$.  Let
  $M_t = \sum_{t' \leq t} X_{t'}$, with $\E{M_t} = p_t$. For any
  $\epsilon \in [0,1]$, a standard additive Chernoff bound states that
\[\pr{M_t  \leq p_t - \epsilon t} \leq e^{-2t\epsilon^2}.\]
Since $i\in S^t$, it must be that $i\in S^{t'}$ for all $t' \leq t$
and all $i\in S^t$, by the definition of \fairbandits. Thus,
$\pr{X_i = 1} \geq \frac{1}{k}$ for any $i\in S^t$, and therefore
$p_t \geq\frac{t}{k}$.  so this also implies that
\[\pr{M_t  \leq \frac{t}{k} - \epsilon t} \leq e^{-2t\epsilon^2}.\]
Setting $\epsilon t =\sqrt{\frac{t\ln(\frac{2t^2k}{\delta})}{2}}$, this bound becomes
\[\pr{M_t  \leq \frac{t}{k} - \sqrt{\frac{t\ln(\frac{2t^2k}{\delta})}{2}}}\leq \frac{\delta}{2kt^2}\]
as desired. Then, taking a union bound over all active arms of which
there are at most $k$, the claim follows.
\end{proof}

\begin{proof}[Proof of Lemma~\ref{lem:interval-width}]
  This follows from the definition of $\low{t}{i}, \up{t}{i}$ and the
  lower bound on $\num{t}{i}$ provided by the assumption of the lemma.
\end{proof}

\subsection{Missing Derivation of $R(T)$ for Theorem~\ref{thm:noncontextual-regret}}\label{sec:derivation}
 \begin{align*}
   R(T) &\leq \sum_{t: 0}^T \min(1, k \cdot \eta(t)) + \left(1 + \frac{\pi}{2}\right)\delta T \\
        &\leq \sum_{t: 0}^T k \cdot \min(1, \eta(t))+ \left(1 + \frac{\pi}{2}\right)\delta T \\
        & \leq  k \left(\sum_{t: \frac{t}{k} > 2 \sqrt{t\ln\frac{tk}{\delta}} }^T  \sqrt{\frac{\ln\frac{t}{\delta}}{\frac{t}{k} - \sqrt{t\ln\frac{tk}{\delta}}}} + \sum_{t: \frac{t}{k} \leq 2 \sqrt{t\ln\frac{tk}{\delta}} }^T 1  \right) + O(\delta) T\\
        & \leq  k \left(\sum_{t: \frac{t}{k} > 2 \sqrt{t\ln\frac{tk}{\delta}} }^T  \sqrt{\frac{\ln\frac{t}{\delta}}{\frac{t}{2k}}}+ \sum_{t: \frac{t}{k} \leq 2 \sqrt{t\ln\frac{tk}{\delta}} }^T  1  \right) + O(\delta T)\\
        & \leq  k \left(\int_{t= 0}^{T}   \sqrt{\frac{\ln\frac{t}{\delta}}{\frac{t}{2k}}} + \int_{t=1}^{\tilde{O}(k^2\ln\frac{k}{\delta})}  1\right) + O(\delta T)\\
        & \leq k^{\frac{3}{2}} \int_{t=1}^T \sqrt{\frac{\ln\frac{t}{\delta}}{t}} + \tilde{O}(k^3 \ln \frac{k}{\delta}) + O(\delta T)\\
        & = k^{\frac{3}{2}}\sqrt{2T}\sqrt{\ln \frac{kT}{\delta}} + \tilde{O}(k^3 \ln \frac{k}{\delta}) + O(\delta T)\\
        & = \tilde{O}(k^\frac{3}{2} \sqrt{T \ln \frac{kT}{\delta}} + k^3) + O(\delta T)\\
 & = \tilde{O}(k^\frac{3}{2} \sqrt{T \ln \frac{kT}{\delta}} + k^3) \\
\end{align*}

where the final step follows from $\delta \leq \frac{1}{\sqrt{T}}$.

\section{Missing Proofs for the Classic Stochastic Bandits Lower Bound}\label{sec:app-lower}

All lemmas in this section are used in Section \ref{sec:lower} to
prove the fair lower bound in Theorem \ref{thm:cubed-lower}. The
first, Lemma \ref{lem:bayes-exps}, lets us analyze distributions over
payoffs.

\begin{proof}[Proof of Lemma~\ref{lem:bayes-exps}]
  Let $R_i$ represent the joint distribution on rewards for either
  experiment: in both cases, the joint distribution on rewards is
  identical, since the process which generates them is the same.

  We will use the notation $m, d^1, \ldots, d^t$ to represent some
  fixed realization of the random variables
  $\mu_i, \rew{1}{i}, \ldots, \rew{t}{i}$ and
  $\mu'_i, \rew{1}{i}, \ldots, \rew{t}{i}$. In particular, it suffices
  to show that
\[\Pr{(\mu_i, \rew{1}{i}, \ldots, \rew{t}{i})  \sim W }{(\mu_i, \rew{1}{i}, \ldots, \rew{t}{i}) = (m, d^1, \ldots, d^t)} =  \Pr{(\mu'_i, \rew{1}{i}, \ldots, \rew{t}{i})  \sim W'}{(\mu'_i, \rew{1}{i}, \ldots, \rew{t}{i}) = (m, d^1, \ldots, d^t)}.\]
The first experiment which generates
$ (\mu_i, \rew{1}{i}, \ldots, \rew{t}{i})$ according to $W$ has
probability mass on this particular value of its random variables:
\begin{align*}
& \Pr{(\mu_i, \rew{1}{i}, \ldots, \rew{t}{i})  \sim W }{(\mu_i, \rew{1}{i}, \ldots, \rew{t}{i}) = (m, d^1, \ldots, d^t)}  = \Pr{\mu_i \sim P_i}{\mu_i = m} \cdot  \Pr{\rew{1}{i}, \ldots, \rew{t}{i} \sim B(\mu_i)}{(\rew{1}{i}, \ldots, \rew{t}{i}) = (d^1, \ldots, d^t)}
\end{align*}

The second experiment has joint probability:
\begin{align*}
& \Pr{(\mu'_i, \rew{1}{i}, \ldots, \rew{t}{i})  \sim W'}{(\mu'_i, \rew{1}{i}, \ldots, \rew{t}{i}) = (m, d^1, \ldots, d^t)}\\
  & = \Pr{\mu'_i \sim P_i(\rew{1}{i}, \ldots, \rew{t}{i})}{\mu'_i= m} \cdot  \Pr{(\rew{1}{i}, \ldots, \rew{1}{t})\sim R_i}{(\rew{1}{i}, \ldots, \rew{1}{t}) = (d^1, \ldots, d^t)}\\
 & = \Pr{(\mu'_i, \rew{1}{i}, \ldots, \rew{t}{i}) \sim W}{(\mu'_i, \rew{1}{i}, \ldots, \rew{t}{i}) = (m, d^1, \ldots, d^t)}
\end{align*}
where equality follows from Bayes' Rule.
\end{proof}

Next, we prove Lemma \ref{lem:distinguishing}, used to reason about distinguishing between arms.

\begin{proof}[Proof of Lemma~\ref{lem:distinguishing}]

  Since neither $i$ nor
  $i+1$ is $\sqrt{\delta}$-distinguished by $h^t$, for any
  $\alpha_i \in \{\frac{1}{3} + \frac{i}{3k},\frac{1}{3} + \frac{i+
    1}{3k} \}, \alpha_{i+1} \in \{\frac{1}{3} +
  \frac{i+1}{3k},\frac{1}{3} + \frac{i+ 2}{3k} \}$,
  the posterior probability of $\mu_i = \alpha_i$ is less than $1-\sqrt{\delta}$, and in particular for
  $\alpha =\frac{1}{3} + \frac{i+1}{3k}$, it must be the case that

\[ \Pr{\mu_i \sim P_i(h^t)}{\mu_i = \alpha} > \sqrt{\delta}\quad \textrm{ and also }\quad
 \Pr{\mu_{i+1}\sim P_i(h^t)}{\mu_{i+1} = \alpha } > \sqrt{\delta}. \]
Since $P = \prod_i P_i$, we know that
\[\Pr{\mu_i, \mu_{i+1} \sim P(h^t)}{\mu_{i} = \alpha = \mu_{i+1} } =
\Pr{\mu_i \sim P_i(h^t)}{\mu_i = \alpha} \cdot \Pr{\mu_{i+1} \sim P_{i+1}(h^t)}{\mu_{i+1} = \alpha} > \delta\]
which completes the proof.
\end{proof}

Finally, we prove Lemma \ref{lem:nonuniform}, which lets us reason about how fair algorithm choices depend on histories.

\begin{proof}[Proof of Lemma~\ref{lem:nonuniform}]
  We will define a set of histories which cause $\A$ to play some
  pair of arms $i$ and $i+1$ with different probabilities when
  $\mu_i = \mu_{i+1}$. Define the set $\unfair(\A,\mu)$ such that
  $h^t \in \unfair(\A, \mu)$ if there exist $i\in [k-1], t' \in [t]$
  such that $\mu_i = \mu_{i+1}$ but
  $\ppc{t'}{i}{h^{t'}} \neq \ppc{t'}{i+1}{h^{t'}}$.

  Consider some $h^t$ which has not $\sqrt{2\delta}$-distinguished any
  arm, such that there exists some $i, t'$ for which
  $\ppc{t'}{i}{h^{t'}} \neq \frac{1}{k}$. Then, in particular, there
  exists some $i\in [k-1]$ such that
  $\ppc{t'}{i}{h^{t'}} \neq \ppc{t'}{i+1}{h^{t'}}$. By
  Lemma~\ref{lem:distinguishing}, for all $i$ and in particular this
  $i$, it is the case that
  $2 \delta < \Pr{\mu' \sim P| h^t}{\mu'_i = \mu'_{i+1}} = X$ and so
  \begin{align} 2\delta < X = \Pr{\mu'
    \sim P| h^t}{\mu'_i = \mu'_{i+1} \cap \ppc{t'}{i}{h^t} \neq
    \ppc{t'}{i+1}{h^t} } \leq \Pr{\mu' \sim P| h^t}{h^t \in \unfair(\A, \mu')}\label{eqn:unfair}\end{align}
  where the first equality comes from the fact that $h^t$ is a history
  for which $ \ppc{t'}{i}{h^t} \neq \ppc{t'}{i+1}{h^t}$ and the second
  equality from the definition of the set unfair.

  We will show that Equation~\ref{eqn:unfair} cannot hold with
  probability more than $\frac{1}{2}$ over the draw of $\mu, h^t$ from
  the underlying distribution, or else $A$ would not satisfy
  $\delta$-fairness. Since $A$ is $\delta$-fair, for any fixed $\mu$
\[\delta  \geq \Pr{h^t \sim A | \mu}{h^t\in \unfair(\A, \mu)}\]
and therefore for any distribution $P$ over $\mu$ that
\[\delta  \geq \Pr{\mu \sim P, h^t \sim A | \mu}{h^t\in \unfair(\A, \mu)}.\]
Lemma~\ref{lem:bayes-exps} implies  also $\delta  \geq \Pr{\mu \sim P, h^t \sim A | \mu, \mu' \sim P|h^t}{h^t\in \unfair(\A, \mu')}$, so by Markov's inequality
\[\frac{1}{2}  \geq \Pr{\mu \sim P, h^t \sim A | \mu}{\Pr{ \mu' \sim P|h^t}{h^t\in \unfair(\A, \mu')}\geq 2\delta}.\]
Thus, with probability at least $\frac{1}{2}$ over the
distribution over histories and  means,
\[\Pr{ \mu' \sim P|h^t}{h^t\in \unfair(\A, \mu')}\leq 2\delta.\] However,
Equation~\ref{eqn:unfair} shows this does not hold for any $h^t$ which
does not $\sqrt{2\delta}$-distinguish any arm but for which
$\ppc{t'}{i}{h^{t'}} \neq \frac{1}{k}$ for some
$i \in [k], t' \leq t$.  Thus, for at least $\frac{1}{2}$ of all
probability mass over histories, either
$\ppc{t'}{i}{h^{t'}} = \frac{1}{k}$ for all $i, t' \leq t$, or $h^t$
must $\sqrt{2\delta}$-distinguish some arm.
\end{proof}

\section{Missing Proofs for the Contextual Bandit Setting}\label{sec:app-missing}

We begin by proving two results related to \kwikfair. The first, Lemma \ref{lem:prfailure}, was used in Section \ref{sec:kwiktofair} to prove that \kwikfair is $\delta$-fair in Theorem \ref{thm:kwiktofairfixed}.

\begin{proof}[Proof of Lemma \ref{lem:prfailure}]
  We will refer to a violation of either (a) or (b) as a failure of
  learner $L_i$.  For each $L_i$, the set of queries asked of it are
  pairs $(h_i, x^t_i)$, histories along with new contexts. There are
  at most $T$ contexts queried, and at most $T$ histories on which
  $L_i$ is queried for a fixed run of our algorithm (namely, prefixes
  of $L_i$'s final history). Thus, there are at most $T^2$ queries for
  $L_i$.  Thus, by a union bound over these $T^2$ queries for learner
  $L_i$, by the KWIK guarantee,
  $\Pr{}{L_i \text{ fails in some round}}$
  $\leq T^2 \delta^* = \min(\delta, \frac{1}{T})/k$, and by a union
  bound over $k$ arms,
  $\Pr{}{\text{A learner fails in a round}} \leq \min(\delta,
  \frac{1}{T})$.
\end{proof}

We proceed to Theorem \ref{thm:kwiktofair}, used in Section \ref{sec:kwiktofair} to construct a $\delta$-fair algorithm with quantified regret from KWIK learners.

\begin{proof}[Proof of Theorem \ref{thm:kwiktofair}]
  We use repeated calls to \kwikfair$(\delta,T)$ to run for an
  indefinite number of rounds. Specifically, we will make calls
  $E = 1, 2, \ldots$ to \kwikfair$(6\delta/\pi(\log(T)^2,2^E)$. We will
  refer to each such call to \kwikfair by its epoch $E$. By Lemma
  \ref{lem:prfailure}, each epoch $E$ is
  $6\delta/\pi E^2k$-fair, i.e. has a $6\delta/\pi E^2k$ probability
  of violating fairness. Therefore by a union bound across epochs, the
  probability of ever violating fairness through repeated calls to
  \kwikfair is bounded above by
  $\sum_{E=1}^{\infty} \frac{6\delta}{(\pi E)^2} =
  \frac{6\delta}{\pi^2} \sum_{E=1}^{\infty} \frac{1}{E^2} = \delta$,
  so the overall algorithm is $\delta$-fair.
	
  Next, by Lemma \ref{lem:kwiktofairfixedregret} each epoch $E$
  contributes at most regret $3 \cdot 2^Ek\epsilon^*_E$ where
  $\epsilon^*_E$ denotes the value of $\epsilon^*$ used in epoch $E$,
  i.e. $\epsilon^*_E$ satisfying
  $\epsilon^*_E = k \cdot m(\epsilon^*_E, 6\delta/\pi E^2,2^E)$. Then
  since each epoch $E$ covers $2^E$ rounds, through round $T$ the
  algorithm has used fewer than $\log(T)$ epochs, and by the doubling
  trick achieves regret
  $R(T) < \sum_{E=1}^{\log(T)} 3 \cdot 2^Ek\epsilon^*_E =
  O(Tk\epsilon^*) = O(k^2 \cdot m(\epsilon^*,\delta^*))$.
\end{proof}

Next, we address the special subcase of \kwikfair for linear functions
outlined in Corollary \ref{cor:kwiktofairlinear}.

\begin{proof}[Proof of Corollary~\ref{cor:kwiktofairlinear}]
  By Lemma \ref{lem:kwiktofairlinear}, for each arm $j$ the associated
  learner $L_j$ has mistake bound
  $m(\epsilon,\delta) = \tilde O(d^3/\epsilon^4)$. Since $\epsilon^*$
  satisfies $\epsilon^* = k \cdot m(\epsilon^*, \delta)/T$ we get
  $\epsilon^* = \left( \frac{kd^3}{T} \right)^{1/5}$ Substituting this
  into Theorem \ref{thm:kwiktofair}, the overall regret guarantee
  satisfies regret
  $R(T) = O(k^2 \cdot m(\epsilon^*,\delta^*)) = O(Tk\epsilon^*) =
  O(T^{4/5}k^{6/5}d^{3/5})$.
\end{proof}

This brings us to the formal algorithm description of \textsc{ConjunctionBandit} and its corresponding regret bound, used in Section \ref{subsec:conjunction} as an example of an unfair learning algorithm for conjunctions.

\begin{center}
	\begin{algorithmic}[1]
	\Procedure{\textsc{ConjunctionBandit}}{}
			\State Let $C_j^* \gets \{1, 2, \ldots, d\}$ for all $j\in [k]$ \Comment{Initialize set of candidate variables for $f_j$}
		\For{$t = 1, 2, \ldots$}
		\State $S^t \gets \emptyset$ \Comment{Initialize active set of arms}
			\For{$j = 1, 2, \ldots, k$}
				\If{$\wedge_{m \in C_j^*} x_{j,m}^t = 1$}
					\State $S^t \gets S^t \cup \{j\}$ \Comment{Add arm $j$ to active set}
				\EndIf
			\EndFor
			\If{$S^t = \emptyset$}
				\State Pull arm $j^* \gets (x \in_R [k])$  \Comment{Pull arm at random}
				\If{$r^t_{j^*} = 1$}
					\State $C_{j^*}^* \gets C_{j^*}^* \setminus \{m \mid x_{j,m}^t = 0\}$
				\EndIf
			\Else
				\State Pull arm $j^* \gets (x \in_R S^t)$  \Comment{Pull arm from active set at random}
			\EndIf
		\EndFor
 	\EndProcedure
 	\end{algorithmic}
\end{center}

We can now upper bound the regret achieved by \textsc{ConjunctionBandit}.

\begin{lemma}\label{lem:conjunctregret}
  \textsc{ConjunctionBandit} achieves regret $R(T) = O(k^2d)$.
\end{lemma}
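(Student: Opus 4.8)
The plan is to show that \textsc{ConjunctionBandit} pays nonzero pseudo-regret only in a controlled set of ``bad'' rounds, and then to bound the expected number of such rounds by $O(k^2 d)$ via a potential argument. Throughout, let $T_j \subseteq [d]$ be the set of variables that the true conjunction $f_j$ depends on, so $f_j(x) = \bigwedge_{m \in T_j} x_m$. Since each $f_j$ is $\{0,1\}$-valued and the reward $r^t_j$ lies in $[0,1]$ with mean $f_j(x^t)$, the reward is in fact deterministic: $r^t_j = f_j(x^t)$. The first step is to establish, by induction on $t$, the invariant $T_j \subseteq C_j^*$ for every arm $j$. This holds at initialization since $C_j^* = [d]$, and a variable $m$ is deleted from $C_{j^*}^*$ only when $j^*$ is pulled with $r^t_{j^*} = 1$ (so every variable in $T_{j^*}$ equals $1$ in $x^t$) and only if $x_{j^*,m}^t = 0$; such an $m$ cannot lie in $T_{j^*}$, so the invariant is preserved.

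Next I would use this invariant to classify rounds. Whenever $j \in S^t$, all of $C_j^*$ (hence all of $T_j \subseteq C_j^*$) take value $1$ in $x^t$, so $f_j(x^t) = 1$. Consequently, if $S^t \neq \emptyset$ the algorithm pulls an arm of reward $1$, which is optimal; and if $S^t = \emptyset$ with $\max_j f_j(x^t) = 0$, every arm is optimal. Both cases incur zero per-round regret. Thus the only rounds contributing to regret are the \emph{bad} rounds, those with $S^t = \emptyset$ and $\max_j f_j(x^t) = 1$, each of which contributes pseudo-regret at most $1$. It remains to bound the expected number of bad rounds.

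For the progress argument I would introduce the potential $\Phi_t = \sum_j |C_j^*|$ measured at the start of round $t$, which satisfies $\Phi_1 = kd$, is nonincreasing, and is always nonnegative; hence the total number of variable deletions over the whole run is at most $kd$. I then claim that every bad round makes progress with probability at least $1/k$: because $\max_j f_j(x^t) = 1$, there is an arm $j^\dagger$ with $f_{j^\dagger}(x^t) = 1$, and since $S^t = \emptyset$ this arm is inactive, forcing some candidate variable $m^* \in C_{j^\dagger}^* \setminus T_{j^\dagger}$ with $x_{j^\dagger,m^*}^t = 0$; with probability $1/k$ the uniform pull selects $j^\dagger$, observes reward $1$, and deletes $m^*$, so $\Phi$ strictly decreases.

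The crux, and the step I expect to be the main obstacle, is converting ``each bad round decreases $\Phi$ with probability $\geq 1/k$'' plus ``$\Phi$ decreases at most $kd$ in total'' into a bound on the expected \emph{number} of bad rounds. I would do this by letting $Z_t \in \{0,1\}$ indicate a bad round (determined by the history $\mathcal{F}_{t-1}$ together with the adversarially revealed context $x^t$, both known before the random pull) and $Y_t \in \{0,1\}$ indicate an actual decrease of $\Phi$ in round $t$, so that $\mathbb{E}[Y_t \mid \mathcal{F}_{t-1}, x^t] \geq \tfrac{1}{k} Z_t$. Taking expectations and summing gives $\tfrac{1}{k}\,\mathbb{E}[\sum_t Z_t] \leq \mathbb{E}[\sum_t Y_t] \leq kd$, where the last inequality holds because $\sum_t Y_t$ counts deletion-rounds and is deterministically at most $kd$. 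Hence the expected number of bad rounds is at most $k^2 d$, and since each contributes regret at most $1$, we obtain $R(T) = O(k^2 d)$.
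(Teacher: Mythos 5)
Your proposal is correct and takes essentially the same approach as the paper: the same invariant (the true variable set of each conjunction stays inside $C_j^*$, so active arms always have value $1$ and the only regret-bearing rounds are those with $S^t=\emptyset$ yet $\max_j f_j(x_j^t)=1$), and the same counting that each such round triggers a candidate-variable deletion with probability at least $1/k$, out of at most $kd$ possible deletions, yielding $O(k^2d)$ expected bad rounds. The only difference is bookkeeping---the paper tallies at most $d$ deletions per arm and sums over arms, whereas you track the global potential $\Phi_t=\sum_j |C_j^*|$ with an explicit conditional-expectation inequality, which if anything formalizes more carefully the expectation step the paper treats somewhat informally.
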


\begin{proof}[Proof of Lemma~\ref{lem:conjunctregret}]
  First, we claim that for every $j$, for the duration of the
  algorithm, that $C_j \subseteq C_{j}^* $, where $C_j$ is the true
  set of variables corresponding to $f_j$. This holds at
  initialization: $C_j \subseteq [d] = C_{j}^*$.  Suppose the claim
  holds prior to round $t$: if $C^*_j$ is updated in this round, then
  $f_j(x^t_j) = 1 \Rightarrow \forall m\in [d] : x^t_{j,m} = 0,
  m\notin C_j$.
  Thus,
  $C^*_j = C^*_j \setminus \{m : x^t_{j,m} = 0\} = C^*_j \setminus \{m
  : x^t_{j,m} = 0 \cap m \notin C_j\} \supset C_j $.

  Therefore, the algorithm never makes false positive mistakes: in any
  round $t$, $j \in S^t \Rightarrow f_j(x_j^t) = 1$. Therefore
  \textsc{ConjunctionBandit} only accumulates regret in rounds where it makes
  false negative mistakes by predicting that all arms have reward 0
  when some arm has reward 1.

  Then, we have
  $\regret{x^1, \ldots, x^T} = \sum_{t}\max_j\left(f_j(x_j^t)\right) -
  \Ex{}{\sum_t f_{i^t}(x^t_{i^t})}$.
  We then rewrite the first term as
  $\sum_{t}\max_j\left(f_j(x_j^t)\right) = \sum_{t}
  \mathbb{I}\{f_j(x_j^t) = 1 \text{ for some } j \in [k]\}$
  and the second term as

\begin{align*}
	\Ex{}{\sum_t f_{i^t}(x^t_{i^t})} 
	=&\; T - \Ex{}{\sum_t \mathbb{I} \{S^t = \emptyset \wedge f_{j^*}(x_{j^*}^t) = 0 \wedge f_j(x_j^t) = 1 \text{ for some } j \in [k]\}} \\
	\geq&\; T - \Ex{}{\sum_t \mathbb{I} \{f_{j^*}(x_{j^*}^t) = 0 \mid S^t = \emptyset \wedge f_j(x_j^t) = 1 \text{ for some } j \in [k]\}} \\
	\geq&\; T - \sum_{j \in [k]} \Ex{}{\sum_t \mathbb{I} \{f_{j^*}(x_{j^*}^t) = 0 \mid S^t = \emptyset \wedge f_j(x_j^t) = 1\}} \\
	\geq&\; T - \sum_{j \in [k]} kd = T - k^2d
\end{align*}

where the last inequality follows from
$\pr{j^* = j \mid S^t = \emptyset \wedge f_j(x_j^t) = 1} =
\frac{1}{k}$
and the fact that if $S^t = \emptyset$ and $f_{j^*}(x_{j^*}^t) = 1$
then $C_{j^*}$ loses at least one of $d$ variables, and this loss can
therefore occur at most $d$ times for each arm $j$. Substituting this
into the original regret expression then yields

\begin{align*}
	\regret{x^1, \ldots, x^T} \leq&\; \sum_{t} \mathbb{I}\{f_j(x_j^t) = 1 \text{ for some } j \in [k]\} - (T - k^2d) \\
	\leq&\; T - (T - k^2d) \leq k^2d.
\end{align*}
\end{proof}

Finally, we prove Lemma~\ref{lem:fairtokwikconjunction}, which we used
in Section~\ref{subsec:conjunction} to translate between fair and KWIK
learning on conjunctions.

\begin{proof}[Proof of Lemma~\ref{lem:fairtokwikconjunction}]
  We mimic the structure of the proof of Theorem~\ref{thm:fairtokwik},
  once again using \fairkwik to construct a KWIK learner
  $\mathcal{B}$ by running the given fair algorithm $\mathcal{A}$ on a
  constructed bandit instance for each context $x^t$.
	
  There are two primary modifications for the specific case of
  conjunctions: as conjunctions output either 0 or 1 we set
  $\epsilon = 0$, $\epsilon^* = 1$, and
  $\delta^* = \frac{\delta}{2T}$. $\mathcal{A}$ therefore runs on $2T$
  histories and contexts, either of form $(x^t, x(0) = 0)$ or
  $(x^t, x(1) = 1)$. Since we initialize $\mathcal{A}$ to be
  $\delta^*$-fair, if we fix history $h^t$ along with context and arm
  assignment $(x^t, x(\l))$ then, with probability at least
  $1 - \delta^*$, $p_1^{t,\l} > p_2^{t,\l}$ implies $f^*(x^t) > \l/2$
  and similarly $p_2^{t,\l} > p_1^{t,\l}$ implies $f^*(x^t) <
  \l/2$.
  Union bounding over all such $t$ and $\l$ yields that $\mathcal{A}$
  satisfies this fairness over all $t$ and $\l$ with probability at
  least $1 - \delta$, and we condition on this event for the rest of
  the proof.
	
  We proceed to prove that the resulting KWIK learner $\mathcal{B}$ is
  $\epsilon$-accurate. Here, as $\epsilon = 0$, this requires showing
  that all of $\mathcal{B}$'s numerical predictions are
  correct. Assume instead that $\mathcal{B}$ outputs an incorrect
  prediction on $(x^t, x(\l))$. By the construction of \fairkwik, a
  prediction from $\mathcal{B}$ implies that at least one of
  $p_1^{t,0}, p_1^{t,1}, p_2^{t,0}$ and $p_2^{t,1}$ is distinct from
  the others. We condition on this distinctness to get two cases.

  In the first case, $p_1^{t,\l} \leq p_2^{t,\l}$ for both $\l = 0$
  and 1. By distinctness, this means that either
  $p_1^{t,0} < p_2^{t,0}$ or $p_1^{t,1} < p_2^{t,1}$. By the fairness
  assumption, this respectively implies that $f^*(x^t) < f(x(0)) = 0$
  or $f^*(x^t) < f(x(1)) = 1$. In either event,
  $f^*x(^t) = 0 = \hat y^t$. In the second case,
  $p_1^{t,\l} > p_2^{t,\l}$ for at least one of $\l = 0$ or
  1. $p_1^{t,1} > p_2^{t,1}$ violates the fairness assumption on
  $\mathcal{A}$ as $f(x(1)) = 1$, so it must be that
  $p_1^{t,0} > p_2^{t,0}$. Fairness then implies that
  $f^*(x^t) = 1 = \hat y^t$. Therefore $\mathcal{B}$ is
  $\epsilon$-accurate.

  It remains to upper bound $m(\epsilon,\delta)$. Any round where
  $\mathcal{B}$ outputs $\perp$ means a choice between two contexts,
  one of which has a difference of 1 between arms. It follows that
  choosing randomly between both arms and contexts incurs expected
  regret $1/4$. Therefore
  $\frac{m(\epsilon,\delta)}{4} < R(m(\epsilon,\delta), \delta^*, d) =
  R(m(\epsilon,\delta), \frac{\delta}{2T}, d)$.
\end{proof}

\end{document}